\documentclass{article}

\PassOptionsToPackage{numbers, compress}{natbib}

\usepackage[preprint]{neurips_2026}
\usepackage{subcaption}

\DeclareCaptionLabelFormat{subfigfull}{\figurename~\thefigure#2}
\usepackage[T1]{fontenc}    
\usepackage{hyperref}       
\usepackage{url}            
\usepackage{makecell}
\usepackage{booktabs}       
\usepackage{threeparttable
} 
\usepackage{nccmath}
\usepackage{enumitem}
\usepackage{caption} 
\usepackage{amsfonts}       
\usepackage{amsmath}        
\usepackage{amsthm}
\usepackage{amssymb}
\usepackage{nicefrac}       
\usepackage{microtype}      
\usepackage{xcolor}         
\usepackage{colortbl}       
\usepackage{graphicx}       
\usepackage{algorithm}
\usepackage{algorithmic}
\usepackage{multirow}
\usepackage{subcaption}
\usepackage{wrapfig}

\graphicspath{{figures/}}
\newcommand{\softmax}{\mathrm{softmax}}
\newcommand{\diag}{\mathrm{diag}}
\newcommand{\tr}{\mathrm{tr}}
\newcommand{\argmax}{\mathrm{argmax}}
\newcommand{\clip}{\mathrm{clip}}
\newtheorem{proposition}{Proposition}
\newtheorem{theorem}{Theorem}
\newtheorem{lemma}{Lemma}

\title{\texorpdfstring{\includegraphics[height=4em]{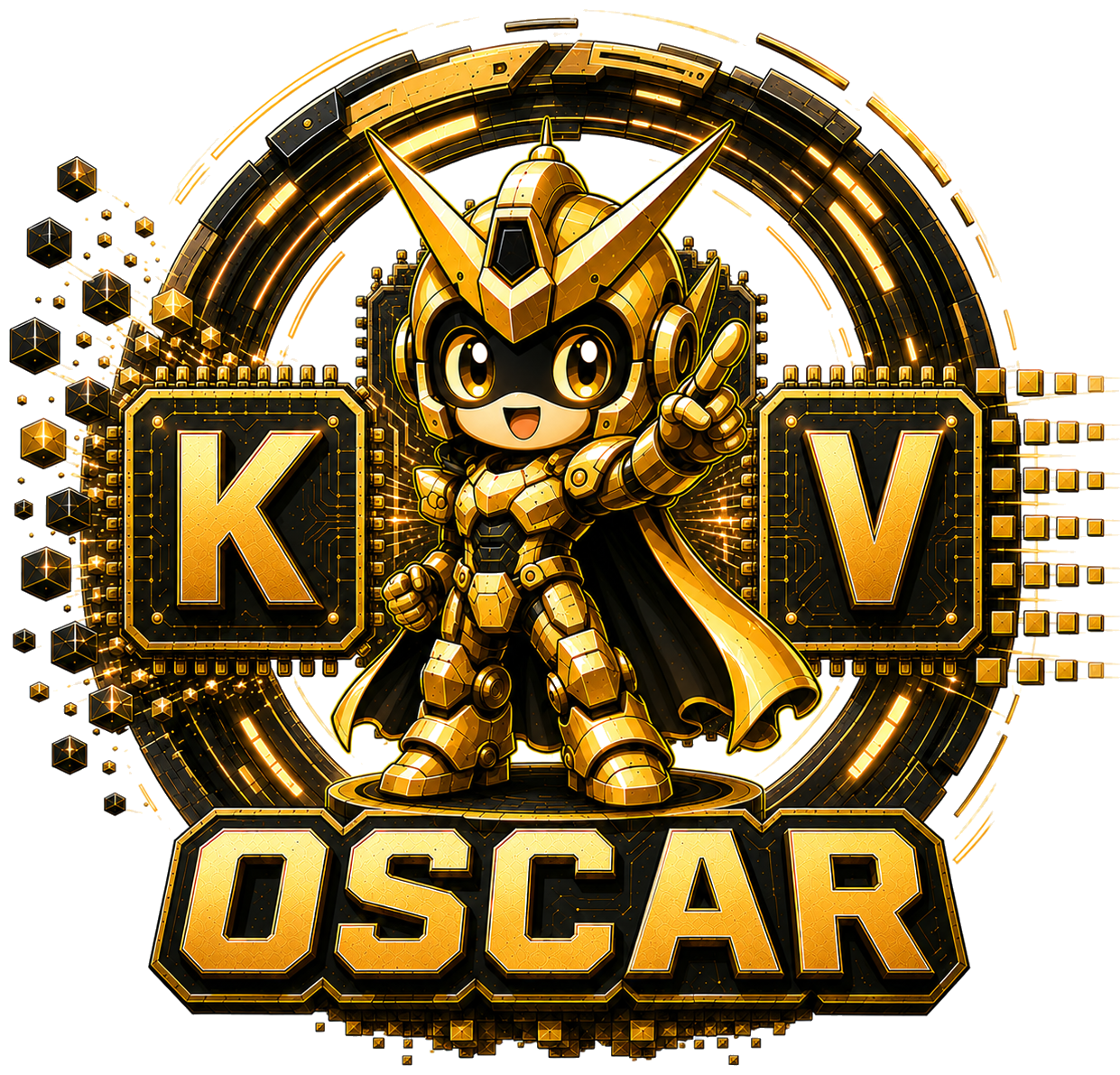}\\[0.15em]}{}OSCAR: Offline Spectral Covariance-Aware Rotation for 2-bit KV Cache Quantization}

\author{%
  Zhongzhu Zhou$^{*,1,2}$ \quad
  Donglin Zhuang$^{*,1,2}$ \quad
  Jisen Li$^{1,3}$ \quad
  Ziyan Chen$^{2}$ \\
  Shuaiwen Leon Song$^{1}$ \quad
  Ben Athiwaratkun$^{1}$ \quad
  Xiaoxia Wu$^{\dagger,1}$ \\[6pt]
  $^{1}$Together AI \quad
  $^{2}$University of Sydney \quad
  $^{3}$University of Illinois Urbana-Champaign \\[4pt]
  $^{*}$Equal contribution. \quad $^{\dagger}$Corresponding author.%
}

\begin{document}

\maketitle

\begin{abstract}


INT2 KV-cache quantization is attractive for long-context LLM serving, but it remains difficult to make both accurate and deployable. Simple rotations such as Hadamard transforms reduce outliers, but still degrade at INT2 because they are not aligned with downstream attention. We propose OSCAR, an Ultra-low-bit KV Cache quantization method that estimates \textit{attention-aware covariance} structures offline and uses them to derive fixed rotations and clipping thresholds for quantization. In this way, it aligns KV quantization with the covariance structures that attention actually consumes. More importantly, we not only provide theoretical justification but also develop a fully deployable OSCAR system with a custom INT2 attention kernel that remains compatible with paged KV-cache serving and fused kernel pipelines, enabling seamless integration into modern LLM serving frameworks such as SGLang and vLLM.

We evaluate our methods on recent reasoning models with reasoning traces of up to 32k tokens across 5 tasks. On Qwen3-4B-Thinking-2507 and Qwen3-8B, OSCAR reduces the BF16 accuracy gap to 3.78 and 1.42 points, respectively, while naive rotation INT2 collapses to nearly zero. We further scale OSCAR to Qwen3-32B and GLM-4.7 (358B params), where it remains effectively on par with BF16. On long context - RULER-NIAH up to 128K, OSCAR remains robust on both Qwen3 models, while naive rotation INT2 collapses. System-wise, OSCAR reduces KV-cache memory by approximately $8\times$, improves throughput by up to $7\times$ at large batch sizes under the same memory budget, and accelerates batch-size-1 decoding by up to $3\times$ over BF16 due to reduced memory bandwidth overhead.

\end{abstract}
\begin{center}
{\normalsize
\href{https://github.com/FutureMLS-Lab/OSCAR}{\raisebox{-0.18em}{\includegraphics[height=1.1em]{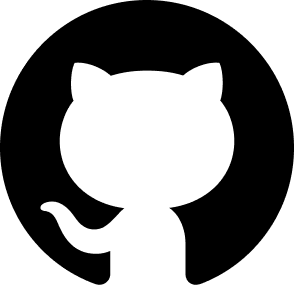}}\hspace{0.25em}\textbf{Code}}
\quad | \quad \href{https://oscar-quantize.github.io/}{\raisebox{-0.18em}{\includegraphics[height=1.1em]{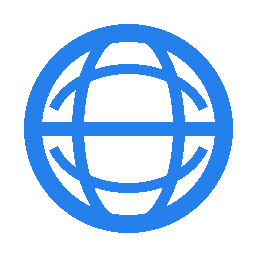}}\hspace{0.25em}\textbf{Website}} \quad | \quad
\href{https://huggingface.co/Zhongzhu/OSCAR-RotationZoo}{\raisebox{-0.18em}{\includegraphics[height=1.1em]{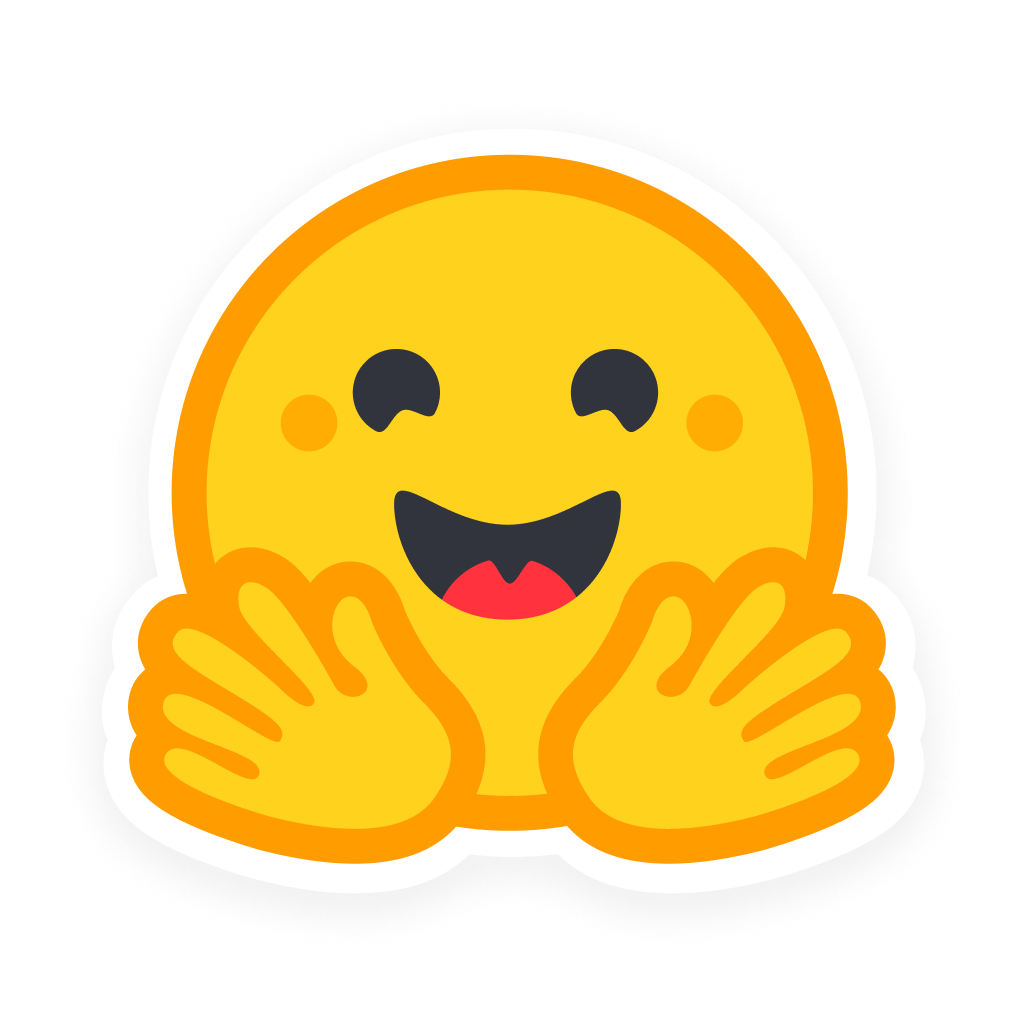}}\hspace{0.25em}\textbf{RotationZoo}}}
\end{center}
\vspace{-0.8em}

\section{Introduction}

Long-context inference has made the key--value (KV) cache one of the main costs of serving large language models.
During autoregressive decoding, the cache grows with context length, batch size, and model depth, and every new token must read a large fraction of it from GPU memory.
Compressing the KV cache is therefore a direct way to increase batch size and reduce memory traffic~\citep{zhang2023h2o,sharma2024minikv,ge2023model,hooper2024kvquant,yue2024wkvquant,xia2025kitty,su2025rotatekv,li2025a}.
Among the available design points, INT2 quantization is especially attractive: it promises a large memory reduction while retaining a hardware-friendly fixed-width representation.

Aggressively compressing KV caches to ultra-low precision (e.g., INT2) remains challenging because KV activations contain severe channel-wise outliers: a small subset of channels often exhibit extremely large magnitudes, while most channels remain relatively well-behaved~\cite{hooper2024kvquant}. Under low-bit quantization, these outliers dominate the quantization scale, compressing most normal values into only a few effective quantization levels and substantially degrading attention quality. Rotation-based quantization addresses this issue by applying a fixed orthogonal transform, such as a Hadamard rotation, that redistributes a few extreme activation values across many channels, producing a more uniform activation distribution that is easier to quantize~\citep{ashkboos2024quarot,tseng2024quip,liu2024spinquant,jia2026saw}. In addition, rotation preserves tensor dimensionality and applies a fixed linear transform without introducing per-channel routing or irregular sparse metadata.  This makes it naturally compatible with paged KV-cache layouts~\citep{vllm}, and FlashAttention-style fused decode kernels~\citep{dao2022flashattention,dao2023flashattention2,shah2024flashattention,zadouri2026flashattention}: each KV vector is simply moved into a better-conditioned basis before quantization and moved back when used by page attention~\citep{jia2026saw}.


\begin{figure}[t]
    \centering
    \includegraphics[width=\textwidth]{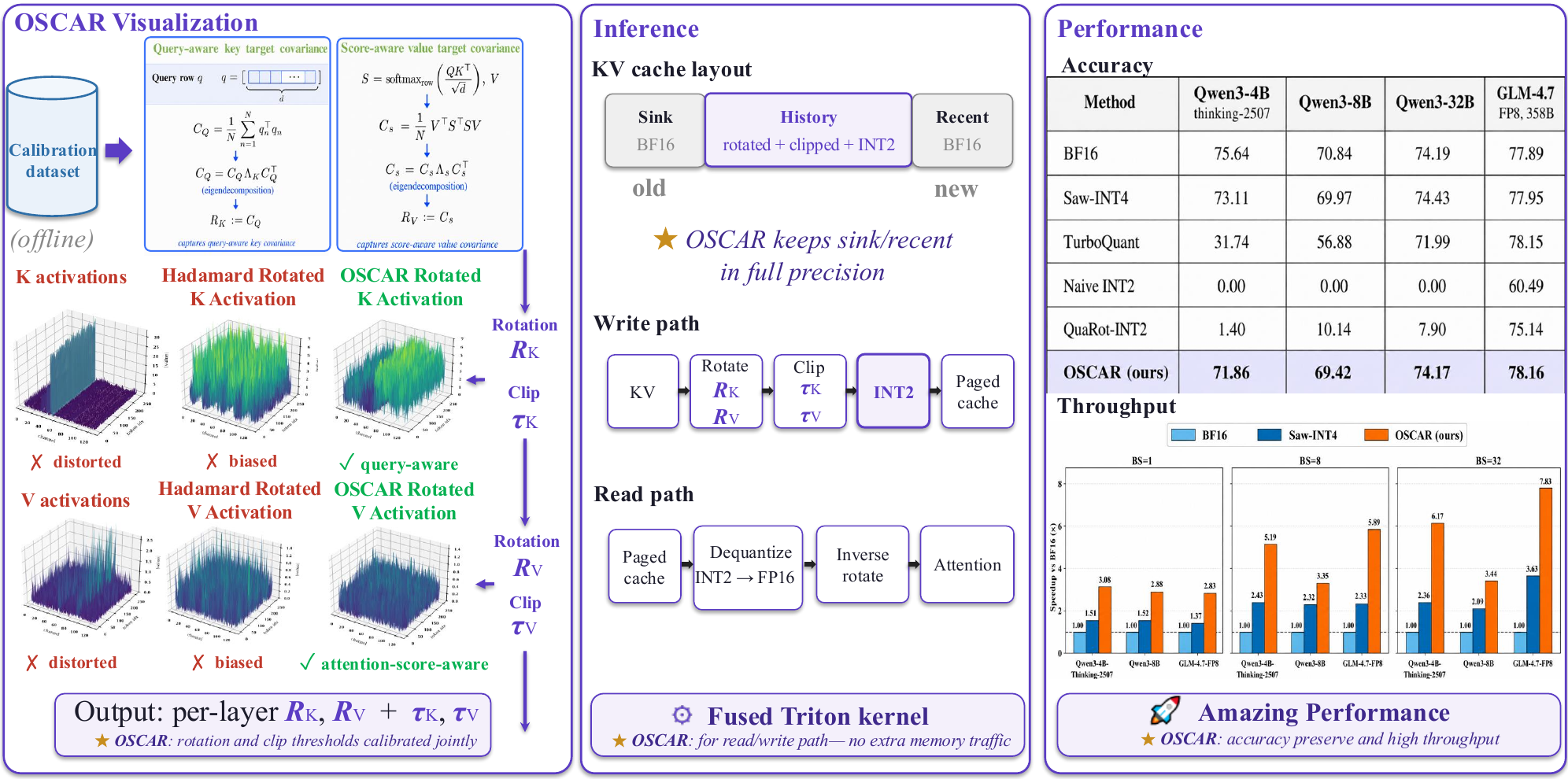}
    \caption{\textbf{OSCAR pipeline overview.}
    Offline, OSCAR estimates attention-aware key/value covariance rotation and shows how the resulting rotation makes KV activations more uniform: Hadamard mixing flattens raw peaks, while the OSCAR rotation separates directions that matter more or less to attention.
    Online, the serving path keeps sink and recent tokens in BF16 while applying the fixed rotate--clip--INT2 path to history KV tokens inside the paged SGLang cache.
    The right panels summarize that OSCAR preserves accuracy across models while achieving throughput close to the QuaRot(INT2) serving path and higher throughput than BF16.}
    \label{fig:OSCAR_pipeline}
    \vspace{-1.5em}
\end{figure}
However, a random rotation is still data-oblivious. It can smooth activation ranges, but it does not know which directions are important to attention.
At INT2, this distinction matters: only four quantization levels are available, so the error should be pushed into directions that the model reads less strongly. Attention operates on the correlations and score-weighted interactions induced by keys and values, rather than on their raw Euclidean representations. This suggests the optimal rotation target should be derived from attention statistics themselves.
Based on this observation, we propose OSCAR, an INT2 KV quantizer that estimates attention-aware covariance structures through a lightweight calibration pass and uses them to derive separate rotations for keys and values, along with per-layer clipping thresholds. Figure~\ref{fig:OSCAR_pipeline} (left) illustrates that while data-oblivious rotations can partially smooth outliers, they remain insufficient for INT2 quantization. In contrast, covariance-aware rotations produce substantially smoother activation distributions, enabling effective  quantization.



Our contributions are summarized as follows 
\begin{itemize}[leftmargin=*, itemsep=2pt, topsep=2pt, parsep=0pt]
    \item \textbf{We identify the missing target in INT2 rotation-based KV.}
    Generic rotations mainly scatter activation outliers, but INT2 accuracy depends on the errors in attention scores and layer outputs; the rotation should be induced by downstream attention not by raw cache reconstruction alone.
    \item \textbf{We propose OSCAR, an attention-aware calibration framework for ultra-low-bit KV-cache quantization.}
    OSCAR uses a lightweight calibration set to obtain attention-aware rotations for keys and values, enabling the quantized cache to better preserve downstream attention computation. A theoretical analysis is provided to show that the resulting covariance-target rotations are optimal under a natural frozen-error surrogate. Empirically, we evaluated across a wide range of state-of-the-art models from 4B to 400B, and retains near-BF16 accuracy at only 2.28 bits per KV element across multiple LLM families, including on a challenging code benchmark (LiveCodeBench).
    \item \textbf{We develop a production-ready INT2 KV-cache serving system.}
    OSCAR preserves compatibility with paged and prefix KV-cache serving by keeping the bulk KV cache in a dense rotated INT2 representation. 
    The system integrates into production SGLang decoding pipelines with customized Triton decoding kernels, such that one can fully utilize the prefix cache techniques \cite{sglang}. Our system delivers up to $6.2\times$ higher throughput at 100k length and achieves roughly $2\times$ gains in both per-user speed and per-GPU throughput under full-cache workloads. This shows it is both user-friendly (lower latency) and system-efficient (higher GPU utilization).
    

\end{itemize}

Due to space constraints, we provide a full discussion in Appendix~\ref{sec:related_work}, ~\ref{sec:discussion} on how our ideas are inspired by and connected to prior work in the research community.
\section{Preliminaries and Motivation}
\label{sec:motivation_prelim}

\textbf{Attention and KV cache.} We use \emph{row-vector} notation throughout.  For simplicity, we define a single-head attention \citep{vaswani2017attention} as follows. Given a sequence $T$ of hidden states $\{x_t\}_{t=1}^T$ with $x_t\in\mathbb{R}^{1\times d}$ and the weights $W_Q,W_K,W_V,\in\mathbb{R}^{d\times d}$, we formulate the query, key, and value as
$Q=[q_1;\dots;q_T]\in\mathbb{R}^{T\times d}, K=[k_1;\dots;k_T]\in\mathbb{R}^{T\times d}, V=[v_1;\dots;v_T]\in\mathbb{R}^{T\times d}, $
where $d$ is the head dimension; $q_t=x_tW_Q\in\mathbb{R}^{1\times d}$, $k_t=x_tW_K\in\mathbb{R}^{1\times d}$, and $v_t=x_tW_V\in\mathbb{R}^{1\times d}$.The attention scores is defined as $S=\softmax_{\mathrm{row}}(QK^\top/\sqrt d)\in\mathbb{R}^{T\times T}$, and the attention output is $O=SV\in\mathbb{R}^{T\times d}$, equivalently 
$o_i=\sum_{t=1}^T S_{i,t}v_t$.  During autoregressive inference, $K_{1:t}=[k_1;\dots;k_t]$ and $V_{1:t}=[v_1;\dots;v_t]$ are stored in cache -- this is the so-called KV cache. Covariance and more details are in Appendices~\ref{app:theory},~\ref{app:algorithm_flow}. 

%


\textbf{Quantization notation.}
A $b$-bit quantizer consists of a quantization map $\mathcal Q^+:\mathbb R\to\mathcal C_b$ and a dequantization map $\mathcal Q^-:\mathcal C_b\to\mathbb R$, where $\mathcal C_b$ is a discrete set of $2^b$ representable codes.  Their composition gives the quantize--dequantize map $\mathcal Q(x)=\mathcal Q^-(\mathcal Q^+(x))$.  A symmetric uniform map can be written as
$
\mathcal Q(x)
=
s\cdot
\clip\!\left(
\left\lfloor {x}/{s}\right\rceil,
-\tau,\tau
\right),
$
where $s>0$ is the quantization scale and $\tau$ is the clipping limit. 
For a matrix $X$ consisting of row vectors, we apply quantization element-wise to each row.\footnote{Row-wise is used for our theory. Empirically, quantization is applied to the head dimension with block-size 128, 64 or 32.}. Given rotation matrix $R$~\cite{ashkboos2024quarot}, we denote the reconstruction of quantized $K$ and $V$ as $\widehat K =\mathcal Q(KR)R^\top$ and $\widehat V=\mathcal Q(VR)R^\top$. 

\textbf{Why raw-cache reconstruction is not enough.}
Rotation is effective for KV-cache quantization because it spreads large channel values into a basis with a more uniform dynamic range~\citep{ashkboos2024quarot}. However, simple data-oblivious rotations such as Hadamard or random orthogonal transforms are often insufficient: they smooth the cached tensors, but do not identify which directions are most important for downstream attention. A standard tensor-reconstruction view minimizes $\|K-\mathcal Q(K)\|_F^2$ and $\|V-\mathcal Q(V)\|_F^2$, yet attention does not consume $K$ and $V$ through their Euclidean reconstruction errors.  Keys are used through logits, and values are used through the attention-weighted aggregation.
For keys $K$, the downstream logit distortion is
\useshortskip
\begin{equation}
\label{eq:K}
\|QK^\top-Q\widehat K^\top\|_F^2
=\tr\!\left((K-\widehat K)Q^\top Q(K-\widehat K)^\top\right),
\end{equation}
which is controlled by the query covariance $Q^\top Q$ rather than by $K^\top K$ alone.  For values $V$ with quantized $\widehat V =\mathcal Q(V)$, the downstream output distortion is
\useshortskip
\begin{equation}
\label{eq:V}
    \|SV-S\widehat V\|_F^2
=
\tr\!\left((V-\widehat V)^\top S^\top S(V-\widehat V)\right),
\end{equation}
which depends on how attention scores weight the value rows. 
Thus, if the goal is to reduce empirical attention distortion rather than raw-cache reconstruction error, the rotation should be estimated from target covariance induced by the attention computation itself.

\begin{figure}[t]
    \centering
    \includegraphics[width=\textwidth]{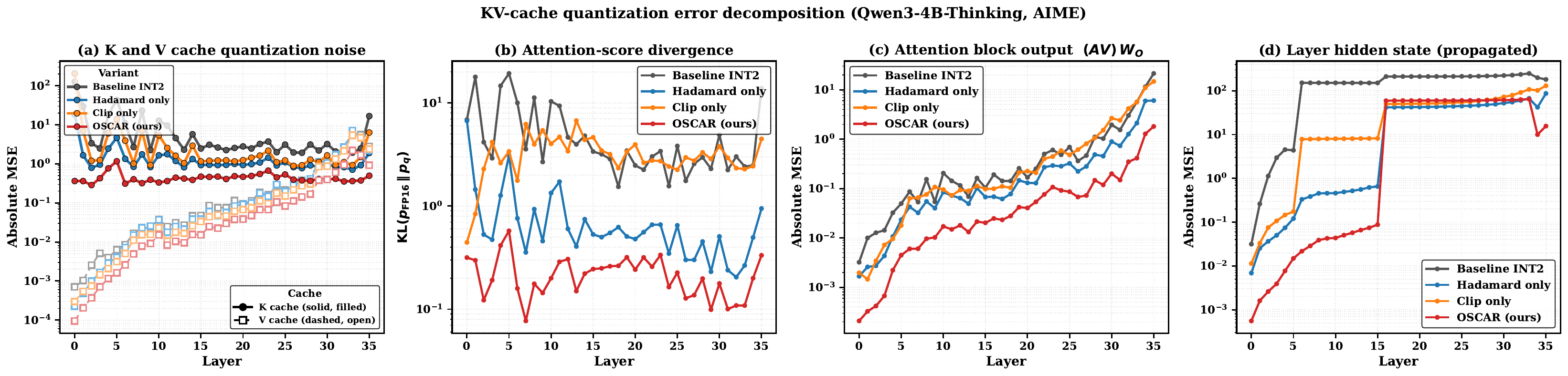}
    \caption{Quantization error. OSCAR limits it at every stage (Qwen3-4B-Thinking-2507, AIME).
    \textbf{(a)} Relative MSE in quantized $K$ and $V$ caches (keys solid, values dashed).
    \textbf{(b)} $\mathrm{KL}(p_{\mathrm{FP16}}\,\|\,p_q)$ between FP16 and quantized attention score distributions.
    \textbf{(c)} Relative MSE at the attention-block output (post-$W_O$).
    \textbf{(d)} Relative MSE in the propagated hidden states across layers.
    Curves compare naive INT2, Hadamard-only preprocessing, clip-only outlier control, and full OSCAR.}
    \label{fig:motivation_target_mse}
    \vspace{-1em}
\end{figure}

Figure~\ref{fig:motivation_target_mse} illustrates this gap. 
Naive INT2, Hadamard-only rotation, and clipping alone still leave substantial attention-score divergence and output error. 
In contrast, OSCAR uses attention-aware calibration target covariance before quantization, reducing the error in four subfigures. We provide a detailed intuition for why each factor is needed and why they appear in this order in App.~\ref{app:intuition_uhp}. 


\section{Algorithm Design: Offline Calibration and Justification}
\label{sec:method}
\label{subsec:offline_calibration}
The method has two phases: an offline phase that estimates covariance , constructs layer-wise rotation matrices, and fits per-token clipping thresholds; and an online phase that applies the resulting fixed transforms based on a mixed-precision cache layout during serving.
All covariance and rotations are estimated offline from a small calibration dataset. 
For each layer and attention head, we construct attention-aware target covariance from the calibration activations to determine the base rotations.



\textbf{Query-aware key target covariance.}
For a query row $q$, with the attention-aware key target covariance  $Q^\top Q$ introduced in equation \eqref{eq:K}, its empirical estimator is
$
C_Q
=
\frac{1}{N}\sum_{n=1}^N q_n^\top q_n;
$ we apply eigen decomposition as in App.~\ref{app:covariance_pca}, $C_Q=U_Q\Lambda_QU_Q^\top$, and define the rotation as $R_k:=U_Q$.

\textbf{Score-aware value target covariance.}
For the  attention matrix $S=\softmax_{\mathrm{row}}(QK^\top/\sqrt d)$, we heuristically define the target covariance in equation \eqref{eq:V}:
$
C_S
=
\frac{1}{N}V^\top S^\top S V.
$
We then compute the eigendecomposition as in App.~\ref{app:covariance_pca}, $C_S=U_S\Lambda_SU_S^\top$, and define the raw rotation as $R_v:=U_S$.

\textbf{OSCAR rotations.}
Following rotation-based low-bit quantization methods \citep{ashkboos2024quarot,chee2023quip}, we compose the base rotation with a Hadamard transform $H_{\mathrm{Had}}$ (App.~\ref{app:hadamard_transform}) and a bit-reversal permutation $P_{\mathrm{br}}$ to form the final rotations for keys and values:
\vspace{-0.025cm} 
\useshortskip
\begin{equation}
R_K = U_Q\, H_{\mathrm{Had}}\, P_{\mathrm{br}},
\qquad
R_V = U_S\, H_{\mathrm{Had}}\, P_{\mathrm{br}}.
\label{eq:method_composed_rot}
\end{equation}
Here $R_k$ and $R_v$ are the covariance base rotations, while $H_{\mathrm{Had}}$ further improves quantization geometry by redistributing channel energy, and $P_{\mathrm{br}}$ \citep{cooley1965fft} interleaves large- and small-variance channels so that adjacent channels have similar dynamic range.

\textbf{Scale determination and per-token clipping.}
Our quantization backend follows standard post-training quantization practice. We use affine asymmetric INT2 quantization with scale and zero point for both keys and values, together with percentile-based clipping to control outliers. These mechanisms are common in low-bit activation and KV-cache quantization pipelines~\citep{liu2024kivi,hooper2024kvquant,xiao2023smoothquant,frantar2022gptq}, with technical details shown in App.~\ref{app:scale-determination}.

\textbf{Optimality of targets under ambient error assumption.}
\label{app:target_derivation}
Below we propose that under the diagonal residual assumption, our heuristic targets $C_Q
=
\frac{1}{N}\sum_{n=1}^N q_n^\top q_n$ and $
C_S
=
\frac{1}{N}V^\top S^\top S V
$ achieve the lowest frozen-error surrogate. The proof of the theorem and the justification of surrogate objectives mentioned in the theorem are shown in App.~\ref{app:ambient_basis_simple_variants} and App.~\ref{app:surrogate_objective_justification} respectively.

\begin{theorem}[Optimality of the simple spectral variants under ambient-basis diagonal residuals]
\label{thm:ambient_basis_simple_variants}
Consider the frozen-error surrogate objectives for the key and value base rotations:
$
\tilde{\mathcal L}_K(R_k) = \tr(R_k^\top C_Q R_k E_K), 
R_k^\top R_k = I_d$ and $\tilde{\mathcal L}_V(R_v)
=
\tr\!\left(R_v^\top C_S R_v E_V\right),
R_v^\top R_v = I_{d}.$
Assume that the frozen (independent of input and rotation) residual covariances are diagonal in the ambient basis:

$
E_K =
\sum_{j=1}^N
\bigl(\mathcal Q(k_jR_k)-k_jR_k\bigr)^\top
\bigl(\mathcal Q(k_jR_k)-k_jR_k\bigr)
= \operatorname{diag}(\mu_1,\dots,\mu_d),
\quad
\mu_1 \le \cdots \le \mu_d,
$
$
E_V =
\sum_{j=1}^N
\bigl(\mathcal Q(v_jR_v)-v_jR_v\bigr)^\top
\bigl(\mathcal Q(v_jR_v)-v_jR_v\bigr)
= \operatorname{diag}(\eta_1,\dots,\eta_{d}),
\quad
\eta_1 \le \cdots \le \eta_{d}.
$

Then, on the calibration dataset: $R_k = U_Q$ and $R_v = U_S$ are minimizers of $\tilde{\mathcal L}_K(R_k)$ and $\tilde{\mathcal L}_V(R_v)$.
\end{theorem}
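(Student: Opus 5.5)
Since $E_K$ and $E_V$ are frozen (independent of the rotation), each surrogate is a linear functional of the orthogonally conjugated PSD matrix $R^\top C R$ paired against a fixed diagonal matrix. I would therefore treat it as a trace minimization over the orthogonal group and apply von Neumann's trace inequality (its rearrangement form). The key and value cases are identical once we write $C\in\{C_Q,C_S\}$ and $E\in\{E_K,E_V\}$, so I argue once for a generic symmetric PSD $C$ with eigendecomposition $C=U\Lambda U^\top$ and diagonal $E=\operatorname{diag}(\mu_1,\dots,\mu_d)$. Both $C_Q=\frac1N\sum_n q_n^\top q_n$ and $C_S=\frac1N V^\top S^\top S V$ are Gram matrices, hence symmetric PSD, so the eigendecompositions used to define $U_Q,U_S$ exist.

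\textbf{Step 1 (reduction to a doubly stochastic form).} Set $W=U^\top R$, which is orthogonal. Then
\[
\tr(R^\top C R E)=\sum_{i,j} W_{ij}^2\,\lambda_i\,\mu_j .
\]
The matrix $P=(W_{ij}^2)$ is doubly stochastic (orthostochastic), so the objective equals $\lambda^\top P\mu$.

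\textbf{Step 2 (lower bound).} By Birkhoff's theorem, the minimum of the linear function $P\mapsto\lambda^\top P\mu$ over doubly stochastic matrices is attained at a permutation matrix. By the rearrangement inequality, it equals $\sum_i \lambda_{(i)}\mu_{\sigma(i)}$ with the eigenvalues paired in opposite order: the largest $\lambda$ goes with the smallest $\mu$. This is exactly the lower half of von Neumann's inequality, $\tr(AB)\ge\sum_i\lambda_i^{\downarrow}(A)\lambda_i^{\uparrow}(B)$ for symmetric $A,B$.

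\textbf{Step 3 (attainment).} Take $R=U$ with the columns of $U$ ordered so that the eigenvalues are nonincreasing, i.e.\ $\lambda_1\ge\cdots\ge\lambda_d$, the usual PCA convention, which I would state explicitly as the convention of App.~\ref{app:covariance_pca}. Then $W=I$, and the objective is $\sum_i\lambda_i\mu_i$. Since the $\mu_i$ are nondecreasing by hypothesis, this is the anti-sorted pairing, so the bound of Step 2 is attained and $R_k=U_Q$, $R_v=U_S$ are minimizers. Substituting $(C_Q,E_K)$ and $(C_S,E_V)$ finishes both claims.

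\textbf{Main obstacle.} The main obstacle is interpretive rather than technical. $E_K$ is defined through $R_k$ itself, so the statement only makes sense under the explicit frozen assumption that $E$ is held fixed while $R$ varies. I would state this at the outset and note that the theorem is a statement about this surrogate only. A second subtlety is the ordering convention: minimality requires the eigenvalue ordering of $U$ to be opposite to the ordering of the $\mu_i$. If App.~\ref{app:covariance_pca} instead sorted eigenvalues ascending, the claim would need a column reversal, so I would fix the descending convention in the proof.
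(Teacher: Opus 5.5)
Your proposal is correct and follows the paper's proof essentially step for step. The paper writes $R_k = U_Q Z$ (your $W = U^\top R$) and expands the objective as $\sum_{i,j}\mu_i\lambda_j z_{ji}^2$ with a doubly stochastic matrix of squared entries. It then minimizes this linear function over the Birkhoff polytope at a permutation, and applies the rearrangement inequality with descending $\lambda$ and ascending $\mu$ to conclude that the identity, i.e.\ $R_k = U_Q$ and $R_v = U_S$, is a minimizer. Your remarks on the frozen-residual reading and on the descending eigenvalue convention are consistent with the paper's setup.
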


\section{System Design: Online Serving with 2-bit KV Cache}
\label{sec:system}

\textbf{KV Cache Layout.}
We integrate OSCAR into the SGLang~\citep{sglang} serving stack as an INT2 KV-cache mode with full compatibility with paged-attention~\cite{vllm}.
The implementation preserves two short high-precision windows: the first $S_0$ tokens, which behave as attention sinks, and the most recent $W$ tokens before the current position.
The rest of the middle context is stored in INT2 after the fixed OSCAR rotation.
Thus, at decoding position $t$, the logical cache consists of
\[
    \underbrace{[1,S_0]}_{\mathrm{bf16}\text{ sink}}
    \;\Vert\;
    \underbrace{[S_0+1,t-W]}_{\mathrm{int2}\text{ history}}
    \;\Vert\;
    \underbrace{[t-W+1,t]}_{\mathrm{bf16}\text{ recent}}
\]

\textbf{KV Cache Update.} During prefill, the runtime writes cache rows through a fully fused Triton kernel~\cite{Triton}.
For a token with BF16 rows $k_t,v_t$ and a clip value of $\tau_t^{(K)},\tau_t^{(V)}$, the stored INT2 are
\vspace{-0.025cm} 
\useshortskip

\[
    k_t^+ = \mathcal Q_2^+(\clip(k_tR_K,\tau_t^{(K)})),
    \qquad
    v_t^+ = \mathcal Q_2^+(\clip(v_tR_V,\tau_t^{(V)})),
\]
with four 2-bit values packed per byte.

New decode tokens will be first written to the window recent window as $k_tR_K$ and $v_tR_V$.
As decoding advances, the oldest recent token will be demoted into the INT2 middle region by a fused Triton kernel that applies the same clip--quantize operation.
OSCAR then optimize value rotation by absorbing $R_V$ into the model's projection weights, achieving compute saving and latency reduction.


\textbf{Decoding Attention Kernel.}
During decoding, OSCAR partitions each request's cache indices into BF16 (sink + recent) and INT2 segments on the GPU.
The INT2 kernel unpacks bytes, applies the stored scale/zero parameters, and accumulates them in floating point.
Existing decoding attention kernels typically consist of two kernel launches~\cite{dao2023flashdecoding}, one for parallel processing on KV cache segments along the sequence dimension.
The second one is used to merge partial attention results from segments with online softmax~\cite{dao2022flashattention}. OSCAR introduces an additional kernel for BF16 KV cache attention, then reuses the second merge kernel to piggyback on the merge of high-precision partial results.
Since the BF16 (sink + recent) segment has orders of magnitude fewer elements than the INT2 segment, the overhead is negligible.


\vspace{-0.2cm}
\section{Experiments}
\label{sec:experiments}
\vspace{-0.2cm}


\subsection{Experimental Setup}
\label{subsec:exp_setup}

\textbf{Models \& Benchmarks.}
We evaluate OSCAR on four model configurations: \textbf{Qwen3-4B-Thinking-2507}, \textbf{Qwen3-8B}, \textbf{Qwen3-32B}, and \textbf{GLM-4.7-FP8} ~\citep{Qwen3,glm4}.
These models cover a small reasoning model, a mid-sized dense model, and a frontier-scale model, allowing us to test OSCAR across different levels of INT2 robustness. We evaluate accuracy on five reasoning and coding benchmarks: 
 AIME25~\citep{AIME25},GPQA-Diamond~\citep{rein2024gpqa}, HumanEval~\citep{chen2021evaluating}, LiveCodeBench v6~\citep{jain2024livecodebench}, and MATH500~\citep{hendrycks2021measuring}.

 \begin{wraptable}{r}{0.5\textwidth}
  \vspace{-1em}
  \centering
   \small
    \caption{\textbf{AIME25 at 32K generation.}}
     \label{table:kitty-comapre}
  \setlength{\tabcolsep}{1.5pt}
  \begin{tabular}{l c c c}
  \toprule
  \textbf{Method} & \textbf{BPE} & \textbf{Qwen3-8B} & \textbf{Qwen3-32B} \\
  \midrule
  Original BF16 & 16.00 & 66.00$\pm$7.33 & 72.59$\pm$7.41 \\
  KIVI-KV2~\citep{liu2024kivi} & 2.25 & 52.33$\pm$9.00 & 57.41$\pm$9.26 \\
  KIVI-KV2*~\citep{liu2024kivi} & 2.26 & 57.67$\pm$9.00 & 59.05$\pm$12.38 \\
  Kitty~\citep{xia2025kitty} & 2.39 & 59.67$\pm$10.33 & 69.26$\pm$9.26 \\
  \rowcolor{blue!8}
  \textbf{OSCAR} & 2.38 & 66.67$\pm$3.33 & 74.00$\pm$5.48 \\
  \bottomrule
  \end{tabular}
 
  \vspace{-1em}
\end{wraptable}

We also evaluate long-context retrieval with \textbf{RULER}~\cite{hsieh2024ruler} to test the long-sequence robustness. Particularly, RULER-NIAH is the cleanest stress test: the answer is explicitly present in the prompt, and the main question is whether quantized history tokens can be attended to in a long context.

\textbf{Hardware and Framework.}
Qwen3-4B-Thinking and Qwen3-8B are served on a single NVIDIA H100 (80\,GB); Qwen3-32B and GLM-4.7-FP8 are served on $2\times$H100 and $8\times$H100 with tensor parallelism, respectively.
All system-level runs use our SGLang~\citep{sglang} implementation.

\textbf{Generation Protocol \& Calibration.}
We use temperature $T{=}0.6$ ($T{=}1.0$), top-$p{=}0.95$, and top-$k{=}20$ for Qwen3-family (GLM-4.7), with thinking mode enabled for reasoning benchmarks.
Unless otherwise noted, each Qwen configuration is run with \textbf{5 independent seeds} and GLM-4.7-FP8 with \textbf{3 runs}; we report mean $\pm$ standard deviation.
All accuracy evaluations use a maximum generation length of \textbf{32768 tokens} and run end-to-end inside SGLang, using the same execution path as our system measurements.
All OSCAR parameters are estimated once from a small MMLU-style calibration set.
For each model, we run one calibration pass and dump per-layer $Q,K,V$ activations (8878 tokens $\times$ number of layers), from which we compute the key/value rotations and per-layer clipping thresholds, then reuse the same parameters for all benchmarks.
No task-specific calibration is used.

\textbf{Baselines.}
We use two baseline groups.
(1) \textit{Group A} contains channel-wise KV methods such as KIVI and Kitty.
These methods require residual buffers, channel-wise scales, promoted channels, or custom page layouts, and we do not have paged/fused kernels for them at 32K generation length.

For the only shared 32K accuracy setting, we show the reported Qwen3-8B and Qwen3-32B AIME25 results in the Table~\ref{table:kitty-comapre}; The BPE values include INT payload, BF16 scale/zero metadata with group size 128, and BF16 initial tokens where applicable; the Kitty row is the 12.5\% key-channel boost variant. (2) \textit{Group B} contains rotation-based methods: FP16/BF16, naive INT2/INT4, QuaRot-style Hadamard rotation, block-diagonal Hadamard (Saw-INT4)~\citep{jia2026saw}, and TurboQuant~\cite{vllm2026turboquantpr}.
For TurboQuant, we use the official vLLM implementation from PR~\#38479~\citep{vllm2026turboquantpr} and run the same 32K-generation evaluation; for fairness, we disable its mixed precision setup and quantize all layers. 

For OSCAR we always pair rotation with sink ($S{=}64$) and recent-window ($R{=}256$) BF16 protection, calibration-derived per-layer clip thresholds (whose typical values are $c_K\!\approx\!0.96$ and $c_V\!\approx\!0.92$, see the calibration paragraph above), and per-channel asymmetric INT2 quantization.
In Table~\ref{tab:main_accuracy}, BPE counts INT payload plus BF16 scale/zero metadata with block size 128. For OSCAR, we report effective BPE at 128K context length, including the BF16 sink/recent tokens ($S{=}64$, $R{=}256$). LiveCodeBench v6 is evaluated with the same 32K generation cap as the other tasks; this truncates some long code-generation outputs, so the reported LCB numbers are lower than they would be under a longer 128K generation budget.

\begin{table}[!t]
  \centering
  \scriptsize
  \caption{INT2 KV-cache quantization methods compared on four model configurations and five benchmarks. Entries are $\mu \pm \sigma$ over 5 seeds, except GLM-4.7-FP8 (3 runs)*. BPE denotes effective bits per KV element at 128K context length; ``no MP'' means mixed precision is disabled. ``Drop'' is the average-column gap to the BF16 upper bound; smaller is better.}
  \label{tab:main_accuracy}
  \setlength{\tabcolsep}{3pt}
  \resizebox{\textwidth}{!}{
  \begin{tabular}{l l c c c c c c c c}
  \toprule
  \textbf{Model} & \textbf{Method} & \textbf{BPE} & \textbf{GPQA} & \textbf{HumanE} & \textbf{LCB v6} & \textbf{AIME25} & \textbf{MATH500} & \textbf{Mean} & \textbf{Drop} \\
  \midrule
  \multirow{4}{*}{\rotatebox[origin=c]{90}{\textbf{\makecell[c]{Qwen3-4B\\Thinking-2507}}}}
   & BF16                  & 16.00 & 67.27$\pm$1.80 & 94.05$\pm$0.54 & 48.66$\pm$2.20 & 74.67$\pm$1.83 & 93.55$\pm$0.33 & 75.64 & -- \\
   & Saw-INT4 ~\citep{jia2026saw} & 4.25 & 66.37$\pm$2.19 & 89.78$\pm$0.80 & 46.20$\pm$1.94 & 70.00$\pm$4.08 & 93.19$\pm$0.51 & 73.11 & -2.53 \\
   & TurboQuant (no MP)~\citep{zandieh2025turboquant} & 3.25& 41.41 & 31.83 & 0.58 & 16.67 & 68.20 & 31.74 & -43.90 \\
   & QuaRot-INT2~\citep{ashkboos2024quarot}           & 2.25 & 0.34$\pm$0.48 & 0.98$\pm$0.10 & 0.00$\pm$0.00 & 0.00$\pm$0.00 & 5.67$\pm$0.19 & 1.40 & -74.24 \\
   & Naive INT2            & 2.25 & 0.00 & 0.00 & 0.00 & 0.00 & 0.00 & 0.00 & -75.64 \\
  \rowcolor{blue!8}
   & \textbf{OSCAR (ours)}   & 2.28 & 64.95$\pm$1.16 & 92.24$\pm$1.02 & 45.38$\pm$1.97 & 64.00$\pm$3.65 & 92.75$\pm$0.39 & 71.864 & -3.78 \\
  \midrule
  \multirow{6}{*}{\rotatebox[origin=c]{90}{\textbf{\makecell[c]{Qwen3\\-8B}}}}
   & BF16                  & 16.00 & 56.67$\pm$2.30 & 85.95$\pm$1.01 & 49.01$\pm$2.13 & 70.00$\pm$3.33 & 92.59$\pm$0.62 & 70.84 & -- \\
   & Saw-INT4~\citep{jia2026saw} & 4.25 & 54.85$\pm$3.17 & 86.44$\pm$0.42 & 47.95$\pm$2.15 & 68.00$\pm$2.98 & 92.63$\pm$0.27 & 69.97 & -0.87 \\
   & TurboQuant (no MP)~\citep{zandieh2025turboquant}            & 3.25& 55.05 & 74.63 & 21.05 & 46.67 & 87.00 & 56.88 & -13.96 \\
   & QuaRot-INT2~\citep{ashkboos2024quarot}           & 2.25 & 14.98$\pm$0.63 & 9.80$\pm$0.76 & 0.58$\pm$0.58 & 2.22$\pm$1.57 & 23.13$\pm$1.88 & 10.14 & -60.70 \\
   & Naive INT2            & 2.25 & 0.00 & 0.00 & 0.00 & 0.00 & 0.00 & 0.00 & -70.84 \\
  \rowcolor{blue!8}
   & \textbf{OSCAR (ours)}   & 2.28 & 55.05$\pm$1.47 & 87.88$\pm$0.44 & 46.32$\pm$1.91 & 66.67$\pm$3.33 & 92.22$\pm$0.83 & 69.416 & -1.42 \\
  \midrule
  \multirow{6}{*}{\rotatebox[origin=c]{90}{\textbf{\makecell[c]{Qwen3\\-32B}}}}
   & BF16                  & 16.00 & 58.49$\pm$1.21 & 91.19$\pm$1.11 & 59.06$\pm$1.80 & 68.67$\pm$5.58 & 93.55$\pm$0.41 & 74.19 & -- \\
   & Saw-INT4~\citep{jia2026saw} & 4.25 & 59.29$\pm$0.76 & 90.85$\pm$0.47 & 56.02$\pm$2.41 & 72.67$\pm$4.90 & 93.31$\pm$0.10 & 74.43 & +0.24 \\
   & TurboQuant (no MP)~\citep{zandieh2025turboquant} & 3.25& 58.69 & 88.41 & 55.56 &  66.67 & 90.60 & 71.99 & -2.20 \\
   & QuaRot-INT2~\citep{ashkboos2024quarot} & 2.25 & 19.70$\pm$1.09 & 1.83$\pm$0.33 & 0.39$\pm$0.55 & 0.00$\pm$0.00 & 17.60$\pm$1.66 & 7.90 & -66.29 \\
   & Naive INT2            & 2.25 & 0.00 & 0.00 & 0.00 & 0.00 & 0.00 & 0.00 & -74.19 \\
  \rowcolor{blue!8}
   & \textbf{OSCAR (ours)}   & 2.28 & 60.40$\pm$3.47 & 90.12$\pm$0.71 & 53.57$\pm$0.67 & 74.00$\pm$5.48 & 92.75$\pm$0.46 & 74.17 & -0.02 \\
  \midrule
  \multirow{6}{*}{\rotatebox[origin=c]{90}{\textbf{\makecell[c]{GLM-4.7\\FP8, 358B}}}}
   & BF16                  & 16.00 & 73.23$\pm$1.33 & 91.46$\pm$0.65 & 49.12$\pm$0.59 & 80.00$\pm$3.33 & 95.66$\pm$0.61 & 77.89 & -- \\
   & Saw-INT4~\citep{jia2026saw} & 4.25 & 73.74$\pm$1.01 & 91.30$\pm$0.92 & 50.49$\pm$1.47 & 78.89$\pm$1.92 & 95.32$\pm$0.12 & 77.95 & +0.06 \\
   & TurboQuant (no MP)~\citep{zandieh2025turboquant}            & 3.25& 66.67 & 90.24 & 58.48 & 80.00 & 95.39 & 78.15 & +0.26 \\
   & QuaRot-INT2~\citep{ashkboos2024quarot}           & 2.25 & 68.01$\pm$3.79 & 89.87$\pm$1.78 & 48.54$\pm$4.79 & 78.89$\pm$1.92 & 90.40$\pm$0.35 & 75.14 & -2.75 \\
   & Naive INT2            & 2.25 & 54.55$\pm$3.54 & 86.50$\pm$1.43 & 37.03$\pm$6.41 & 37.78$\pm$1.92 & 86.60$\pm$0.80 & 60.49 & -17.40 \\
  \rowcolor{blue!8}
   & \textbf{OSCAR (ours)}   & 2.28 & 73.57$\pm$0.58 & 91.06$\pm$0.25 & 52.63$\pm$1.01 & 78.89$\pm$1.92 & 94.66$\pm$0.76 & 78.16 & +0.27 \\
  \bottomrule
  \end{tabular}%
  }
  \parbox{\textwidth}{\footnotesize *TurboQuant entries are single-run results as its vLLM path is too slow for repeated 32K evaluations under our time budget.}
  \vspace{-0.4cm}
\end{table}

\vspace{-0.2cm}
\subsection{Main Results}
\vspace{-0.1cm}
\label{subsec:exp_main_acc}
Table~\ref{tab:main_accuracy} reports the main accuracy comparison on Qwen3-4B-Thinking-2507, Qwen3-8B, Qwen3-32B, and GLM-4.7-FP8.

For OSCAR we use the configuration selected by the ablation studies in Section~\ref{subsec:exp_ablation}: sink$=$64, recent$=$256, calibration-derived clip thresholds, attention-aware key/value covariance, and asymmetric INT2 quantization. The key observation is that OSCAR is the only near-2-bit method that remains close to the BF16 accuracy frontier under the 32K-generation evaluation.
On Qwen3-4B-Thinking-2507 and Qwen3-8B, OSCAR reduces the BF16 gap to 3.78 and 1.42 points, respectively, while TurboQuant drops 43.90 and 13.96 points and rotation-only INT2 baselines largely collapse.
On Qwen3-32B and GLM-4.7-FP8, OSCAR is essentially tied with BF16 at 2.28 BPE, while Saw-INT4 uses 4.25 BPE.

\textbf{Long-Context Robustness.} Table~\ref{tab:long_context_ruler} reports RULER-NIAH accuracy from 4k to 128k tokens on the same serving-compatible baselines used in the main accuracy comparison.
The expected pattern is simple: short contexts should remain close to BF16, while longer contexts expose accumulated attention-logit error.
OSCAR should degrade more slowly than rotation-only INT2 baselines because its rotation target is chosen from the covariance seen by attention, not from raw-cache reconstruction, and because long contexts amplify accumulated KV quantization error.  Figure~\ref{fig:long_context_kl_scaling} provides a direct KL-based check before the retrieval results.
Across both Qwen3-4B-Thinking-2507 and Qwen3-8B, OSCAR is more stable as sequence length increases: its attention distribution stays closer to FP16, while naive INT2 and Hadamard-only rotation drift more quickly.
On Qwen3-4B-Thinking-2507, QuaRot-INT2 is already near zero at short contexts, whereas OSCAR stays close to BF16 through 16k and retains non-trivial retrieval at 64k and 128k.
On Qwen3-8B, QuaRot-INT2 remains usable only at 4k--8k and collapses after 16k, while OSCAR keeps substantially higher retrieval accuracy throughout the sweep.
The GLM-4.7-FP8 preliminary run shows a complementary large-model case: all three methods remain strong on this retrieval-only task, and OSCAR matches the BF16 curve up to 128k.
Together, the RULER results and KL curves indicate that attention-aware covariance calibration mainly helps when long histories make small KV errors accumulate over many steps.

\begin{table}[H]
  \centering
  \footnotesize
  \caption{Long-context retrieval accuracy on \textbf{RULER-NIAH}~\citep{hsieh2024ruler} across context lengths from 4k to 128k tokens, on the Group~A serving-compatible baselines. Each number is averaged over 3 random seeds, except GLM-4.7-FP8 preliminary results (one run, as GLM-4.7 is very stable).}
  \label{tab:long_context_ruler}
  \setlength{\tabcolsep}{4pt}
  \resizebox{\textwidth}{!}{%
  \begin{tabular}{l l c c c c c c}
  \toprule
  \textbf{Model} & \textbf{Method} & \textbf{4k} & \textbf{8k} & \textbf{16k} & \textbf{32k} & \textbf{64k} & \textbf{128k} \\
  \midrule
  \multirow{3}{*}{Qwen3-4B}
    & BF16          & 100.0$\pm$0.0 & 100.0$\pm$0.0 & 99.7$\pm$2.0 & 99.3$\pm$1.0 & 85.3$\pm$5.0 & 81.0$\pm$1.4 \\
    & QuaRot-INT2~\citep{ashkboos2024quarot}   & 0.0$\pm$0.0 & 0.8$\pm$0.2 & 0.0$\pm$0.0 & 0.0$\pm$0.0 & 15.6$\pm$3.2 & 0.0$\pm$0.0 \\
  \rowcolor{blue!8}
  (-Thinking-2507)  & \textbf{OSCAR}  & 99.7$\pm$0.1 & 100.0$\pm$0.0 & 97.8$\pm$0.2 & 87.6$\pm$0.1 & 61.9$\pm$1.3 & 39.5$\pm$1.0 \\
  \midrule
  \multirow{3}{*}{Qwen3-8B}
    & BF16          & 100.0$\pm$0.0 & 99.6$\pm$0.0 & 98.9$\pm$0.1 & 97.3$\pm$0.0 & 79.2$\pm$0.1 & 78.2$\pm$1.0 \\
    & QuaRot-INT2~\citep{ashkboos2024quarot}   & 74.0$\pm$1.0 & 80.6$\pm$0.3 & 19.0$\pm$2.0 & 9.8$\pm$1.0 & 0.0$\pm$0.0 & 0.0$\pm$0.0 \\
  \rowcolor{blue!8}
    & \textbf{OSCAR}  & 99.5$\pm$0.2 & 97.8$\pm$0.3 & 93.9$\pm$0.7 & 86.3$\pm$0.6 & 61.9$\pm$1.8 & 45.0$\pm$0.0 \\
  \midrule
  \multirow{3}{*}{GLM-4.7-FP8\textsuperscript{$\dagger$}}
    & BF16          & 100.0 & 100.0 & 100.0 & 100.0 & 98.8 & 97.2 \\
    & QuaRot-INT2~\citep{ashkboos2024quarot}   & 100.0 & 99.7 & 100.0 & 100.0 & 98.8 & 96.3 \\
  \rowcolor{blue!8}
    & \textbf{OSCAR}  & 100.0 & 100.0 & 100.0 & 100.0 & 98.8 & 97.2 \\
  \bottomrule
  \end{tabular}
  }
  \vspace{-0.5cm}
\end{table}

\begin{figure}[H]
    \centering
    \captionsetup[subfigure]{justification=centering}
    \begin{subfigure}[b]{0.48\textwidth}
        \centering
        \includegraphics[width=\linewidth]{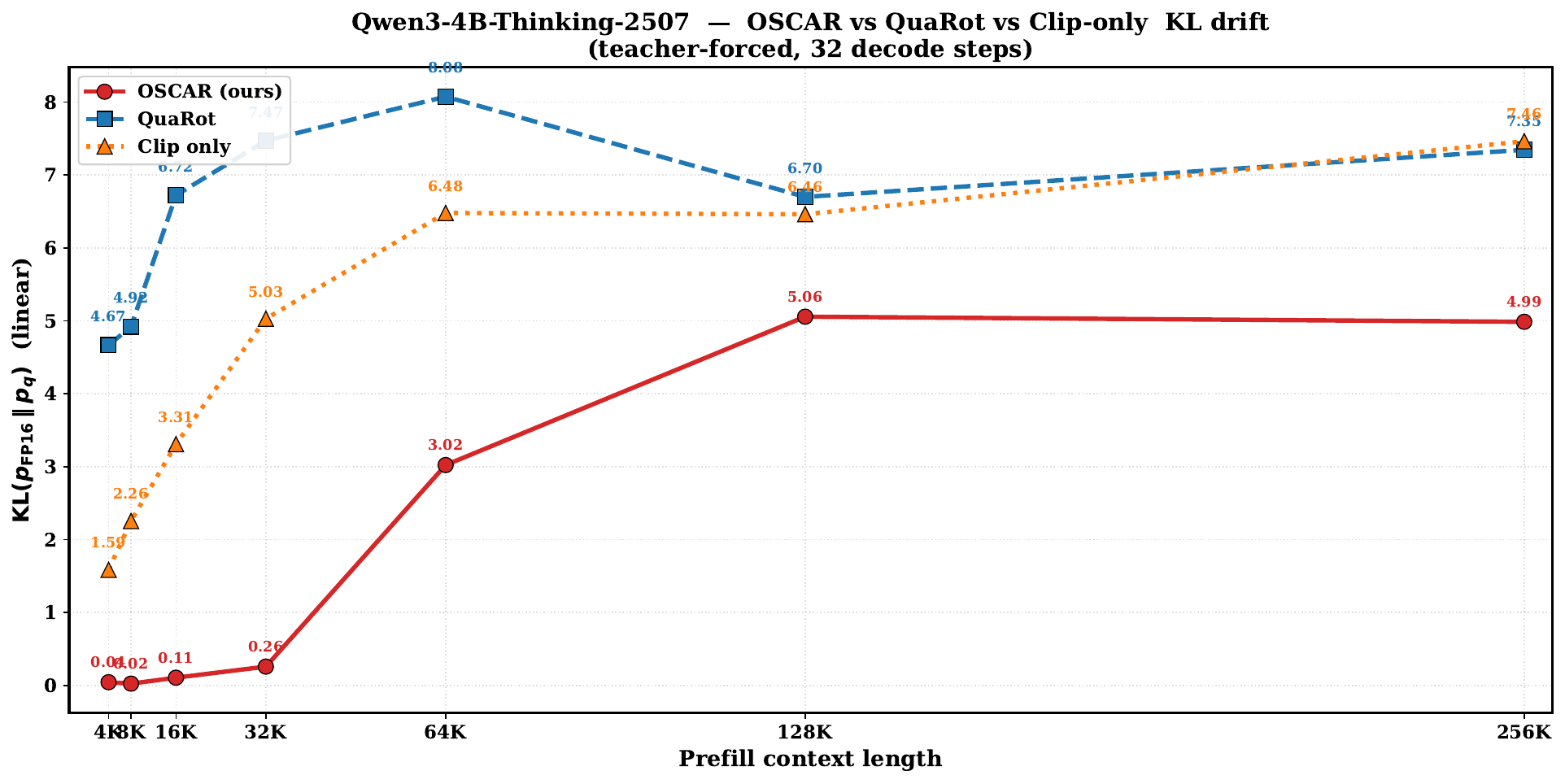}
        \caption{Qwen3-4B-Thinking-2507}
        \label{fig:long_context_kl_4b}
    \end{subfigure}
    \hfill
    \begin{subfigure}[b]{0.48\textwidth}
        \centering
        \includegraphics[width=\linewidth]{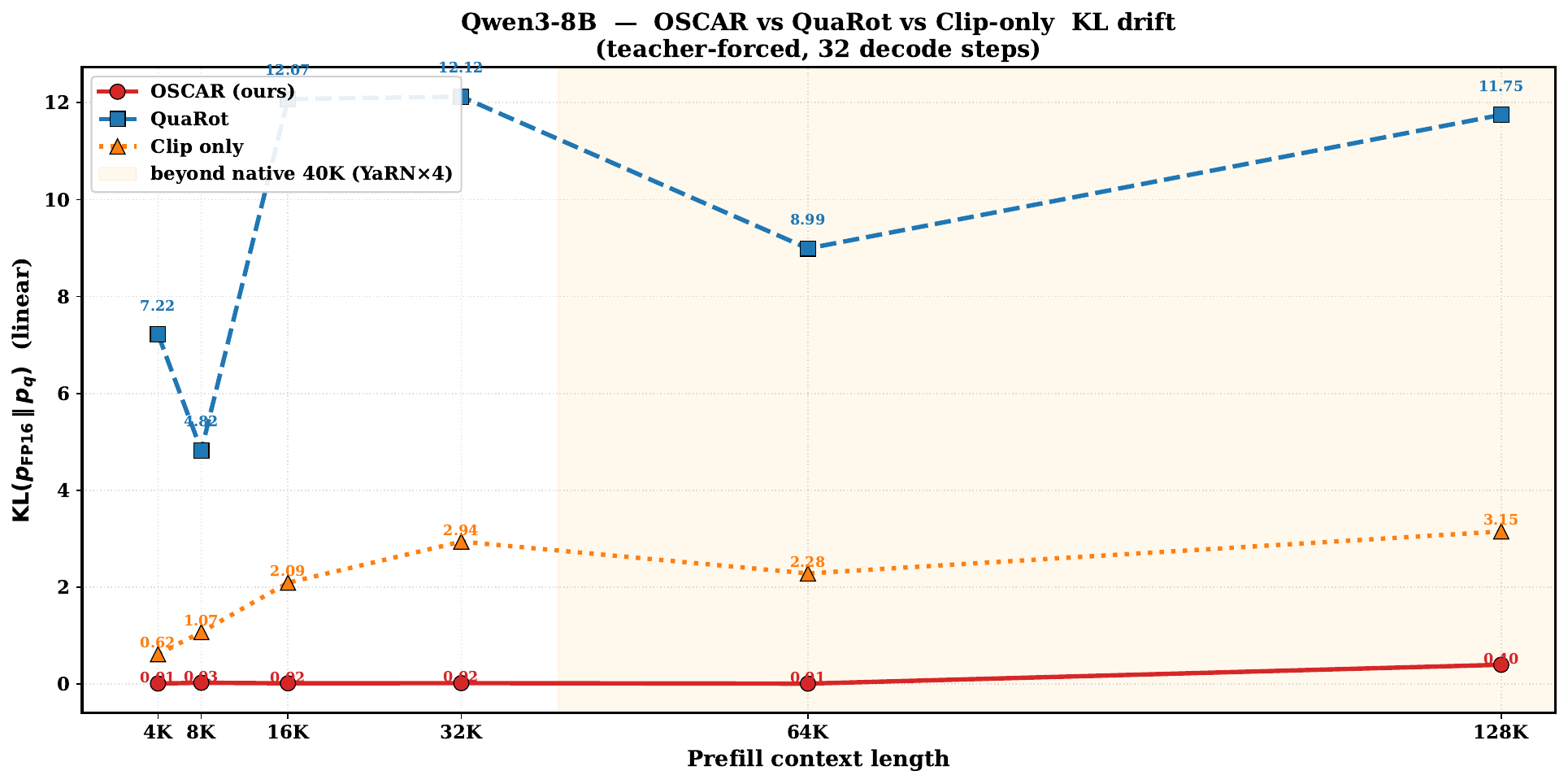}
        \caption{Qwen3-8B}
        \label{fig:long_context_kl_8b}
    \end{subfigure}
    \caption{\textbf{OSCAR keeps attention distributions stable as context length grows.}
    KL divergence against the FP16 attention distribution across context lengths for naive INT2, Hadamard, and OSCAR.
    }
    \label{fig:long_context_kl_scaling}
\end{figure}


\vspace{-0.5cm}
\subsection{Ablation Studies}
\label{subsec:exp_ablation}
\textbf{Rotation Analysis: Decomposing $R$ and Comparing Rotation Targets.}
\label{subsubsec:abl_rotation}
OSCAR's composed rotation is $R = U\cdot H_{\mathrm{Had}}\cdot P_{\mathrm{br}}$, the product of an attention-aware eigenbasis $U$, a Hadamard transform $H_{\mathrm{Had}}$, and a bit-reversal permutation $P_{\mathrm{br}}$.
Table~\ref{tab:ablation_composed} removes each factor of $R$ in turn (\emph{top block}) and, on the same footing, changes the PCA target used to compute $U$ (\emph{bottom block}): OSCAR's attention-aware targets are replaced by raw-cache reconstruction targets ($K^\top K$/$V^\top V$), fixed Hadamard rotations, or no learned rotation.

Two observations emerge.
First, both the attention-aware eigenbasis $U$ and the Hadamard component $H_{\mathrm{Had}}$ contribute substantially; the bit-reversal permutation $P_{\mathrm{br}}$ does not change accuracy in floating-point math but improves quantization geometry by interleaving large and small eigenvalues so that per-group quantization sees a more uniform range.
Second, none of the alternative rotation targets (random Hadamard, raw $K^\top K$/$V^\top V$ reconstruction targets, random orthogonal) matches the attention-aware $C_Q$/$C_{SQ}$/$C_S$ targets at the same INT2 budget; the score-weighted variant covariance $C_{SQ}$ gives a further improvement over $C_Q$ alone.
This isolates the central claim of the paper: \emph{which} covariance matrix one diagonalizes matters more than \emph{whether} one diagonalizes at all.

\textbf{Sink and Recent Window Sizes.} We sweep $(S, R) \in \{(0,0), (32,128), (64,256), (128,512),$ $(256,1024)\}$ on Qwen3-4B-Thinking-2507 and report accuracy with the additional BF16 KV memory that each protection window introduces (Table~\ref{tab:ablation_sink_recent}).
A clear knee emerges at $(S, R){=}(64, 256)$: smaller windows leave noticeable accuracy on the table, while larger windows provide negligible additional accuracy at the cost of substantially more BF16 KV memory.
$(64, 256)$ is a sweet spot and thus used for the main results.

\begin{table}[H]
\centering
\footnotesize
\caption{Decomposition of the composed OSCAR rotation $R = U\cdot H_{\mathrm{Had}}\cdot P_{\mathrm{br}}$ (\emph{top block}) and comparison against alternative rotation-target choices used by prior work (\emph{bottom block}), all at the same INT2 bit budget with sink/recent protection on Qwen3-8B. Entries are mean $\pm$ std over 3 runs.}
\label{tab:ablation_composed}
\setlength{\tabcolsep}{4pt}
\begin{tabular}{l c c c c c c}
\toprule
\textbf{Configuration} & \textbf{GPQA} & \textbf{HumanE} & \textbf{LCB v6} & \textbf{AIME25} & \textbf{MATH500} & \textbf{Mean} \\
\midrule
FP16 reference & 57.07$\pm$0.94 & 84.71$\pm$0.99 & 48.07$\pm$2.20 & 66.00$\pm$2.79 & 88.28$\pm$0.46 & 68.83 \\
\midrule
\multicolumn{7}{l}{\emph{Decomposition of OSCAR's composed rotation}} \\
\rowcolor{blue!8}
Full OSCAR: $U\cdot H_{\mathrm{Had}}\cdot P_{\mathrm{br}}$         & 55.89$\pm$1.27 & 87.97$\pm$0.19 & 46.00$\pm$1.88 & 67.78$\pm$1.92 & 92.38$\pm$0.20 & 70.01 \\
OSCAR with $K^\top K$/$V^\top V$ PCA target              & 38.72$\pm$1.27 & 35.53$\pm$0.67 & 3.70$\pm$0.68 & 13.33$\pm$3.33 & 64.33$\pm$0.53 & 31.12 \\
w/o $P_{\mathrm{br}}$                                   & 52.86$\pm$1.62 & 86.71$\pm$0.84 & 45.22$\pm$0.34 & 63.33$\pm$8.82 & 91.85$\pm$1.45 & 68.00 \\
w/o $H_{\mathrm{Had}}$ (only $U\cdot P_{\mathrm{br}}$)               & 51.35$\pm$1.91 & 83.78$\pm$0.37 & 19.49$\pm$2.05 & 21.11$\pm$1.92 & 82.97$\pm$0.80 & 51.74 \\
w/o $U$ (QuaRot + Pbr: $H_{\mathrm{Had}}\cdot P_{\mathrm{br}}$) & 40.74$\pm$3.44 & 38.54$\pm$1.18 & 4.48$\pm$0.68 & 15.56$\pm$1.92 & 64.80$\pm$1.29 & 32.82 \\
no rotation (only clip + sink + recent)                 & 18.52$\pm$3.29 & 0.00 & 0.39$\pm$0.68 & 2.22$\pm$1.92 & 0.00 & 4.23 \\
\bottomrule
\end{tabular}
\end{table}

\begin{table}[H]
  \centering
  \footnotesize
  \caption{Sink and recent window sizes on Qwen3-4B-Thinking-2507 (INT2 with OSCAR rotation and calibration-derived clip). $(S, R)$ denote the number of BF16 sink tokens and recent-window tokens, respectively. ``Extra BF16 KV'' reports the protected BF16-token fraction, $(S+R)/128\mathrm{K}$.}
  \label{tab:ablation_sink_recent}
  \setlength{\tabcolsep}{6pt}
  \resizebox{\textwidth}{!}{%
  \begin{tabular}{l c c c c c c c}
  \toprule
  $(S, R)$ & \textbf{GPQA} & \textbf{HumanE} & \textbf{LCB v6} & \textbf{AIME25} & \textbf{MATH500} & \textbf{Mean} & \textbf{Extra BF16 KV} \\
  \midrule
  $(0, 0)$              & 0.00 & 0.00 & 0.00 & 0.00 & 0.00 & 0.00 & 0\,\% \\
  $(32, 128)$           & 57.58$\pm$2.20 & 91.91$\pm$0.19 & 40.74$\pm$2.36 & 55.56$\pm$5.09 & 92.65$\pm$0.95 & 67.69 & 0.12\,\% \\
  $\mathbf{(64, 256)}$  & \textbf{64.95$\pm$1.16} & \textbf{92.24$\pm$1.02} & \textbf{45.38$\pm$1.97} & \textbf{64.00$\pm$3.65} & \textbf{92.75$\pm$0.39} & \textbf{71.86} & \textbf{0.24\,\%} \\
  $(128, 512)$          & 65.32$\pm$0.77 & 93.17$\pm$0.32 & 45.22$\pm$0.33 & 67.78$\pm$1.92 & 93.32$\pm$0.12 & 72.96 & 0.49\,\% \\
  $(256, 1024)$         & 64.98$\pm$1.27 & 93.13$\pm$0.81 & 46.20$\pm$2.03 & 67.78$\pm$1.92 & 93.32$\pm$0.12 & 73.08 & 0.98\,\% \\
  \bottomrule
  \end{tabular}%
  }
\end{table}

\textbf{Clip Threshold.} The clip thresholds are estimated per layer from rotated calibration activations rather than tuned on downstream tasks. Table~\ref{tab:ablation_clip} sweeps global clip ratios on Qwen3-4B-Thinking and shows that the calibration-derived point $(c_K,c_V)=(0.96,0.92)$ is close to the best grid point.

\begin{table}[ht]
\centering
\footnotesize
\caption{Validation of the calibration-derived clip thresholds (mean accuracy on Qwen3-4B-Thinking-2507). Rows: $c_K$; columns: $c_V$. The cell highlighted in bold is the operating point produced by the calibration procedure described in Section~\ref{subsec:exp_setup}; it is within 0.15 points of the best downstream-task grid point, confirming that OSCAR's clip thresholds do not need to be hand-tuned.}
\label{tab:ablation_clip}
\setlength{\tabcolsep}{6pt}
\begin{tabular}{l|c c c c c}
\toprule
$c_K \setminus c_V$ & 0.88 & 0.92 & 0.96 & 0.98 & 1.00 \\
\midrule
0.88 & 67.20 & 68.27 & 68.10 & 67.56 & 68.25 \\
0.92 & 69.15 & 70.00 & 70.46 & 70.74 & 70.33 \\
0.96 & 68.77 & \textbf{70.59} & 70.70 & 68.17 & 69.14 \\
0.98 & 65.22 & 65.76 & 64.12 & 64.79 & 67.31 \\
1.00 & 60.35 & 59.19 & 58.15 & 58.63 & 56.02 \\
\bottomrule
\end{tabular}
\end{table}

\textbf{Calibration Data Regime.} We also test whether OSCAR is sensitive to calibration volume or domain. Table~\ref{tab:ablation_calibration} shows that the default 8k GPQA-Diamond calibration is close to the best setting, supporting a small one-time calibration pass.

\begin{table}[ht]
\centering
\footnotesize
\caption{Calibration-data regime on Qwen3-4B-Thinking-2507 (INT2 with OSCAR rotation, calibration-derived clip, sink/recent protection). We sweep both the calibration \emph{volume} (number of tokens) and the calibration \emph{domain}. The default setting (8k GPQA-Diamond tokens) is highlighted.}
\label{tab:ablation_calibration}
\setlength{\tabcolsep}{4pt}
\begin{tabular}{l l c c c c c c}
\toprule
\textbf{Calibration domain} & \textbf{Tokens} & \textbf{GPQA} & \textbf{HumanE} & \textbf{LCB v6} & \textbf{AIME25} & \textbf{MATH500} & \textbf{Mean} \\
\midrule
\multirow{4}{*}{MMLU prompts} & 2k                  & 60.61$\pm$2.20 & 92.52$\pm$0.14 & 39.57$\pm$2.36 & 55.56$\pm$5.09 & 93.12$\pm$0.46 & 68.28 \\
                              & 8k                  & 60.44$\pm$2.04 & 92.07$\pm$0.76 & 40.74$\pm$0.89 & 60.00$\pm$5.77 & 92.52$\pm$0.58 & 69.15 \\
                              & 16k                 & 64.14$\pm$0.51 & 92.93$\pm$0.53 & 43.86$\pm$2.55 & 61.11$\pm$10.72 & 92.38$\pm$0.53 & 70.88 \\
                              & 32k                 & 61.28$\pm$1.54 & 92.20$\pm$0.32 & 46.00$\pm$2.36 & 57.78$\pm$5.09 & 92.25$\pm$0.31 & 69.90 \\
\midrule
WikiText                      & 8k                  & 61.11$\pm$0.51 & 92.72$\pm$0.63 & 42.11$\pm$2.11 & 58.89$\pm$3.85 & 92.65$\pm$0.50 & 69.50 \\
\textbf{GPQA-Diamond}         & \textbf{8k (default)} & \textbf{62.96$\pm$1.27} & \textbf{92.56$\pm$0.00} & \textbf{44.64$\pm$1.47} & \textbf{62.22$\pm$1.92} & \textbf{92.65$\pm$0.31} & \textbf{71.01} \\
\bottomrule
\end{tabular}
\end{table}


\subsection{End-to-End Serving Throughput}
\label{subsubsec:e2e_throughput}

\textbf{Kernel-Level Profiling.} We profile per-step decode latency on Qwen3-8B and GLM-4.7-FP8 to isolate the runtime cost of OSCAR. Table~\ref{tab:decode_kernel_profile} shows that OSCAR reduces attention time by shrinking KV traffic while adding only a small fused quantization cost.

\begin{table}[t]
\centering
\footnotesize
\caption{Per-decode kernel profiling on Qwen3-8B, 1$\times$H100, single decoding step latency in milliseconds.}
\label{tab:decode_kernel_profile}
\setlength{\tabcolsep}{2pt}
\resizebox{\textwidth}{!}{%
\begin{tabular}{c|ccccc|ccccc}
\toprule
\multirow{2}{*}{\boldmath{$B$}} & \multicolumn{5}{c|}{\textsc{BF16}} & \multicolumn{5}{c}{\textsc{OSCAR}} \\
 & \textbf{GEMM} & \textbf{Attn} & \textbf{Quant} & \textbf{Other} & \textbf{Total} & \textbf{GEMM} & \textbf{Attn} & \textbf{Quant} & \textbf{Other} & \textbf{Total} \\
\midrule
1   & 5.6 (56.0\%) & 3.8 (37.9\%)  & -- & 0.6 (6.1\%) & 9.9  & 5.6 (69.9\%) & 1.3 (16.8\%)  & 0.5 (5.9\%) & 0.6 (7.5\%) & 8.0  \\
8   & 5.6 (53.0\%) & 4.4 (41.0\%)  & -- & 0.6 (6.0\%) & 10.7 & 5.7 (57.4\%) & 3.0 (30.1\%)  & 0.5 (4.9\%) & 0.8 (7.7\%) & 9.9  \\
16  & 5.7 (37.5\%) & 8.9 (58.1\%)  & -- & 0.7 (4.4\%) & 15.2 & 5.7 (49.7\%) & 4.6 (39.4\%)  & 0.5 (4.2\%) & 0.8 (6.7\%) & 11.6 \\
32  & 5.7 (24.3\%) & 17.0 (72.7\%) & -- & 0.7 (3.0\%) & 23.4 & 5.7 (37.0\%) & 8.5 (54.6\%)  & 0.5 (3.2\%) & 0.8 (5.2\%) & 15.5 \\
64  & \multicolumn{5}{c|}{capacity limited} & 5.8 (24.9\%) & 16.2 (69.1\%) & 0.5 (2.3\%) & 0.9 (3.7\%) & 23.4 \\
128 & \multicolumn{5}{c|}{capacity limited} & 5.9 (14.9\%) & 31.9 (81.1\%) & 0.6 (1.4\%) & 1.0 (2.6\%) & 39.3 \\
\bottomrule
\end{tabular}
}
\vspace{1mm}
\end{table}

\textbf{Representative Serving Throughput.} Table~\ref{tab:throughput_summary} reports throughput under 32 concurrent requests with 8192-token inputs and 1024-token outputs, giving the per-user and per-GPU throughput used to contextualize the speedup trends below.

\begin{table}[t]
\centering
\footnotesize
\caption{Representative end-to-end serving throughput under 32 concurrent requests
with 8192-token inputs and 1024-token outputs.
Qwen3-4B-Thinking and Qwen3-8B are evaluated on 1$\times$H100 (80GB),
while GLM-4.7-FP8 is evaluated on 8$\times$H100 (TP=8).
U = tokens/s/user, G = tokens/s/GPU.
Acc.\ is averaged over the five benchmarks in
Table~\ref{tab:main_accuracy}.}
\label{tab:throughput_summary}
\setlength{\tabcolsep}{6pt}
\begin{tabular}{l c c c c c c c c c}
\toprule
 & \multicolumn{3}{c}{\textbf{Qwen3-4B-Thinking-2507}}
 & \multicolumn{3}{c}{\textbf{Qwen3-8B}}\\
\cmidrule(lr){2-4}\cmidrule(lr){5-7}\cmidrule(lr){8-10}

\textbf{Method}
& U & G & Acc.
& U & G & Acc.\\

\midrule

BF16
& 41.1 & 1187.9 & 75.64
& 35.8 & 999.9  & 70.84\\

Naive INT2
& 66.4 & 1797.4 & 0.00
& 54.2 & 1415.1 & 0.00\\

QuaRot-INT2~\citep{ashkboos2024quarot}
& 66.3 & 1795.8 & 1.40
& 54.1 & 1411.9 & 10.14 \\

Saw-INT4~\citep{jia2026saw}
& 50.6 & 1444.1 & 73.11
& 43.0 & 1183.2 & 69.97\\

\textbf{OSCAR (ours)}
& 63.3 & 1723.9 & 71.86
& 52.5 & 1374.0 & 69.42 \\

\bottomrule
\end{tabular}
\end{table}

\textbf{Pure Decoding Speed.} We first study how effective our kernel is in pure decoding speed. We isolated the prefill time or time to first token (TTFT) by setting a full prefix-cache-hit setting with a batch size $1$, where all methods fully reuse the cached KV states throughout decoding. In this regime, throughput differences directly reflect decode efficiency rather than cache residency or scheduling effects. 
Figure~\ref{fig:length_sweep_throughput} (Left)  shows decode throughput normalized to BF16 across three models under varying context lengths. It shows that \textbf{(i)} OSCAR consistently outperforms Saw-INT4~\cite{jia2026saw} by up to 2x when the sequence length is 100k, reaching the theoretical bound, verifying the efficiency of our kernel design; \textbf{(ii)} The speedup of OSCAR increases with context length, from $1.98\times$ at 30k, $2.52\times$ at 60k,  to $3.08\times$ at 100k on Qwen3-4B. This trend scales up to 358B. As such, decoding speed becomes increasingly KV-bandwidth-bound at long contexts. By reducing KV-cache memory by a factor of $8$ (BF16 to INT2), OSCAR achieves substantially larger gains as the context length grows, while the online rotation overhead remains effectively hidden within the decode kernels. 


\begin{figure}[H]
\centering
\includegraphics[width=1.0\linewidth]{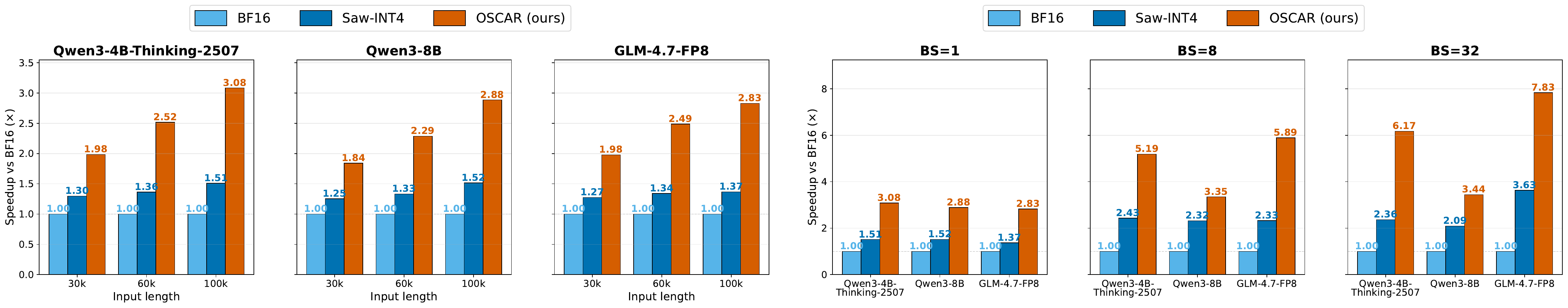}
\caption{
Left: Decode throughput speedup relative to BF16 at batch size $1$ with a full prefix-cache hit with 1k our output sequence length. OSCAR achieves progressively larger speedups as the context increases. Right: Job-level throughput at $100\mathrm{k}$ sequence, as a function of batch size for Qwen3-4B/8B and GLM-4.7-FP8. Each group of bars compares BF16, Saw-INT4, and ours at $\mathrm{BS}\in\{1,8,32\}$. 
}
\label{fig:length_sweep_throughput}
\vspace{-0.5cm}
\end{figure}

\textbf{Effect of Batch Size.} We next study how serving throughput scales with batch size under long-context decoding. 
Figure~\ref{fig:length_sweep_throughput} (right) reports job-level throughput with prefix warm up at a fixed input length ($100\mathrm{k}$ tokens) with batch sizes $\mathrm{BS}\in\{1,8,32\}$.
Prefix warm up is used to simulate popular agentic serving workload with long prefix reuse. 
Three observations stand out. 
\textbf{(i)} OSCAR consistently outperform BF16 across all batch sizes. On GLM-4.7-FP8, OSCAR achieves a $2.83\times$ speedup over BF16 at $\mathrm{BS}{=}1$, which further increases to $7.83\times$ at $\mathrm{BS}{=}32$. This behavior highlights that the reduced KV-cache footprint of INT2 methods becomes increasingly beneficial under highly concurrent long-context serving workloads. 
\textbf{(ii)} OSCAR consistently outperforms Saw-INT4 across all models and batch sizes in Figure~\ref{fig:length_sweep_throughput}, showing that the fused online rotation does not introduce an observable throughput penalty.
\textbf{(iii)} Throughput improves substantially with larger batch sizes as OSCAR can accommodate more concurrent requests with less memory pressure. OSCAR's design can also benefit prefix cache~\citep{sglang} thus improve cache hit rate with better job-level cache reuse.

\textbf{Serving Throughput with Prefix Cache from 0 to 100\%.} Figure~\ref{fig:prefix_cache_throughput} evaluates end-to-end serving throughput under concurrent multi-user workloads by sweeping prefix-cache hit ratio from cache disabled to near-100\% cache hits. 

\begin{figure}[H] 
  \centering
  \includegraphics[width=\linewidth]{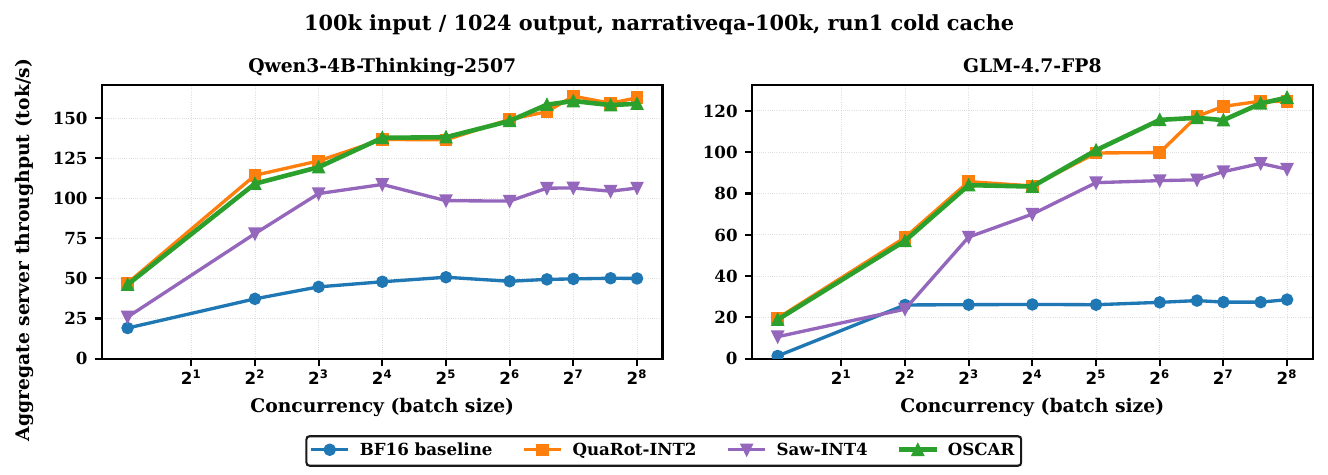}
  \caption{Long-context serving stress test with 100k-token inputs. OSCAR's uniform INT2 KV-cache scales up to $2^8$ requests with continued throughput gains.
  }
  \label{fig:scal-batch-size}
  \end{figure}
  
The results show that \textbf{(i)} increasing the prefix-cache hit ratio consistently expands the throughput frontier by reducing prefill recomputation, with the largest gains appearing at higher batch sizes; \textbf{(ii)} OSCAR remains on or near the efficiency frontier across all cache regimes, closely matching aggressive INT2 baselines while avoiding extra decode-time memory traffic or indirection; \textbf{(iii)} compared with BF16, low-bit KV quantization enables substantially higher per-GPU throughput because it improves decode efficiency and reduces KV-cache memory footprint, allowing larger concurrent batches under the same memory budget; and \textbf{(iv)} OSCAR achieves the best accuracy among INT2 methods while preserving the standard paged KV-cache abstraction and integrating cleanly into fused SGLang decode kernels, whereas QuaRot-INT2 can approach the throughput ceiling but suffers significant accuracy degradation as shown in Table~\ref{tab:main_accuracy}.

\begin{figure}[t]
\centering
\includegraphics[width=1.0\linewidth]{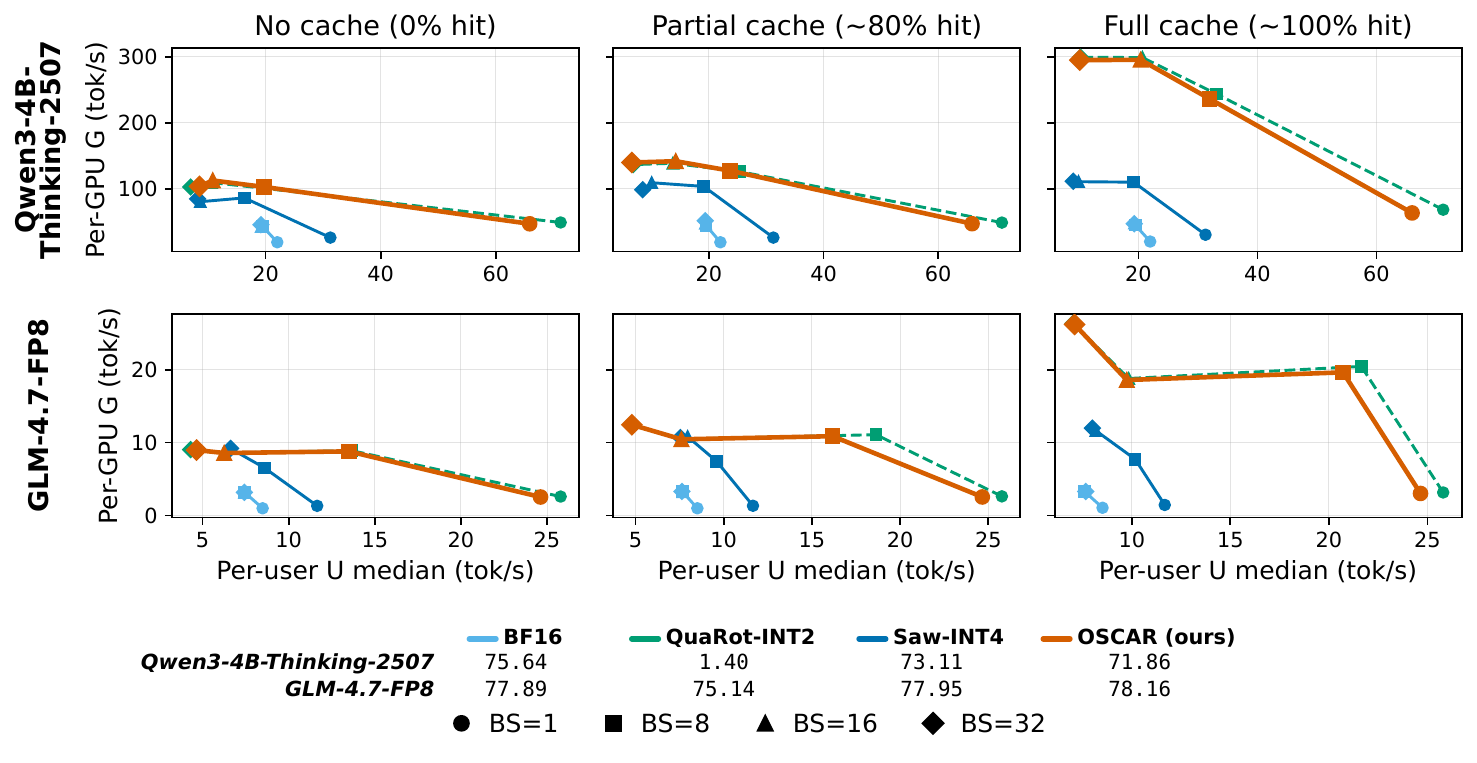}
\caption{
Effect of prefix-cache hit ratio on end-to-end serving throughput (100k ISL, 1K OSL). Each subplot shows median per-user throughput $U$ (x-axis, tok/s) versus mean per-GPU throughput $G$ (y-axis, tok/s), with markers denoting batch sizes $\mathrm{BS} \in \{1,8,16,32\}$. Rows correspond to Qwen3-4B-Thinking-2507 and GLM-4.7-FP8. Columns correspond to radix cache disabled, radix cache enabled during normal execution, and immediate replay after warmup (near-100\% hit ratio). 
}
\label{fig:prefix_cache_throughput}
\vspace{-0.6cm}
\end{figure}

\textbf{Memory and Extreme Batch Scalability.}
Finally, we study memory usage and batch scalability under long-context serving. We evaluate on a single H100 (80\,GB) with $100\mathrm{k}$-token inputs while scaling the concurrent batch size; detailed scaling curves are provided in the appendix. OSCAR reduces the KV-cache footprint by approximately $8\times$ relative to BF16, enabling substantially larger feasible batch sizes under the same memory budget. 

As shown in Figure 6, this stress test demonstrates that OSCAR can scale to very large concurrent batch sizes with inputs under 100k-tokens. Under 100k-token inputs, OSCAR's uniform INT2 KV-cache pages enable substantially larger feasible batch sizes, scaling up to $2^8$ concurrent requests on a single H100 while throughput continues to improve. In contrast, BF16 and INT4 baselines either run out of memory at smaller batch sizes or quickly plateau as concurrency increases. This shows that OSCAR's design is not only memory-efficient but also practically scalable to extremely long-context serving regimes. More ablation runs and discussion are in Appendix~\ref{app:ablation_detail_runs} and~\ref{sec:discussion}.
\vspace{-0.3cm}

\section{Conclusion}
\label{sec:conclusion}

We introduced \textbf{OSCAR}, an INT2 KV-cache quantization method that chooses rotations from attention-aware target covariance rather than raw cache reconstruction. With one offline calibration pass, OSCAR fixes key/value rotations and clip thresholds, then uses a fused SGLang path to quantize history KV tokens while preserving sink and recent tokens. This simple target-and-kernel co-design recovers much of the BF16 accuracy lost by naive INT2 while keeping the memory and serving advantages of 2-bit KV caches.

\newpage
\bibliography{references}
\bibliographystyle{unsrtnat} 
\clearpage
\appendix

\clearpage

\section{Additional Details and Theoretical Analysis}
\label{app:theory}

\subsection{Hadamard Transform}
\label{app:hadamard_transform}

The Hadamard transform is an orthogonal mixing transform with entries of equal magnitude. We use the normalized Walsh-Hadamard matrix $H_{\mathrm{Had}}$, which satisfies $H_{\mathrm{Had}}^\top H_{\mathrm{Had}}=I$. Its dimension is inferred from the key or value vector being transformed. For power-of-two dimensions, it can be defined recursively by $H_1=[1]$ and
\[
H_{2m}
=
\frac{1}{\sqrt{2}}
\begin{bmatrix}
H_m & H_m\\
H_m & -H_m
\end{bmatrix}.
\]
In low-bit LLM quantization, Hadamard transforms are commonly used to spread outlier energy across channels before rounding~\citep{ashkboos2024quarot}. In OSCAR, $H_{\mathrm{Had}}$ is composed after the attention-aware covariance rotation to further smooth the rotated coordinates while keeping the transform fixed and efficient.

\subsection{Principal Component Analysis}
\label{app:covariance_pca}

Given a symmetric positive semidefinite matrix $A \in \mathbb{R}^{d\times d}$, principal component analysis (PCA) diagonalizes it via
\begin{equation*}
A = V \Lambda V^\top,
\qquad
\Lambda = \diag(\lambda_1,\dots,\lambda_r,\dots),\quad
\lambda_1 \ge \lambda_2 \ge \cdots \ge 0.
\end{equation*}

Below we recall the important proposition of PCA ~\cite{fan1951maximum}.

\begin{proposition}[Indication of top-r eigenvectors]
\label{prop:pca-top-r}
For any $1\le r\le d$, let
\[
V_r = [v_1,\dots,v_r]\in\mathbb{R}^{d\times r}.
\]
Then
\[
V_r = \argmax_{U^\top U = I_r} \tr(U^\top A U),
\]
and the optimal value is
\[
\max_{U^\top U = I_r} \tr(U^\top A U)=\sum_{i=1}^r \lambda_i.
\]
\end{proposition}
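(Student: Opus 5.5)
We prove the Ky Fan maximum principle in Proposition~\ref{prop:pca-top-r} by reducing it to a linear program over the diagonal of a projection. Fix $U\in\mathbb{R}^{d\times r}$ with $U^\top U=I_r$, and write $A=V\Lambda V^\top$ with $V$ orthogonal.

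\textbf{Step 1: express the objective in the eigenbasis.} Set $W=V^\top U$, so $W^\top W=I_r$. By cyclicity of the trace,
\[
\tr(U^\top A U)=\tr(W^\top \Lambda W)=\sum_{i=1}^d \lambda_i\, p_i,
\qquad p_i:=\|w_i\|_2^2,
\]
where $w_i$ is the $i$-th row of $W$. The weights $p_i$ are the diagonal entries of the orthogonal projection $P=WW^\top$ of rank $r$.

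\textbf{Step 2: constrain the weights.} From $\sum_i p_i=\tr(WW^\top)=\tr(W^\top W)=r$ we get the sum constraint. From $p_i=e_i^\top P e_i\le\|e_i\|^2=1$ we get $p_i\le 1$, since $P$ is a projection. Clearly also $p_i\ge0$. So $p$ lies in the polytope $\{p\in[0,1]^d:\sum_i p_i=r\}$.

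\textbf{Step 3: solve the linear program.} We maximize $\sum_i\lambda_i p_i$ over this polytope, with $\lambda$ sorted decreasingly. Compare with $p^\star=(1,\dots,1,0,\dots,0)$, which has $r$ ones:
\[
\sum_{i\le r}\lambda_i-\sum_i\lambda_ip_i=\sum_{i\le r}\lambda_i(1-p_i)-\sum_{i>r}\lambda_ip_i\ \ge\ \lambda_r\Bigl(\sum_{i\le r}(1-p_i)-\sum_{i>r}p_i\Bigr)=0.
\]
The inequality holds because $\lambda_i\ge\lambda_r$ for $i\le r$ and $\lambda_i\le\lambda_r$ for $i>r$. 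The final bracket equals $r-\sum_i p_i=0$. This is the only delicate step, and the rearrangement bound makes it immediate.

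\textbf{Step 4: attainment.} Taking $U=V_r$ gives $W=[I_r;0]$, so $p=p^\star$ and the value is $\sum_{i\le r}\lambda_i$. Hence $V_r$ is a maximizer and the stated maximum value holds.

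Interpret the $\argmax$ as "$V_r$ is a maximizer". The maximizer is unique only up to $U\mapsto UO$ with $O$ orthogonal, and only when $\lambda_r>\lambda_{r+1}$.
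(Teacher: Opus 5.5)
Your proof is correct and follows the paper's argument essentially step for step. The paper also sets $B=V^\top U$, writes $\tr(U^\top AU)=\sum_i\lambda_i P_{ii}$ using the diagonal of the rank-$r$ projection $P=BB^\top$ with $0\le P_{ii}\le 1$ and $\sum_i P_{ii}=r$, and checks attainment at $U=V_r$. Your Step~3 pivot-at-$\lambda_r$ inequality adds a proof of the linear-maximization step that the paper only asserts (``assigning full weight to the largest $r$ eigenvalues''). Your reading of $\argmax$ as ``$V_r$ is a maximizer'' also correctly handles the non-uniqueness that the paper's notation glosses over.
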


Note that in our setting, when $A$ is chosen as a target covariance induced by the attention computation, PCA provides an orthogonal basis aligned with the directions that are most important to preserve under KV compression.

\begin{proof}
\label{app:PCA proof}

Let $U\in\mathbb{R}^{d\times r}$ satisfy $U^\top U=I_r$. Since $A=V\Lambda V^\top$, we have
\[
\tr(U^\top A U)
=
\tr(U^\top V\Lambda V^\top U).
\]
Define
\[
B = V^\top U \in \mathbb{R}^{d\times r}.
\]
Because $V$ is orthogonal,
\[
B^\top B = U^\top V V^\top U = U^\top U = I_r.
\]
Hence the columns of $B$ are orthonormal. Using cyclicity of trace,
\[
\tr(U^\top A U)
=
\tr(B^\top \Lambda B)
=
\tr(\Lambda B B^\top).
\]
Write
\[
P = BB^\top \in \mathbb{R}^{d\times d}.
\]
Then $P$ is an orthogonal projection matrix of rank $r$, so
\[
0\le P_{ii}\le 1,\qquad \sum_{i=1}^d P_{ii} = \tr(P)=r.
\]
Therefore
\[
\tr(U^\top A U)
=
\tr(\Lambda P)
=
\sum_{i=1}^d \lambda_i P_{ii}.
\]
Set
\[
w_i = P_{ii},\qquad i=1,\dots,d.
\]
Then
\[
0\le w_i\le 1,\qquad \sum_{i=1}^d w_i=r,
\]
and thus
\[
\tr(U^\top A U)=\sum_{i=1}^d \lambda_i w_i.
\]
Since $\lambda_1\ge \lambda_2\ge \cdots \ge \lambda_d$, the quantity $\sum_{i=1}^d \lambda_i w_i$ is maximized by assigning full weight to the largest $r$ eigenvalues, namely
\[
w_1=\cdots=w_r=1,\qquad w_{r+1}=\cdots=w_d=0.
\]
Hence
\[
\tr(U^\top A U)\le \sum_{i=1}^r \lambda_i.
\]
Now choose $U=V_r$. Then
\[
V_r^\top A V_r
=
V_r^\top V\Lambda V^\top V_r
=
\diag(\lambda_1,\dots,\lambda_r),
\]
so
\[
\tr(V_r^\top A V_r)=\sum_{i=1}^r \lambda_i.
\]
Thus $V_r$ attains the maximum, and therefore
\[
V_r \in \argmax_{U^\top U=I_r} \tr(U^\top A U).
\]
\end{proof}

\subsection{Target Covariance}
\label{app:covariance}

Given a matrix $X=[x_1;\dots;x_N]\in\mathbb{R}^{N\times d}$ whose rows are feature vectors, its empirical covariance, assuming the rows have been centered, is
$$
C_X=\frac{1}{N}X^\top X=\frac{1}{N}\sum_{i=1}^N x_i^\top x_i\in\mathbb{R}^{d\times d}.
$$
In this paper, we use the uncentered second moment in the same algebraic form and refer to it as a covariance target. For any direction $u\in\mathbb{R}^{d}$ with $\|u\|_2=1$, the scalar
$$
u^\top C_Xu=\frac{1}{N}\sum_{i=1}^N (x_i u)^2
$$
measures the average squared magnitude of the rows of $X$ along direction $u$. Thus, the eigenvectors of $C_X$ identify directions where the row activations have large energy, and PCA uses these eigenvectors as a basis aligned with the dominant second-order structure of $X$.

In OSCAR, the covariance target is not chosen only from the raw cached tensors. The reason is that the rotation should reduce the downstream attention error, not merely the Euclidean reconstruction error of $K$ and $V$. For keys, the empirical logit distortion satisfies
$$
\|QK^\top-Q\widehat K^\top\|_F^2
=
\tr\!\left((K-\widehat K)Q^\top Q(K-\widehat K)^\top\right).
$$
This identity shows that key errors are weighted by the query covariance $Q^\top Q$. Heuristically, directions with large eigenvalues of $Q^\top Q$ are directions in which the queries have large energy, so quantization error in those directions causes larger perturbations to the attention logits. Therefore, we use $Q^\top Q$ as the key-side target covariance for determining the raw key rotation.

For values, the downstream output distortion satisfies
$$
\|SV-S\widehat V\|_F^2
=
\tr\!\left((V-\widehat V)^\top S^\top S(V-\widehat V)\right).
$$
Equivalently, the matrix
$$
V^\top S^\top S V=(SV)^\top(SV)
$$
is the target covariance of the attention-weighted value activations $SV$. It measures which value feature directions remain large after aggregation by the attention scores. Hence, $V^\top S^\top S V$ is an attention-induced value target covariance, while $V^\top V$ is only the raw-cache covariance of values.

Therefore, $Q^\top Q$ and $V^\top S^\top S V$ are covariance targets in our context: they are second-moment matrices induced by the downstream attention computation. This is why OSCAR estimates rotations from these attention-aware covariance targets rather than from raw-cache targets such as $K^\top K$ and $V^\top V$.

\subsection[Intuition of the OSCAR rotation factors]{Intuition of the combination $R_K = U_Q H_{\mathrm{Had}} P_K$}
\label{app:intuition_uhp}
\setcounter{MaxMatrixCols}{16}

The rotation applied to keys before quantization in OSCAR is the product of three orthogonal factors,
\[
  R_K \;=\; U_Q \, H_{\mathrm{Had}} \, P_K,
\]
where $U_Q$ is the eigenbasis of the key target covariance $C_Q = Q^\top Q$ (Sec.~\ref{app:covariance}), $H_{\mathrm{Had}}$ is the normalized Walsh--Hadamard matrix (Sec.~\ref{app:hadamard_transform}), and $P_K$ is a permutation matrix that we describe below. The same construction is used for $R_V$ with $C_S = V^\top S^\top S V$ in place of $C_Q$. This subsection explains, before proceeding to the formal proof in Sec.~\ref{app:ambient_basis_simple_variants}, why each factor is needed and why they appear in this order. Each factor addresses a distinct failure mode of per-group low-bit quantization.

\paragraph{Two effects of $R_K$.}
Recall from Sec.~\ref{app:covariance} that the rotated cache $\tilde K = K R_K$ is what the quantizer sees, and the resulting attention-logit distortion is
\[
  \bigl\| QK^\top - Q\widehat K^\top \bigr\|_F^{2}
  \;=\; \tr\!\left( R_K^\top C_Q R_K \cdot E_K \right),
\]
where $E_K$ is the residual covariance of $\tilde K - \mathcal Q(\tilde K)$. The rotation enters in two distinct places:
\begin{enumerate}[label=(\roman*),leftmargin=*]
  \item it shapes $R_K^\top C_Q R_K$, the query-importance metric in the rotated coordinates;
  \item it shapes $E_K$, because the quantizer's per-token, per-group min--max scale (Sec.~\ref{app:scale-determination}) depends on the channelwise distribution of $\tilde K$.
\end{enumerate}
The three factors $U_Q$, $H_{\mathrm{Had}}$, $P_K$ each address one of these effects in turn.

\paragraph{$U_Q$: align channels with query-importance directions.}
Eigendecompose
\[
  C_Q \;=\; U_Q \Lambda_Q U_Q^\top,
  \qquad \Lambda_Q = \diag(\lambda_1, \dots, \lambda_d),
  \qquad \lambda_1 \ge \cdots \ge \lambda_d \ge 0.
\]
Setting $R_K = U_Q$ diagonalizes the importance metric:
\[
  R_K^\top C_Q R_K \;=\; \Lambda_Q.
\]
Geometrically, the $j$-th channel of $\tilde K = K U_Q$ is the projection of each key onto the $j$-th query principal direction $u_j$, and $\lambda_j = \|Q u_j\|_2^{2}$ measures how much total query energy lies along $u_j$. Quantization noise on channel $j$ then propagates to the attention logits with weight $\lambda_j$: large-$\lambda$ channels are exactly those whose error is most amplified, while small-$\lambda$ channels can absorb error almost for free. Sec.~\ref{app:ambient_basis_simple_variants} formalizes this via a rearrangement-inequality argument on the surrogate $\tilde{\mathcal L}_K(R_k) = \tr(R_k^\top C_Q R_k E_K)$: under the frozen-residual assumption, $R_K = U_Q$ is the orthogonal rotation that minimizes the importance-weighted error.

This is the right first step but it leaves a per-group quantization problem. The eigenspectrum $\Lambda_Q$ of $C_Q$ in real LLM activations is sharply anisotropic: a few eigenvalues carry most of $\tr(C_Q)$. After $R_K = U_Q$, the rotated key channels inherit this imbalance, with a few channels having much larger variance than the rest. A per-group min--max quantizer with a single scale per group then sets its scale based on the largest channel in each group, and the smaller channels in the same group are quantized at a granularity far coarser than their dynamic range. Hence $E_K$ has a few very large diagonal entries and many tiny ones, and $\tr(E_K)$ is dominated by the worst groups.

\paragraph{$H_{\mathrm{Had}}$: equalize on the importance metric, suppress outliers on $K$.}
The $U_Q$ step is chosen for the surrogate, which is a Q-side criterion ($C_Q = Q^\top Q$). The Hadamard composition serves two complementary purposes: on the surrogate side, it equalizes the rotated importance metric across channels; on the actual K distribution, it mixes outlier energy across channels so that the per-group min--max quantizer (Sec.~\ref{app:scale-determination}) sees comparable per-channel magnitudes. The first claim is exact; the second is what the standard outlier-suppression literature relies on~\citep{ashkboos2024quarot} and only requires that the Hadamard column weights all have equal magnitude.

The exact statement on the importance metric is the following.
\begin{lemma}[Diagonal equalization under Hadamard conjugation]
\label{lem:hadamard-diag}
Let $\Lambda \in \mathbb{R}^{d \times d}$ be diagonal and $H_{\mathrm{Had}}$ the normalized Walsh--Hadamard matrix. Then
\[
  \bigl(H_{\mathrm{Had}}^\top \Lambda\, H_{\mathrm{Had}}\bigr)_{ii} \;=\; \frac{1}{d}\tr(\Lambda), \qquad i = 1, \dots, d.
\]
\end{lemma}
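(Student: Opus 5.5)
The identity should follow from a direct entrywise expansion of the conjugated matrix, combined with the fact that every entry of the normalized Walsh--Hadamard matrix has the same magnitude. Write $\Lambda=\diag(\lambda_1,\dots,\lambda_d)$ and $H=H_{\mathrm{Had}}$. Since $\Lambda$ is diagonal, the $(i,i)$ entry of $H^\top\Lambda H$ is
\[
  \bigl(H^\top \Lambda H\bigr)_{ii} \;=\; \sum_{j=1}^d H_{ji}\,\lambda_j\,H_{ji} \;=\; \sum_{j=1}^d \lambda_j H_{ji}^2 .
\]
The whole claim therefore reduces to showing that $H_{ji}^2 = 1/d$ for all $i,j$.

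I would prove $H_{ji}^2=1/d$ by induction on the recursive definition in App.~\ref{app:hadamard_transform}, taking $d=2^m$ as the paper implicitly assumes. The base case is $H_1=[1]$, whose single entry has square $1=1/1$. For the inductive step, suppose every entry of $H_m$ has square $1/m$. Each entry of $H_{2m}$ is $\pm\frac{1}{\sqrt2}$ times an entry of $H_m$, so its square is $\frac12\cdot\frac1m=\frac{1}{2m}$. Substituting $H_{ji}^2=1/d$ into the display then gives $\sum_j \lambda_j/d=\tr(\Lambda)/d$ for every $i$, which is the statement.

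I see essentially no obstacle here; this is bookkeeping. The one point that needs care is the dimension assumption. The lemma relies only on equal-magnitude entries together with orthogonality. Orthogonality is not even needed for the diagonal identity itself, only for the interpretation of $H^\top\Lambda H$ as a change of basis. So I would remark that the same argument holds verbatim for any orthogonal matrix whose entries all have modulus $1/\sqrt d$, such as any normalized Hadamard matrix. It does not extend to non-power-of-two constructions, which have unequal entries. As a consistency check, summing the diagonal gives $\tr(H^\top\Lambda H)=\tr(\Lambda)$, as orthogonality requires.
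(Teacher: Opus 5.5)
Your proof is correct and follows the same argument as the paper: expand $\bigl(H_{\mathrm{Had}}^\top\Lambda H_{\mathrm{Had}}\bigr)_{ii}=\sum_j (H_{\mathrm{Had}})_{ji}^2\Lambda_{jj}$ and use that every entry has magnitude $1/\sqrt d$; your induction on the recursive definition only makes explicit the entry-magnitude fact that the paper asserts without proof. One correction to your closing aside: normalized Hadamard matrices of non-power-of-two orders (e.g.\ $d=12$ or $20$, via the Paley construction) also have all entries of modulus $1/\sqrt d$, so the identity does extend to them; it fails only for constructions that break the equal-magnitude property, such as block-diagonal Hadamard.
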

\begin{proof}
Every entry of $H_{\mathrm{Had}}$ has magnitude $1/\sqrt{d}$. Then $(H_{\mathrm{Had}}^\top \Lambda H_{\mathrm{Had}})_{ii} = \sum_j H_{\mathrm{Had}}[j, i]^2 \, \Lambda_{jj} = \frac{1}{d}\sum_j \Lambda_{jj} = \frac{1}{d}\tr(\Lambda)$, independent of $i$.
\end{proof}
Applying the lemma to $\Lambda = \Lambda_Q$ (which is diagonal because of the $U_Q$ step) gives
\[
  \bigl(R_K^\top C_Q R_K\bigr)_{ii} \;=\; \bigl(H_{\mathrm{Had}}^\top \Lambda_Q H_{\mathrm{Had}}\bigr)_{ii} \;=\; \frac{1}{d}\tr(C_Q), \qquad \forall\, i.
\]
The peaky eigenspectrum that $U_Q$ exposed has been replaced by a uniform diagonal: every channel now carries the same importance weight $\tr(C_Q)/d$ in the rotated frame.

The K-side effect is heuristic but practically important. After $R_K = U_Q H_{\mathrm{Had}}$, each row of $\tilde K = K R_K$ is a Walsh-signed sum of the principal-direction projections $K U_Q$, with every entry of weight $\pm 1/\sqrt{d}$. A per-token outlier on any single principal direction is thus redistributed in magnitude $1/\sqrt{d}$ across all $d$ channels, shrinking the per-token max by roughly the same factor and bringing the per-group min--max scale into a much tighter range. The frozen residual $E_K$ becomes much smaller in trace and far more uniform across $i$. The price paid is that the importance metric is no longer diagonal: $R_K^\top C_Q R_K = H_{\mathrm{Had}}^\top \Lambda_Q H_{\mathrm{Had}}$ has off-diagonals that couple channels. In the regime where per-group quantization noise dominates $\tr(E_K)$, the reduction in $\tr(E_K)$ wins over the loss of importance alignment.

\paragraph{Empirical aside: $U_Q$ does \emph{not} accidentally diagonalize $\Sigma_K$.}
One might hope that, because $W_Q$ and $W_K$ are trained on the same activation distribution, $C_Q = Q^\top Q$ and $\Sigma_K = K^\top K$ would share an eigenbasis, so that $U_Q$ would simultaneously serve both the Q-side (importance) and K-side (variance) objectives. Empirically, this is not the case. On Qwen3-8B with $2000$ calibration tokens per layer, the top-8 eigenvector self-alignment $\sum_{k=1}^{8} |U_Q^{Q\top} U_Q^{K}|/8$ ranges from $0.05$ to $0.15$ across the 36 layers --- essentially indistinguishable from the random alignment $1/\sqrt{d} \approx 0.09$ expected at $d = 128$. Rotating $\Sigma_K$ by $U_Q$ typically \emph{reduces} the fraction of $\Sigma_K$'s energy concentrated on the diagonal (e.g., layer 1: 0.90 raw vs.\ 0.09 after $U_Q$ rotation; layer 16: 0.16 vs.\ 0.07). Figure~\ref{fig:qk-alignment-layer16} shows a representative layer.

\begin{figure}[h]
  \centering
  \includegraphics[width=\linewidth]{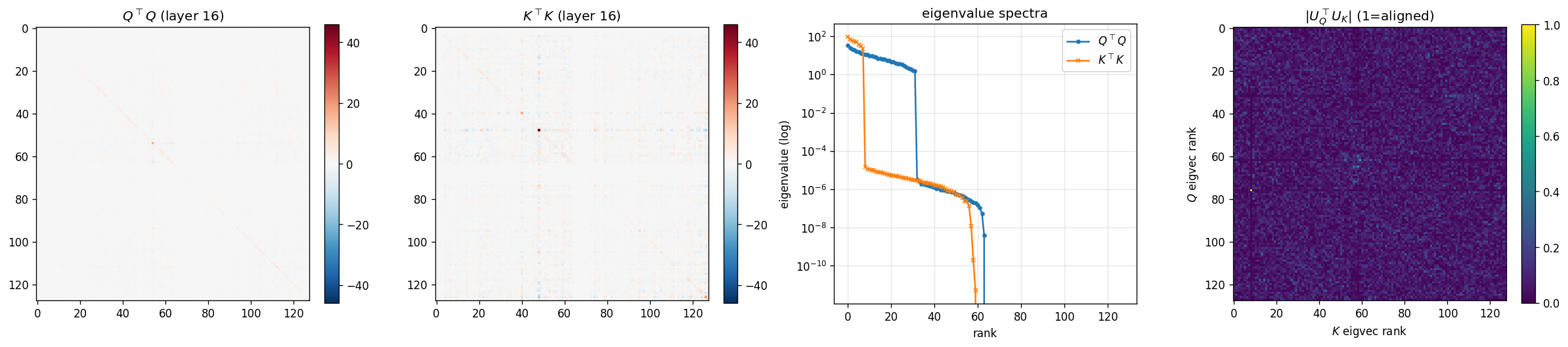}
  \caption{Layer 16 of Qwen3-8B. From left: heatmap of $Q^\top Q$; heatmap of $K^\top K$ (same colour scale); eigenvalue spectra; absolute eigenvector alignment $|U_Q^\top U_Q|$ with both bases sorted by descending eigenvalue. A diagonal-bright alignment matrix would indicate a shared eigenbasis; the observed uniform-texture pattern shows the eigenbases are nearly uncorrelated.}
  \label{fig:qk-alignment-layer16}
\end{figure}

The construction is nevertheless robust because the two factors target different things. Lemma~\ref{lem:hadamard-diag} guarantees exact diagonal equalization of the importance metric $C_Q$ regardless of the eigenbasis of $\Sigma_K$, since the lemma only relies on $\Lambda_Q$ being diagonal in the post-$U_Q$ frame. The K-side outlier-suppression effect of $H_{\mathrm{Had}}$ described above also does not require alignment between $U_Q$ and the eigenbasis of $\Sigma_K$: the Walsh-signed mixing shrinks per-token outliers by a factor of $1/\sqrt{d}$ no matter which orthogonal basis $K$ was placed in beforehand. The Q-side ($U_Q$) and K-side ($H_{\mathrm{Had}}$) factors thus address genuinely independent objectives and compose without interference, rather than relying on an approximate $W_Q \approx W_K$ similarity that does not hold in practice.

\paragraph{$P_K$: balance importance across groups.}
Hadamard mixing equalizes \emph{marginal} channel variance, but the per-group quantizer in Sec.~\ref{app:scale-determination} acts on contiguous blocks of $G_K$ channels at a time. What matters at the group level, beyond marginal variance, is whether each group sees a balanced sample of the underlying importance directions: a group whose $G_K$ channels are dominated by the same handful of large-$\lambda_j$ eigenvectors is statistically harder to quantize than a group whose channels span the full range of $\lambda_j$.

OSCAR uses a \emph{permuted bit-reversal} (PBR) ordering. Let $\sigma$ be the descending-eigenvalue permutation, so that $\lambda_{\sigma(0)} \ge \lambda_{\sigma(1)} \ge \cdots$, and let $\beta$ be the bit-reversal permutation on $\{0, 1, \dots, d-1\}$. The permutation matrix $P_K$ places the eigenvector with the $k$-th largest eigenvalue at position $\beta(k)$, i.e.\ $(P_K)_{:,\beta(k)} = e_{\sigma(k)}$. For $d = 128$, this yields the placement
\[
  \text{top-1} \mapsto 0,\quad
  \text{top-2} \mapsto 64,\quad
  \text{top-3} \mapsto 32,\quad
  \text{top-4} \mapsto 96,\quad
  \text{top-5..8} \mapsto 16, 80, 48, 112,\;\;\dots
\]
The defining property of bit-reversal is that for any power-of-two group size $G \mid d$, the top-$d/G$ eigenvectors land in $d/G$ \emph{distinct} groups, exactly one per group; the same recursive balance holds at every coarser binary level. Composed with the Hadamard, this means each per-group quantization block sees one representative from each level of the importance hierarchy, regardless of which group size $G_K$ is chosen at deployment.

\paragraph{Empirical aside: errors separate after attention.}
Figure~\ref{fig:app_error_propagation} compares layer-wise errors on Qwen3-4B-Thinking-2507, measured on raw KV tensors and on the downstream quantities consumed by attention.
OSCAR is not designed to minimize plain Euclidean reconstruction error of $K$ or $V$, so its raw $K$-MSE and $V$-MSE are not always dramatically smaller than rotation-only baselines.
The gap becomes much clearer after the attention computation: the errors in $QK^\top$ and $SV$ are substantially lower, and the advantage further propagates to attention-output errors.
This is exactly what the covariance targets optimize: the key rotation is aligned with query-side covariance, while the value rotation is aligned with the score-weighted value covariance.
The trend is consistent with the $U_K$ intuition above: aligning channels with query-importance directions need not minimize raw $K$ error, but it directly lowers the error after multiplication by $Q$; analogously, the value target is chosen so that the advantage appears after aggregation by $S$.

\begin{figure}[ht]
  \centering
  \captionsetup[subfigure]{justification=centering,singlelinecheck=true}
  \begin{subfigure}[t]{0.47\linewidth}
    \centering
    \includegraphics[width=\linewidth]{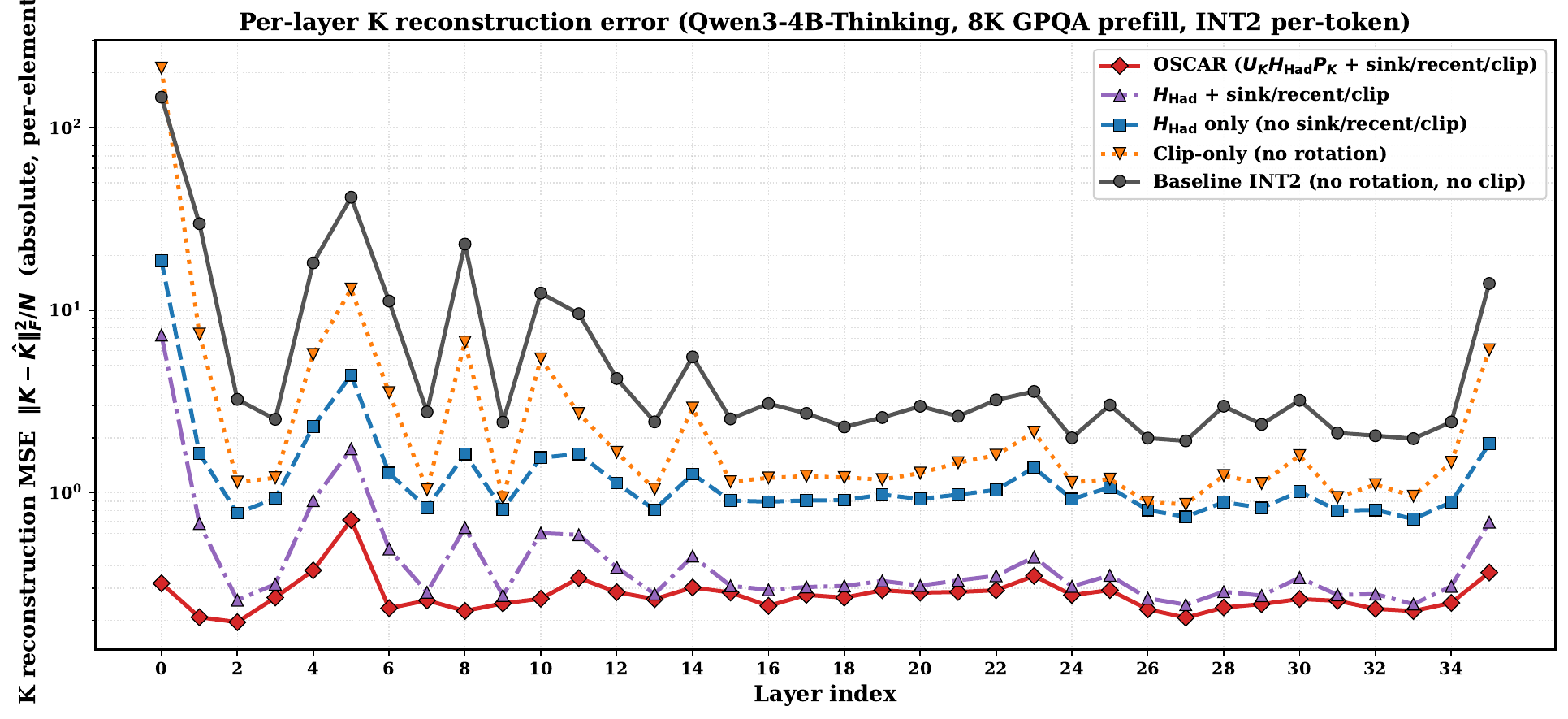}
    \caption{$K$ MSE}
  \end{subfigure}
  \hfill
  \begin{subfigure}[t]{0.47\linewidth}
    \centering
    \includegraphics[width=\linewidth]{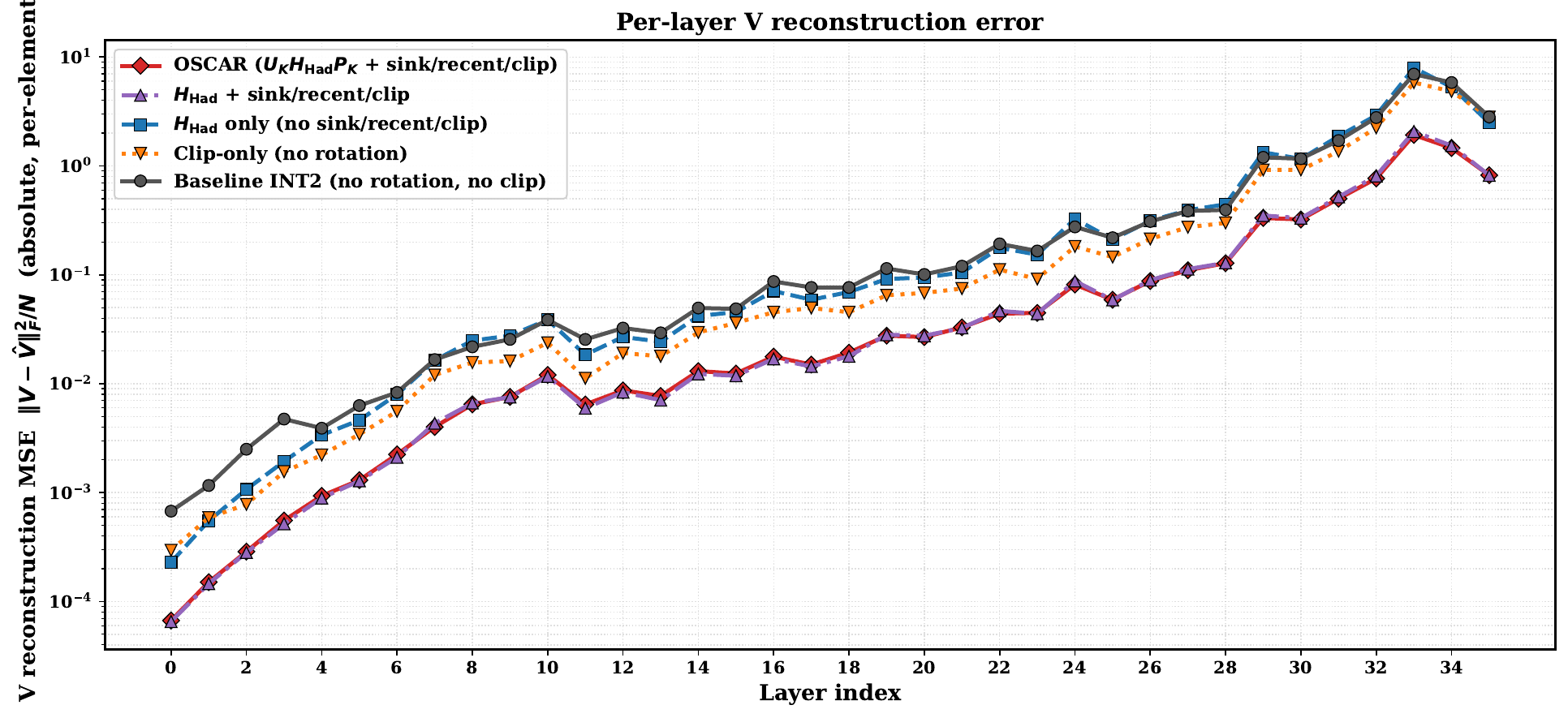}
    \caption{$V$ MSE}
  \end{subfigure}

  \vspace{0.4em}
  \begin{subfigure}[t]{0.47\linewidth}
    \centering
    \includegraphics[width=\linewidth]{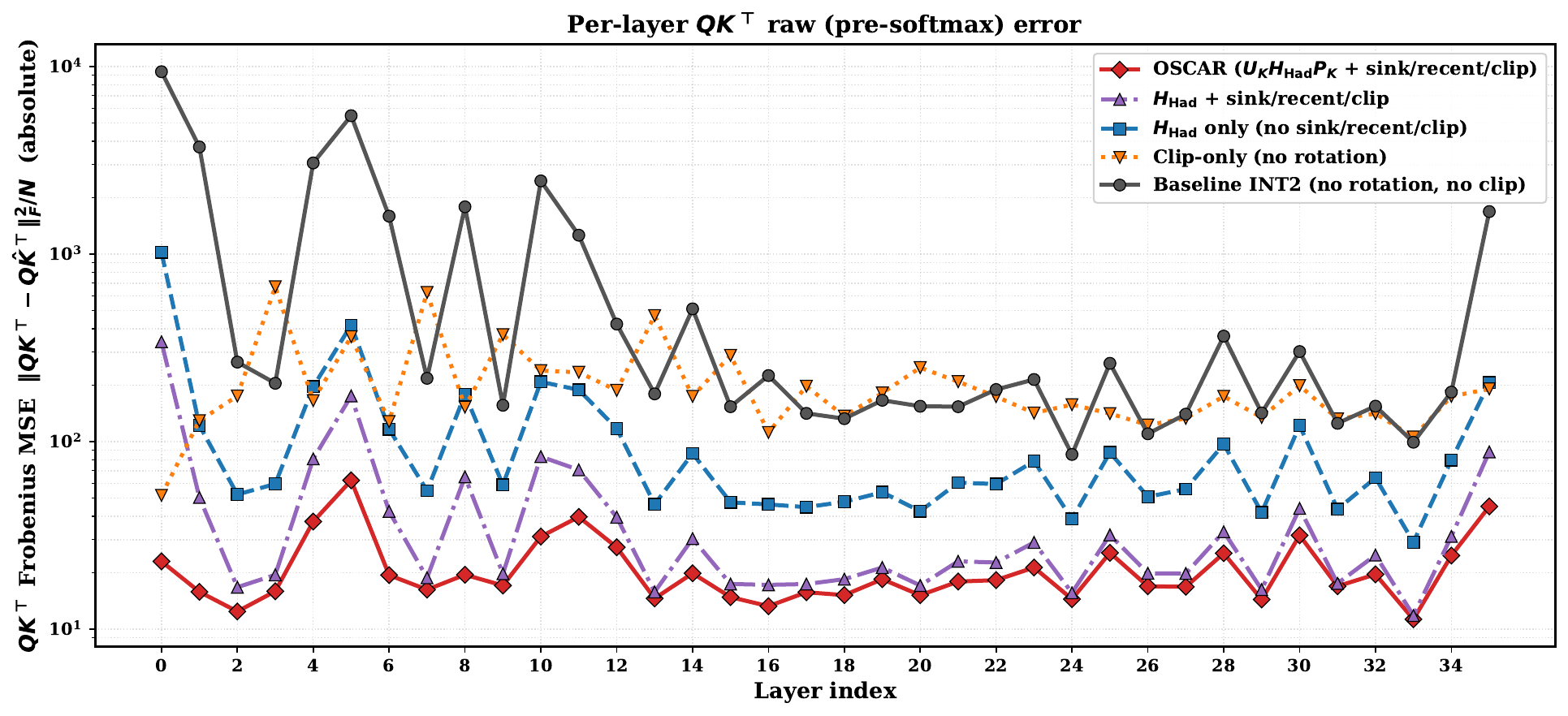}
    \caption{$QK^\top$ MSE}
  \end{subfigure}
  \hfill
  \begin{subfigure}[t]{0.47\linewidth}
    \centering
    \includegraphics[width=\linewidth]{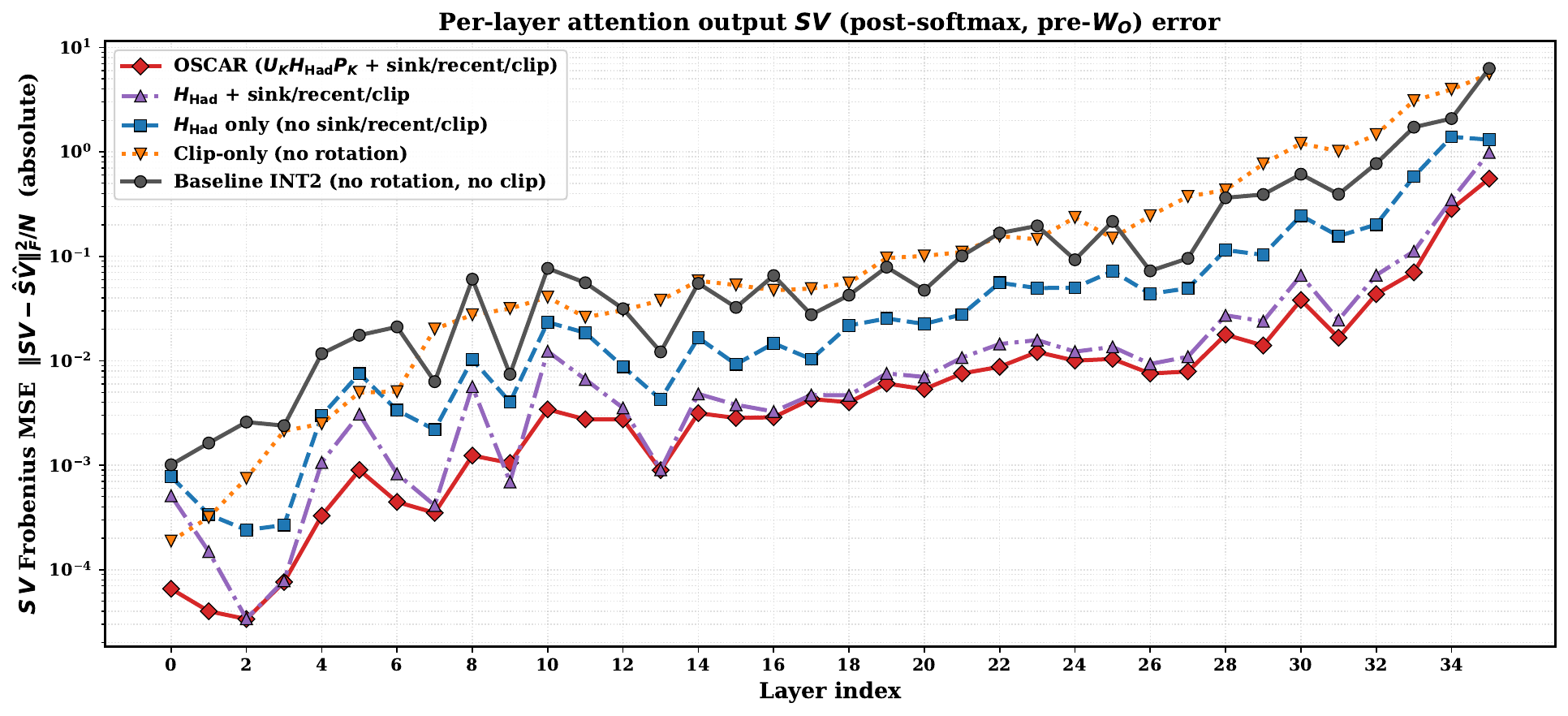}
    \caption{$SV$ MSE}
  \end{subfigure}

  \vspace{0.4em}
  \begin{subfigure}[t]{0.47\linewidth}
    \centering
    \includegraphics[width=\linewidth]{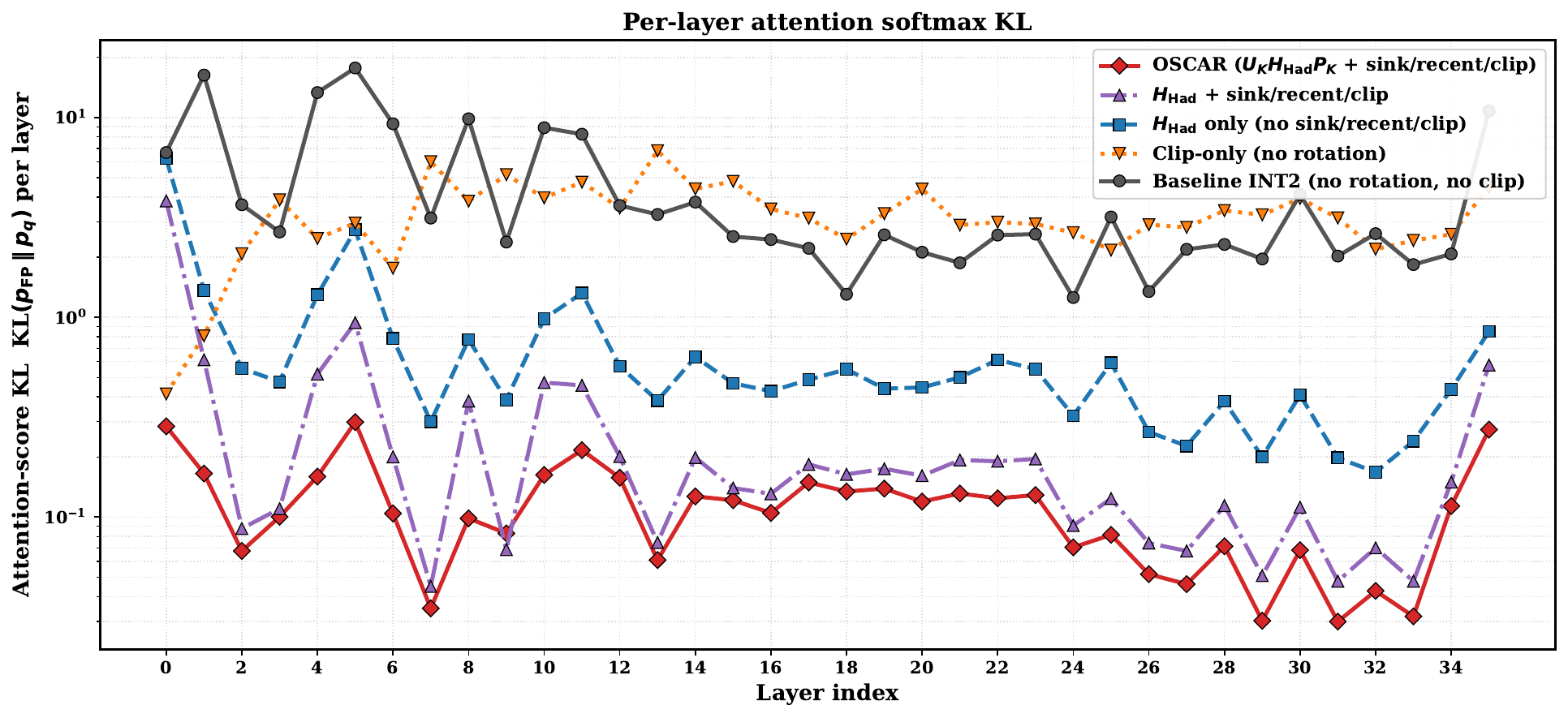}
    \caption{Attention KL}
  \end{subfigure}
  \hfill
  \begin{subfigure}[t]{0.47\linewidth}
    \centering
    \includegraphics[width=\linewidth]{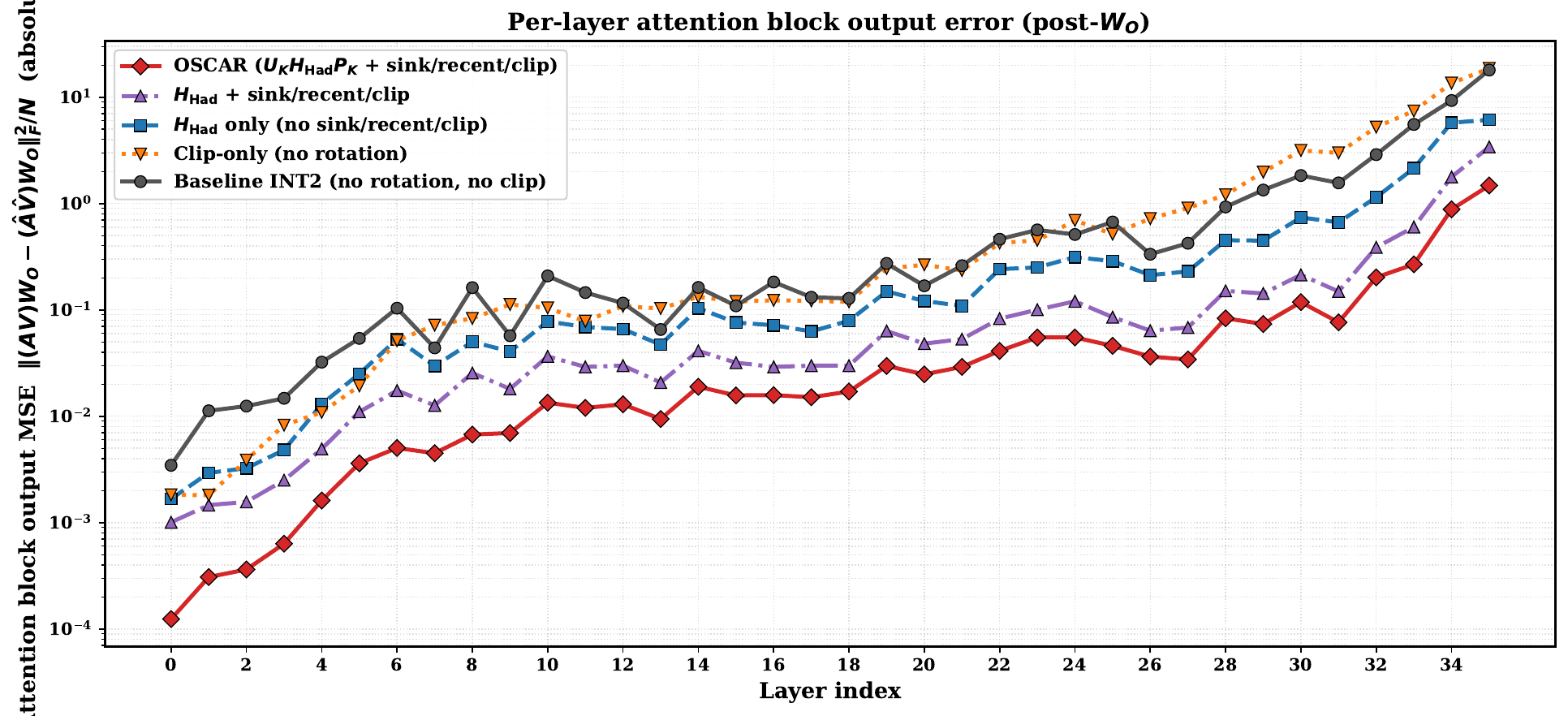}
    \caption{Attention-output MSE}
  \end{subfigure}
  \caption{\textbf{OSCAR's advantage appears on attention-consumed quantities on Qwen3-4B-Thinking-2507.}
  Raw $K$/$V$ reconstruction MSE alone does not fully explain downstream quality, because OSCAR does not optimize raw-cache reconstruction.
  Instead, it uses query- and score-aware covariance targets, which more directly reduce $QK^\top$, $SV$, and attention-output errors.}
  \label{fig:app_error_propagation}
\end{figure}

\paragraph{Worked example: Qwen3-4B-Thinking, layer 10, $d = 128$.}
We illustrate the three factors on real activations rather than a stylized spectrum. Calibration sample: layer 10 of Qwen3-4B-Thinking-2507 (32 query heads, 8 key/value heads, GQA ratio $= 4$), $T = 8000$ tokens of GPQA-diamond context. We use the production importance metric --- $Q$-side query covariance, averaged within each GQA group of $4$ query heads and then across all $8$ kv-heads,
\[
  C_Q \;=\; \frac{1}{H_{\mathrm{kv}}} \sum_{h=1}^{H_{\mathrm{kv}}} \frac{1}{T \cdot g} \sum_{i \in \mathcal G_h} Q_i^\top Q_i,
\]
The decode-time per-group min--max quantizer uses INT2 with $G_K = 64$ (two groups per head); we deliberately pick $G_K < d$ here so the PBR step has a non-trivial effect (see Step~3 for the $G_K = d$ degenerate case). The $K$ activations we display are from kv-head 0; the shared rotation $R_K = U_Q H_{\mathrm{Had}} P_K$ is the one produced by \texttt{compute\_qqt}.

\textit{Spectrum at layer 10.} Eigendecomposing $C_Q = U_Q \Lambda_Q U_Q^\top$ yields an anisotropic but not pathologically peaky spectrum (sorted descending),
\[
  \begin{aligned}
  \lambda_{1\dots 8} &\approx (162.5,\;85.5,\;35.6,\;17.4,\;11.6,\;7.5,\;6.1,\;5.0),\\
  \lambda_{121\dots 128} &\approx (0.24,\;0.22,\;0.22,\;0.20,\;0.18,\;0.05,\;0.027,\;0.009).
  \end{aligned}
\]
The trace is $\tr(C_Q) \approx 443$. The top eigendirection $\lambda_1$ alone accounts for $36.6\%$ of the trace and the top eight carry $74.7\%$: $\lambda_1 / \bar\lambda \approx 46.9$ where $\bar\lambda = \tr(C_Q)/d = 3.46$. Importance is concentrated in a low-rank subspace, but not as extremely as one might guess from $K$'s own variance --- instead, covariance $Q^\top Q$ reflects what the attention layer \emph{actually queries}, which is a softer concentration than where $K$'s variance happens to land.

\textit{One token, all 128 channels, split into two $G_K = 64$ groups.} The INT2 per-group min--max quantizer fits its scale to $\bigl(\max_{i \in g} \tilde K_{t,i} - \min_{i \in g} \tilde K_{t,i}\bigr) / 3$ per token $t$ per group $g$. We visualize what the quantizer sees by printing all $128$ channels for a single typical non-sink token ($t = 5$, kv-head 0), reshaped into an $8 \times 16$ grid (channel $j$ at grid position $(\lfloor j/16 \rfloor,\; j \bmod 16)$). The horizontal rule splits the grid into the two $G_K = 64$ groups: rows $0$--$3$ are group $0$ (channels $0$--$63$), rows $4$--$7$ are group $1$ (channels $64$--$127$). Each matrix is annotated with the per-group max--min.

The raw token in the channel basis:
\begin{equation*}
\resizebox{\textwidth}{!}{$
K_{t,\cdot} \;=\;
\left[
\begin{array}{rrrrrrrrrrrrrrrr}
 3.42 & -0.61 &  1.59 &  0.27 & -3.14 & -0.05 &  3.55 &  1.09 &  2.05 & -1.05 &  1.86 & -0.03 & -0.42 & -0.89 & -1.54 &  2.06 \\
-1.74 & -0.15 & -0.39 & -0.89 & -0.02 &  2.55 &  0.11 &  1.79 & -0.28 &  0.25 & -1.40 & -6.09 & -1.04 & -0.12 &  0.64 & -0.95 \\
 9.06 &  1.17 &  0.08 & -1.49 &  0.29 &  1.13 &  0.47 &  0.17 &  1.22 &  0.55 & 14.19 & -0.26 & -0.09 &  0.20 &  2.39 &  0.40 \\
 1.09 & -0.06 & \mathbf{-30.62} &  0.00 &  1.03 &  0.57 & -1.30 &  2.47 & -0.11 & -0.08 &  0.56 & -0.47 &  0.45 & -0.88 &  2.31 & -1.05 \\
\hline
 0.50 &  0.44 &  0.10 & -0.35 & -0.78 &  1.46 & -1.16 &  2.56 &  1.64 & -0.09 &  0.90 &  2.33 &  1.09 &  2.72 &  0.95 &  0.20 \\
 1.30 &  1.82 &  0.54 &  1.27 & -1.05 &  0.50 & -4.47 &  1.20 &  0.32 &  1.36 & -0.81 & -0.04 &  2.27 & -0.77 &  0.64 &  1.27 \\
 0.03 & -0.20 &  0.41 &  0.58 &  0.79 &  1.05 &  0.46 & -0.09 &  0.49 &  0.29 & -0.02 &  0.20 &  1.36 &  0.77 & -1.01 & -2.33 \\
 1.19 & -1.77 & -0.01 & -0.49 &  0.47 &  0.91 &  0.42 &  0.17 &  1.38 & -0.39 & -0.00 &  0.43 & -0.98 & -0.50 &  0.17 & -0.50
\end{array}
\right]
$}
\end{equation*}
\centerline{group $0$: $\max - \min = \boldsymbol{44.81}$ \quad|\quad group $1$: $\max - \min = \boldsymbol{7.19}$.}

The two groups are wildly unbalanced: group~$0$ holds both outliers ($-30.62$ at $j = 50$ and $14.19$ at $j = 42$); group~$1$ is benign. Group $0$'s INT2 scale must fit a range of $44.81$, wasting two bits on a few extreme values while the other $\sim 60$ channels of that group quantize at granularity an order of magnitude coarser than their actual dynamic range.

\textit{Step 1: $\tilde K = K U_Q$ (eigenbasis rotation).} Projecting $K$ onto the $Q^\top Q$ eigendirections gives:
\begin{equation*}
\resizebox{\textwidth}{!}{$
(K U_Q)_{t,\cdot} \;=\;
\left[
\begin{array}{rrrrrrrrrrrrrrrr}
 0.16 & -1.75 &  1.86 & -2.42 &  1.11 &  1.05 & -0.69 &  0.26 &  1.11 & -0.55 &  3.39 & -1.80 &  1.78 &  1.65 & -1.49 & -1.32 \\
 0.05 & -0.18 & -2.00 & -1.27 &  3.20 & -1.47 &  0.31 & -1.07 &  0.87 & -1.54 & -3.78 & -0.89 &  1.60 &  1.03 &  0.50 &  0.55 \\
 0.49 &  0.55 & -0.90 & -0.05 &  0.17 &  0.20 &  0.78 & -1.21 & -1.06 & -1.70 &  0.63 &  0.12 & -0.28 &  2.99 &  2.88 & -0.97 \\
-1.42 &  2.12 & -0.94 & -2.88 &  1.30 & -0.42 &  1.37 &  0.58 & -1.46 & -2.04 &  0.28 & -0.28 & -2.63 & -0.20 & -1.77 & -0.83 \\
\hline
 0.50 &  0.12 & -1.19 &  0.81 & -1.90 &  0.40 & -1.31 & -0.15 & -2.31 &  1.52 &  3.20 & -0.46 & -0.77 & -0.09 & -1.34 &  1.72 \\
 0.40 & -0.72 & -0.57 &  0.40 &  0.84 & -0.72 & -1.94 & -0.11 &  1.26 &  0.07 &  0.09 & -0.06 &  0.87 & -0.92 & -0.63 &  1.19 \\
-2.31 &  1.56 &  1.90 &  1.26 &  1.16 & -0.72 & -0.56 & -0.48 &  1.21 &  0.20 & -0.26 & -1.78 &  1.71 & -1.28 &  1.07 &  1.27 \\
-3.32 &  0.47 & -0.43 & -1.77 & -1.42 &  1.42 &  4.99 & -1.91 & -0.25 & -1.02 &  0.51 & -1.65 & -1.11 & -8.07 & 15.28 & \mathbf{29.77}
\end{array}
\right]
$}
\end{equation*}
\centerline{group $0$: $\max - \min = \boldsymbol{7.16}$ \quad|\quad group $1$: $\max - \min = \boldsymbol{37.85}$.}

The raw outlier hasn't been destroyed; it has been \emph{redistributed} into the $Q^\top Q$ eigenbasis. Because $U_Q$ is built from $Q^\top Q$, not $K^\top K$, the K-spike at $j=50$ does not project onto a single $U_Q$ direction --- it spreads across the lower-eigenvalue tail with its largest projection landing in the bottom-rank eigenchannels (the matrix's bottom-right entries: $29.77, 15.28, -8.07, \ldots$). The diagonal of $U_Q^\top C_Q U_Q$ is exactly $\Lambda_Q$, so the per-channel \emph{importance} is now exactly the eigenvalue, but the K-variance has been pushed into low-importance channels with no improvement in per-group dynamic range --- in fact the imbalance between groups has flipped and grown: group~$1$'s range jumps from $7.19$ (raw) to $37.85$, while group~$0$'s drops from $44.81$ to $7.16$. Per-group INT2 is now harder, not easier.

\textit{Step 2: $\tilde K = K U_Q H_{\mathrm{Had}}$ (Walsh-Hadamard mixing).} Multiplying by the orthonormal Walsh-Hadamard $H_{\mathrm{Had}} \in \mathbb R^{128 \times 128}$ rewrites each output channel as a $\pm 1$-weighted sum of all $128$ eigenchannels (each scaled by $1/\sqrt{128}$). The single-channel spike of $29.77$ is distributed across all $128$ outputs with mixed signs:
\begin{equation*}
\resizebox{\textwidth}{!}{$
(K U_Q H_{\mathrm{Had}})_{t,\cdot} \;=\;
\left[
\begin{array}{rrrrrrrrrrrrrrrr}
 2.24 &  1.08 & -3.73 &  0.26 & -5.76 &  0.92 &  4.08 & -3.39 & -3.72 &  0.29 &  5.55 & -2.67 &  3.19 & -3.70 & -4.16 &  1.80 \\
-0.87 &  0.64 &  4.12 & -3.18 &  5.12 &  1.16 & -6.79 &  0.97 &  1.91 & -2.08 & -4.91 &  4.21 & -1.67 &  1.97 &  4.47 & -0.63 \\
-2.86 &  0.66 &  6.73 &  1.83 &  4.37 &  1.49 & -5.77 &  0.72 &  1.52 & -0.05 & -4.94 &  3.97 & -3.09 &  2.28 &  4.34 & -0.37 \\
 1.90 & -1.30 & -6.06 & -1.90 & -2.58 &  3.27 &  4.18 & -3.10 & -2.31 &  0.88 &  5.04 & -2.61 & -0.07 & -1.10 & -2.17 &  4.09 \\
\hline
-3.71 &  2.31 &  6.89 & -1.79 &  1.12 & -0.12 & -5.61 &  3.44 &  4.10 &  0.85 & -3.94 &  3.17 & -3.51 &  0.62 &  \mathbf{7.03} & -5.15 \\
 4.11 &  1.23 & -4.84 &  0.19 & -3.60 &  2.16 &  3.36 & -0.01 & -4.31 &  0.17 &  3.97 & -2.02 &  2.49 & -0.04 & -4.11 &  2.57 \\
 3.72 &  2.20 & -3.51 &  1.67 & -5.30 & -0.01 &  4.53 & -2.49 & -3.30 &  0.32 &  2.06 & -4.40 &  3.14 & -1.10 & \mathbf{-6.54} &  1.40 \\
-2.85 &  1.72 &  5.21 & -0.32 &  4.97 &  1.00 & -5.68 & -1.24 &  3.87 & -1.45 & -5.40 &  1.67 & -4.30 &  1.20 &  3.73 & -3.39
\end{array}
\right]
$}
\end{equation*}
\centerline{group $0$: $\max - \min = \boldsymbol{13.52}$ \quad|\quad group $1$: $\max - \min = \boldsymbol{13.58}$.}

Every one of the $128$ entries now sits in a tight band $[-6.79,\, 7.03]$, the two groups are within $0.5\%$ of each other in dynamic range, and the absolute range collapses from $37.85$ (Step~1) to $13.58$ --- a $2.8{\times}$ reduction. The importance metric also equalizes exactly: Lemma~\ref{lem:hadamard-diag} applied to $\Lambda_Q$ predicts every diagonal entry of $H_{\mathrm{Had}}^\top \Lambda_Q H_{\mathrm{Had}}$ collapses to the constant $\tr(\Lambda_Q)/d = 443/128 \approx 3.46$, regardless of how peaky $\Lambda_Q$ was. Table~\ref{tab:worked-example-layer10} confirms $\max_i (R_K^\top C_Q R_K)_{ii} / \mathrm{mean}_i(\cdot) = 1.00$ exactly: \emph{the $46.9{\times}$ peak in the $Q^\top Q$ importance is compressed to $1.00{\times}$}. This is the key invariant Step~2 produces; the per-group dynamic range alone is not yet better than pure Hadamard (compare to Step~5 below).

\textit{Step 3: $\tilde K = K U_Q H_{\mathrm{Had}} P_K$ (permuted bit-reversal).} $P_K$ is a permutation matrix that reorders the $128$ output channels. After $U_Q H_{\mathrm{Had}}$ each Walsh output is a $\pm 1$-weighted sum of all eigenchannels, but the variances of those sums are not identical (driven by which Walsh row a high-eigenvalue eigvec activates most strongly in). PBR places the $k$-th largest eigenvector at Walsh output index $\beta(k)$ (the bit-reversal of $k$). At $G_K = 64$ this guarantees the top-$2$ eigenvectors split into the two groups (one each), the top-$4$ distribute one-per-quartile, and so on for any power-of-two grouping, so no single group inherits a disproportionate share of the high-energy eigenchannels.
\begin{equation*}
\resizebox{\textwidth}{!}{$
(K U_Q H_{\mathrm{Had}} P_K)_{t,\cdot} \;=\;
\left[
\begin{array}{rrrrrrrrrrrrrrrr}
 2.24 & -3.71 & -2.86 &  3.72 & -0.87 &  4.11 &  1.90 & -2.85 & -3.72 &  4.10 &  1.52 & -3.30 &  1.91 & -4.31 & -2.31 &  3.87 \\
-5.76 &  1.12 &  4.37 & -5.30 &  5.12 & -3.60 & -2.58 &  4.97 &  3.19 & -3.51 & -3.09 &  3.14 & -1.67 &  2.49 & -0.07 & -4.30 \\
-3.73 &  6.89 &  6.73 & -3.51 &  4.12 & -4.84 & -6.06 &  5.21 &  5.55 & -3.94 & -4.94 &  2.06 & -4.91 &  3.97 &  5.04 & -5.40 \\
 4.08 & -5.61 & -5.77 &  4.53 & -6.79 &  3.36 &  4.18 & -5.68 & -4.16 &  \mathbf{7.03} &  4.34 & \mathbf{-6.54} &  4.47 & -4.11 & -2.17 &  3.73 \\
\hline
 1.08 &  2.31 &  0.66 &  2.20 &  0.64 &  1.23 & -1.30 &  1.72 &  0.29 &  0.85 & -0.05 &  0.32 & -2.08 &  0.17 &  0.88 & -1.45 \\
 0.92 & -0.12 &  1.49 & -0.01 &  1.16 &  2.16 &  3.27 &  1.00 & -3.70 &  0.62 &  2.28 & -1.10 &  1.97 & -0.04 & -1.10 &  1.20 \\
 0.26 & -1.79 &  1.83 &  1.67 & -3.18 &  0.19 & -1.90 & -0.32 & -2.67 &  3.17 &  3.97 & -4.40 &  4.21 & -2.02 & -2.61 &  1.67 \\
-3.39 &  3.44 &  0.72 & -2.49 &  0.97 & -0.01 & -3.10 & -1.24 &  1.80 & -5.15 & -0.37 &  1.40 & -0.63 &  2.57 &  4.09 & -3.39
\end{array}
\right]
$}
\end{equation*}
\centerline{group $0$: $\max - \min = \boldsymbol{13.82}$ \quad|\quad group $1$: $\max - \min = \boldsymbol{9.36}$.}

The \emph{set} of $128$ values is identical to Step~2 --- only column-index positions are shuffled --- but $P_K$ has gathered most of the high-magnitude entries ($\{7.03, -6.54, -6.79, -6.06, 6.89, 6.73, \ldots\}$) into group~$0$ (rows~$0$--$3$) and left group~$1$ (rows~$4$--$7$) with a much tighter $[-5.15, 4.21]$ range. On this particular token PBR has increased one group's dynamic range slightly ($13.52 \to 13.82$) while shrinking the other's ($13.58 \to 9.36$); the residual on the tighter group drops as $(9.36/13.58)^2 \approx 0.48$ while the looser group is essentially unchanged. Averaged over all $T = 8000$ tokens, this gathering effect reduces the mean per-group max--min from $13.40$ ($U_Q H_{\mathrm{Had}}$) to $11.58$ (OSCAR), translating to a $1.23{\times}$ reduction in $\tr(E_K)$ ($208 \to 169$). At this layer, \emph{PBR carries most of OSCAR's quantization-residual improvement} over $U_Q H_{\mathrm{Had}}$ alone; the role of $U_Q H_{\mathrm{Had}}$ is to make Lemma~\ref{lem:hadamard-diag} fire so that the importance metric is equalized exactly ($\max/\mathrm{mean} = 1.00$).

\textbf{Degenerate case ($G_K = d$).}  pure Hadamard, QuaRot-style. A natural simpler alternative is to drop $U_Q$ entirely and rotate by $H_{\mathrm{Had}}$ alone:
\begin{equation*}
\resizebox{\textwidth}{!}{$
(K H_{\mathrm{Had}})_{t,\cdot} \;=\;
\left[
\begin{array}{rrrrrrrrrrrrrrrr}
 2.20 & -0.20 &  3.44 &  2.25 & -1.33 &  2.33 &  3.83 &  2.71 & -2.34 & -5.96 &  4.20 &  5.20 & -3.46 & -0.77 &  4.42 &  4.48 \\
 \mathbf{7.15} &  5.23 & -4.17 & -3.54 &  5.85 &  5.28 & -3.61 & -1.41 &  1.05 &  3.94 & -1.22 & -2.85 &  1.48 &  1.03 &  2.05 & -0.46 \\
 1.02 & -1.19 & -2.61 & -2.01 &  0.93 &  3.00 & -1.01 & -3.19 &  3.09 &  2.20 & -5.85 & -5.58 &  4.90 &  3.06 & -4.39 & -3.55 \\
-3.07 & -3.65 &  0.48 &  4.83 & -3.59 & -2.92 &  3.38 &  5.09 & -2.81 & -0.89 &  1.22 &  0.46 & -3.49 & -2.21 &  2.88 &  0.84 \\
\hline
-1.53 &  1.03 &  0.94 &  0.19 & -2.30 &  0.20 &  5.16 &  1.89 & -1.37 & -3.52 &  4.74 &  4.87 & -3.51 & -4.25 &  4.14 &  5.80 \\
 5.48 &  5.35 & -4.44 & -1.88 &  7.12 &  5.46 & -1.44 & -2.71 &  1.37 &  3.77 &  0.45 & -2.59 &  3.21 &  2.03 & -0.67 &  0.81 \\
-1.55 &  2.51 & -2.80 & -3.34 &  0.29 &  1.81 & -2.03 & -0.88 &  5.68 &  4.70 & \mathbf{-6.89} & -5.69 &  1.38 &  1.07 & -4.79 & -2.27 \\
-3.94 & -4.43 &  2.41 &  5.23 & -2.12 & -5.19 &  2.62 &  2.32 & -1.78 & -2.21 &  0.40 &  0.43 & -0.79 & -1.48 &  1.86 &  2.20
\end{array}
\right]
$}
\end{equation*}
\centerline{group $0$: $\max - \min = \boldsymbol{13.12}$ \quad|\quad group $1$: $\max - \min = \boldsymbol{14.01}$.}

Hadamard mixing on its own balances the two groups ($13.12 \approx 14.01$) and on this token gives a per-group dynamic range comparable to $U_Q H_{\mathrm{Had}}$ (Step~2). The crucial structural difference is in the importance metric: $C_Q$ is not diagonal in the channel basis, so Lemma~\ref{lem:hadamard-diag} does not apply to $H_{\mathrm{Had}}^\top C_Q H_{\mathrm{Had}}$. We measure $\max / \mathrm{mean} = 1.72$ rather than $1.00$ --- the importance \emph{floor} is not equalized.

\begin{table}[ht]
  \centering\footnotesize
  \setlength{\tabcolsep}{4pt}
  \resizebox{\textwidth}{!}{%
  \begin{tabular}{lcccc}
    \toprule
    & \multicolumn{2}{c}{\emph{what the quantizer sees on $\tilde K = K R_K$}} & \emph{importance} & \emph{INT2 residual} \\
    \cmidrule(lr){2-3}\cmidrule(lr){4-4}\cmidrule(lr){5-5}
    Rotation $R_K$ & $\max|\tilde K|$ & per-group max--min ($G_K = 64$) & $\dfrac{\max_i (R_K^\top C_Q R_K)_{ii}}{\mathrm{mean}_i (\cdot)}$ & $\tr(E_K)$ \\
    \midrule
    $I$ (no rotation)                                  & 36.3  & 25.7          & 43.0          & 233           \\
    $H_{\mathrm{Had}}$ \ (pure Hadamard, QuaRot-style) & 11.1  & 13.3          & 1.72          & 206           \\
    $U_Q$ \ (eigenbasis alone)                         & 29.4  & 22.4          & 46.9          & \textbf{1354} \\
    $U_Q\,H_{\mathrm{Had}}$                            & 7.46  & 13.4          & \textbf{1.00} & 208           \\
    $U_Q\,H_{\mathrm{Had}}\,P_K$ (OSCAR)               & 7.46  & \textbf{11.6} & \textbf{1.00} & \textbf{169}  \\
    \bottomrule
  \end{tabular}
  }
  \caption{Layer 10 of Qwen3-4B-Thinking, kv-head 0, $d = 128$, INT2 per-group ($G_K = 64$, two groups per head), $T = 8000$ tokens. $C_Q = Q^\top Q$ averaged over GQA groups (production setting). $\max |\tilde K|$ is the largest entry across all $T \times d$ positions; \emph{per-group max--min} is $\mathbb E_t\bigl[\max_{i \in g} \tilde K_{t,i} - \min_{i \in g} \tilde K_{t,i}\bigr]$ averaged over the two $G_K = 64$ groups. The importance column measures how peaky the rotated importance metric's diagonal is ($1.00 = $ all channels equally important; Lemma~\ref{lem:hadamard-diag} forces this when $C_Q$ is pre-diagonalized by $U_Q$). $\tr(E_K)$ is the unweighted INT2 residual. At $G_K = d = 128$ (one group per head) the permutation $P_K$ would be a no-op and OSCAR collapses to $U_Q H_{\mathrm{Had}}$; we deliberately use $G_K = 64$ here so the PBR step has a non-trivial effect. Bold entries are extreme values referenced in the text.}
  \label{tab:worked-example-layer10}
\end{table}

The two ``what the quantizer sees'' columns make the OSCAR advantage on quantization residual concrete. Pure Hadamard cuts the per-group max--min from $25.7$ (raw) to $13.3$ ($1.9{\times}$); OSCAR cuts it further to $11.6$ ($2.2{\times}$ over raw). Since INT2 residual scales as the square of dynamic range, the $\tr(E_K)$ ratios should be $(25.7/11.6)^2 \approx 4.9$ and $(13.3/11.6)^2 \approx 1.3$, matching empirical $233/169 = 1.4$ and $206/169 = 1.2$ (the empirical ratios are smaller than the squared dynamic range because the per-token correlation structure of the residual matters; the relation is only proportional, not exact). PBR alone contributes the bulk of OSCAR's advantage over $U_Q H_{\mathrm{Had}}$ at this layer ($\tr(E_K)$: $208 \to 169$, $1.23{\times}$). The importance column tells the orthogonal story: only the $U_Q H_{\mathrm{Had}}$ and OSCAR rows achieve $\max/\mathrm{mean} = 1.00$, the exact diagonal equalization predicted by Lemma~\ref{lem:hadamard-diag}; pure Hadamard alone leaves it at $1.72$, and raw / $U_Q$ alone leave it at $43$--$47$.

\paragraph{Summary.}
The three factors are not interchangeable, and each provides a guarantee the others cannot:
\begin{itemize}[leftmargin=*]
  \item $U_Q$ exposes the importance spectrum by diagonalizing $C_Q$, so per-channel importance becomes exactly $\lambda_j$. By itself it makes per-group dynamic range \emph{worse} on real activations because the $K$-variance gets pushed into low-importance directions, not flattened.
  \item $H_{\mathrm{Had}}$, composed with $U_Q$, equalizes the importance-metric diagonal to a constant $\tr(C_Q)/d$ via Lemma~\ref{lem:hadamard-diag} ($\max/\mathrm{mean} = 1.00$ exactly). Without the $U_Q$ pre-step (pure Hadamard alone), the importance metric is not equalized ($\max/\mathrm{mean} = 1.72$ here), so quantization error in any single rotated channel can land disproportionately on a high-importance direction in $C_Q$ depending on the layer/token distribution.
  \item $P_K$ ensures that for any power-of-two group size $G_K < d$, the importance hierarchy is recursively balanced across groups. This is where most of OSCAR's empirical $\tr(E_K)$ reduction over $U_Q H_{\mathrm{Had}}$ comes from on real activations. When $G_K = d$ it is a column permutation within a single group and has no effect; the PBR step is strictly beneficial only for finer grouping.
\end{itemize}
The same three-factor construction defines $R_V = U_S H_{\mathrm{Had}} P_V$ on the value side, with $C_S = V^\top S^\top S V$ (the attention-score-weighted value covariance) in place of $C_Q$.

\subsection{Scale Determination and Per-token Clipping Details}
\label{app:scale-determination}
Quantization is applied after the composed rotations in Eq.~\eqref{eq:method_composed_rot}.  At decoding step $t$, the key and value rows are first mapped into the rotated coordinates
\[
\widetilde k_t = k_t R_K,
\qquad
\widetilde v_t = v_t R_V.
\]
Before quantization, the implementation applies token-wise percentile clipping separately to keys and values.  Given clip ratios $\rho_K,\rho_V\in(0,1]$, define
\begin{equation*}
\tau_t^{(K)}
=
\operatorname{quantile}_{\rho_K}
\Bigl(\{|\widetilde k_{t,c}|\}_{c=1}^{d}\Bigr),
\qquad
\tau_t^{(V)}
=
\operatorname{quantile}_{\rho_V}
\Bigl(\{|\widetilde v_{t,c}|\}_{c=1}^{d}\Bigr).
\end{equation*}
The clipped rows are
\[
\widetilde k_t^{\mathrm{clip}}
=
\operatorname{clip}\!\left(\widetilde k_t,-\tau_t^{(K)},\tau_t^{(K)}\right),
\qquad
\widetilde v_t^{\mathrm{clip}}
=
\operatorname{clip}\!\left(\widetilde v_t,-\tau_t^{(V)},\tau_t^{(V)}\right).
\]

In the current implementation, the default clip ratios are supplied as fixed runtime/calibration hyperparameters, with typical values $\rho_K=0.96$ and $\rho_V=0.92$.

For keys, the default implementation uses asymmetric INT2 quantization, optionally with groups along the channel dimension. Let $b=2$ and $q_{\max}=2^b-1=3$. For a group $g$ of a clipped rotated key row, write
\[
\widetilde k_{t,g}^{\mathrm{clip}}\in\mathbb{R}^{1\times G_K},
\]
where $G_K$ is the key quantization group size. The implementation computes a dynamic min--max scale and zero point for each token and group:
\[
a_{t,g}^{(K)}
=
\min_c \widetilde k_{t,g,c}^{\mathrm{clip}},
\qquad
b_{t,g}^{(K)}
=
\max_c \widetilde k_{t,g,c}^{\mathrm{clip}},
\]

\[
s_{t,g}^{(K)}
=
\frac{b_{t,g}^{(K)}-a_{t,g}^{(K)}}{q_{\max}},
\qquad
z_{t,g}^{(K)}
=
-\frac{a_{t,g}^{(K)}}{s_{t,g}^{(K)}}.
\]

The quantized and dequantized key group is then
\[
\mathcal Q^+\!\left(\widetilde k_{t,g}^{\mathrm{clip}}\right)
=
\operatorname{clip}\!\left(
\left\lfloor
\frac{\widetilde k_{t,g}^{\mathrm{clip}}}{s_{t,g}^{(K)}}+z_{t,g}^{(K)}
\right\rceil,
0,q_{\max}
\right),
\]

and
\[
\mathcal Q\!\left(\widetilde k_{t,g}^{\mathrm{clip}}\right)
=
\mathcal Q^-\!\left(
\mathcal Q^+\!\left(\widetilde k_{t,g}^{\mathrm{clip}}\right)
\right)
=
s_{t,g}^{(K)}
\left(
\mathcal Q^+\!\left(\widetilde k_{t,g}^{\mathrm{clip}}\right)
-
z_{t,g}^{(K)}
\right).
\]

Thus, unlike a fixed Frobenius-norm scale, the key scale is computed dynamically from the clipped rotated key row, independently for each token and each group. In our default configuration, $G_K=64$.

Values use the same affine asymmetric INT2 quantization rule. For a group $g$ of a clipped rotated value row, write
\[
\widetilde v_{t,g}^{\mathrm{clip}}\in\mathbb{R}^{1\times G_V},
\]
where $G_V$ is the value quantization group size. The implementation computes
\[
a_{t,g}^{(V)}
=
\min_c \widetilde v_{t,g,c}^{\mathrm{clip}},
\qquad
b_{t,g}^{(V)}
=
\max_c \widetilde v_{t,g,c}^{\mathrm{clip}},
\]
and
\[
s_{t,g}^{(V)}
=
\frac{b_{t,g}^{(V)}-a_{t,g}^{(V)}}{q_{\max}},
\qquad
z_{t,g}^{(V)}
=
-\frac{a_{t,g}^{(V)}}{s_{t,g}^{(V)}}.
\]
The quantized and dequantized value group is
\[
\mathcal Q^+\!\left(\widetilde v_{t,g}^{\mathrm{clip}}\right)
:=
\operatorname{clip}\!\left(
\left\lfloor
\frac{\widetilde v_{t,g}^{\mathrm{clip}}}{s_{t,g}^{(V)}}+z_{t,g}^{(V)}
\right\rceil,
0,q_{\max}
\right),
\]
and
\[
\mathcal Q\!\left(\widetilde v_{t,g}^{\mathrm{clip}}\right)
:=
s_{t,g}^{(V)}
\left(
\mathcal Q^+\!\left(\widetilde v_{t,g}^{\mathrm{clip}}\right)
-
z_{t,g}^{(V)}
\right).
\]

Finally, after quantization--dequantization in the rotated coordinates, the approximations are mapped back to the original coordinate system:
\[
\widehat k_t
=
\mathcal Q\!\left(\widetilde k_t^{\mathrm{clip}}\right)R_K^\top,
\qquad
\widehat v_t
=
\mathcal Q\!\left(\widetilde v_t^{\mathrm{clip}}\right)R_V^\top.
\]

\subsection{Proof of Theorem.~\ref{thm:ambient_basis_simple_variants}}
\label{app:ambient_basis_simple_variants}

We prove the key-side statement first.  The row-vector rotation convention gives the frozen-error surrogate
\[
\widetilde{\mathcal L}_K(R_k)
=
\tr\!\left(R_k^\top C_Q R_k E_K\right),
\qquad R_k^\top R_k=I_d,
\]
where
\[
C_Q=U_Q\Lambda_QU_Q^\top,
\qquad
\Lambda_Q=\diag(\lambda_1,\dots,\lambda_d),
\qquad
\lambda_1\ge\cdots\ge\lambda_d\ge 0,
\]
and
\[
E_K=\diag(\mu_1,\dots,\mu_d),
\qquad
\mu_1\le\cdots\le\mu_d.
\]
Here $E_K$ is treated as a fixed frozen residual covariance matrix.

Let
\[
R_k=U_QZ,
\qquad
Z^\top Z=I_d.
\]
Substituting into the objective gives
\begin{align*}
\widetilde{\mathcal L}_K(R_k)
&=
\tr\!\left(
Z^\top U_Q^\top C_Q U_Q Z E_K
\right)\\
&=
\tr\!\left(
Z^\top \Lambda_Q Z E_K
\right).
\end{align*}
Writing $Z=(z_{ij})$, we have
\[
\bigl(Z^\top\Lambda_QZ\bigr)_{ii}
=
\sum_{j=1}^d \lambda_j z_{ji}^2.
\]
Therefore,
\begin{equation}
\widetilde{\mathcal L}_K(R_k)
=
\sum_{i=1}^d\sum_{j=1}^d
\mu_i\lambda_j z_{ji}^2.
\label{eq:key_ambient_correct_expansion}
\end{equation}
Since $Z$ is orthogonal, the matrix $W=(w_{ji})$ with $w_{ji}=z_{ji}^2$ is doubly stochastic. Hence Eq.~\eqref{eq:key_ambient_correct_expansion} is a linear function over the Birkhoff polytope, and its minimum is attained at a permutation matrix.

Thus,
\[
\min_{R_k^\top R_k=I_d}
\widetilde{\mathcal L}_K(R_k)
=
\min_{\Pi\in\mathcal P_d}
\tr\!\left(\Pi^\top \Lambda_Q\Pi E_K\right),
\]
where $\mathcal P_d$ denotes the set of $d\times d$ permutation matrices. 
By the rearrangement inequality, since
\[
\lambda_1\ge\cdots\ge\lambda_d
\qquad\text{and}\qquad
\mu_1\le\cdots\le\mu_d,
\]
the minimum is achieved by pairing the largest $\lambda_j$ with the smallest $\mu_i$. 
This is achieved by the identity permutation. Therefore, one minimizer is
\[
Z=I_d,
\]
and hence
\[
R_k=U_Q.
\]

The value-side proof is identical. 
The value surrogate is
\[
\widetilde{\mathcal L}_V(R_v)
=
\tr\!\left(R_v^\top C_S R_v E_V\right),
\qquad
R_v^\top R_v=I_{d},
\]
where
\[
C_S=U_S\Lambda_SU_S^\top,
\qquad
\Lambda_S=\diag(\nu_1,\dots,\nu_{d}),
\qquad
\nu_1\ge\cdots\ge\nu_{d}\ge 0,
\]
and
\[
E_V=\diag(\eta_1,\dots,\eta_{d}),
\qquad
\eta_1\le\cdots\le\eta_{d}.
\]
Again, $E_V$ is treated as a fixed frozen residual covariance matrix.

Let
\[
R_v=U_SZ_V,
\qquad
Z_V^\top Z_V=I_{d}.
\]
Then
\begin{align*}
\widetilde{\mathcal L}_V(R_v)
&=
\tr\!\left(
Z_V^\top U_S^\top C_S U_S Z_V E_V
\right)\\
&=
\tr\!\left(
Z_V^\top\Lambda_SZ_VE_V
\right).
\end{align*}
Writing $Z_V=(\zeta_{ij})$, we obtain
\[
\widetilde{\mathcal L}_V(R_v)
=
\sum_{i=1}^{d}\sum_{j=1}^{d}
\eta_i\nu_j\zeta_{ji}^2.
\]
As before, $(\zeta_{ji}^2)$ is doubly stochastic.  The minimum is therefore attained at a permutation matrix. 
Since
\[
\nu_1\ge\cdots\ge\nu_{d}
\qquad\text{and}\qquad
\eta_1\le\cdots\le\eta_{d},
\]
the rearrangement inequality shows that the identity permutation is optimal. 
Thus one minimizer is
\[
Z_V=I_{d},
\]
and therefore
\[
R_v=U_S.
\]
This proves both statements.

\subsection{Justification of Surrogate Objectives in Theorem.~\ref{thm:ambient_basis_simple_variants}}
\label{app:surrogate_objective_justification}
To justify the surrogate objectives in Theorem~\ref{thm:ambient_basis_simple_variants}, we start from the downstream attention errors induced by quantizing keys and values. For the key branch, the empirical logit distortion on the calibration set is
\[
\ell_K(R_k)
=
\sum_{i=1}^N \sum_{j=1}^{i}
\bigl[q_i(k_j-\hat k_j)^\top\bigr]^2,
\]
where $\hat k_j=\mathcal Q(k_jR_k)R_k^\top$. Writing the rotated quantization residual as
\[
e_j^{(K)}(R_k):=\mathcal Q(k_jR_k)-k_jR_k,
\]
we have $\hat k_j-k_j=e_j^{(K)}(R_k)R_k^\top$, so that
\[
\ell_K(R_k)
=
\sum_{j=1}^N e_j^{(K)}(R_k)\,R_k^\top C_j R_k\,e_j^{(K)}(R_k)^\top,
\qquad
C_j:=\sum_{i=1}^j q_i^\top q_i.
\]
To obtain a tractable objective, we approximate the position-dependent matrices $C_j$ by the global query target covariance
\[
C_Q=\frac1N\sum_{i=1}^N q_i^\top q_i.
\]
Defining the frozen residual covariance (where $R_k$ is fixed)
\[
E_K:=\sum_{j=1}^N e_j^{(K)}(R_k)^\top e_j^{(K)}(R_k),
\]
the key-side frozen-error objective becomes, up to the constant normalization factor $\frac{1}{N}$,
\[
\tilde{\mathcal L}_K(R_k)
=
\tr(R_k^\top C_Q R_k E_K).
\]



Under a similar approximation, the value-side downstream error is heuristically, imitating that of key targets, represented by the frozen-error surrogate
\[
\tilde{\mathcal L}_V(R_v)
=
\tr\!\bigl(R_v^\top C_S R_v E_V\bigr).
\]

Note that the only difference between key and value targets is whether it is purely weighted by target covariance ($Q^\top Q$ for keys) or it is target covariance aware ($V^\top S^\top S V$). Therefore, it is reasonable that we use the same surrogate.  



\section{Algorithm Flow and Serving Procedure}
\label{app:algorithm_flow}

Algorithm~\ref{alg:OSCAR_pipeline} gives the concrete OSCAR procedure used in our experiments and SGLang implementation.

\begin{algorithm}[t]
\caption{OSCAR calibration, prefill, and decode}
\label{alg:OSCAR_pipeline}
\begin{algorithmic}[1]
\STATE \textbf{Hyperparameters:} sink length $S_0$, recent window $W$, block size $G{=}128$, bit-width $b{=}2$
\STATE \textbf{State:} BF16 sink/recent cache, rotated BF16 staging cache, packed INT2 history cache

\STATE \textbf{procedure Calibrate}$(\mathcal D)$
\FOR{each layer and KV head}
    \STATE $Q,K,V \leftarrow \mathrm{ForwardDump}(\mathcal D)$
    \STATE $S \leftarrow \softmax_{\mathrm{row}}(QK^\top/\sqrt d+M)$
    \STATE $C_K \leftarrow \frac{1}{N}\sum_n q_n^\top q_n$
    \STATE $C_S \leftarrow \frac{1}{N}V^\top S^\top S V$
    \STATE $U_Q \leftarrow \mathrm{EigVec}(C_K)$,\quad $U_S \leftarrow \mathrm{EigVec}(C_S)$
    \STATE $R_K \leftarrow U_Q H_{\mathrm{Had}}P_{\mathrm{br}}$,\quad $R_V \leftarrow U_S H_{\mathrm{Had}}P_{\mathrm{br}}$
    \STATE $c_K,C_S \leftarrow \mathrm{CalibrateClip}(K R_K, V R_V)$
\ENDFOR
\STATE \textbf{return} $\{R_K,R_V,c_K,C_S\}$
\STATE \textbf{end procedure}

\STATE \textbf{procedure Prefill}$(X)$
\STATE $Q,K,V \leftarrow \mathrm{Forward}(X)$
\STATE $\widetilde K \leftarrow K R_K$,\quad $\widetilde V \leftarrow V R_V$ \COMMENT{rotate before prefill write}
\STATE $\mathrm{Cache}_{\mathrm{sink}} \leftarrow (K,V)_{[:S_0]}$
\STATE $\mathrm{Cache}_{\mathrm{recent}} \leftarrow (K,V)_{[-W:]}$
\FOR{$i=S_0$ \TO $|X|-W-1$}
    \STATE \textsc{QuantizeAndWrite}$(\widetilde K_i,\widetilde V_i,i)$
\ENDFOR
\STATE \textbf{end procedure}

\STATE \textbf{procedure DecodeStep}$(x_t)$
\STATE $q_t,k_t,v_t \leftarrow \mathrm{Forward}(x_t)$
\STATE $\widetilde k_t \leftarrow k_tR_K$,\quad $\widetilde v_t \leftarrow v_tR_V$
\STATE Append $(k_t,v_t)$ to $\mathrm{Cache}_{\mathrm{recent}}$; append $(\widetilde k_t,\widetilde v_t)$ to rotated staging
\IF{$|\mathrm{Cache}_{\mathrm{recent}}|>W$}
    \STATE $(\widetilde k_j,\widetilde v_j) \leftarrow \mathrm{PopOldestRotatedStaging}()$
    \STATE \textsc{QuantizeAndWrite}$(\widetilde k_j,\widetilde v_j,j)$
\ENDIF
\STATE $\widehat K_{\mathrm{hist}},\widehat V_{\mathrm{hist}} \leftarrow \mathrm{DequantHistory}(\mathrm{Cache}_{\mathrm{INT2}})$
\STATE $\widehat K_{\mathrm{hist}}\leftarrow \widehat K_{\mathrm{hist}}R_K^\top$,\quad $\widehat V_{\mathrm{hist}}\leftarrow \widehat V_{\mathrm{hist}}R_V^\top$
\STATE $K_{\mathrm{all}},V_{\mathrm{all}} \leftarrow \mathrm{Concat}(\mathrm{Cache}_{\mathrm{sink}},\widehat K_{\mathrm{hist}},\mathrm{Cache}_{\mathrm{recent}})$
\STATE $o_t \leftarrow \softmax(q_tK_{\mathrm{all}}^\top/\sqrt d+M)V_{\mathrm{all}}$
\STATE \textbf{return} $o_t$
\STATE \textbf{end procedure}

\STATE \textbf{function QuantizeAndWrite}$(\widetilde k,\widetilde v,i)$
\STATE $\bar k \leftarrow \clip(\widetilde k,c_K)$,\quad $\bar v \leftarrow \clip(\widetilde v,C_S)$
\STATE $K_i^+ \leftarrow \mathrm{AffineINT2}(\bar k,G)$,\quad $V_i^+ \leftarrow \mathrm{AffineINT2}(\bar v,G)$
\STATE $\mathrm{Cache}_{\mathrm{INT2}}[i] \leftarrow (K_i^+,V_i^+,\mathrm{scale},\mathrm{zero})$
\STATE \textbf{end function}
\end{algorithmic}
\end{algorithm}


\section{Related Work}
\label{sec:related_work}

\textbf{KV-cache compression.}
The KV cache has become a major target for post-training compression because its memory footprint grows with both context length and batch size.
One line of work reduces or reprioritizes cached content through eviction, budgeted retention, attention-sink preservation, or saliency-aware precision assignment, as in H2O, SnapKV, PyramidKV, StreamingLLM, and ZipCache~\citep{zhang2023h2o,li2024snapkv,cai2024pyramidkv,xiao2024streamingllm,he2024zipcache}.
Other systems compress the cache through low-rank structure or cross-layer sharing, including GEAR, PALU, xKV, and MatryoshkaKV~\citep{kang2024gear,chang2024palu,chang2025xkv,lin2024matryoshkakv}.
Another line keeps all tokens but lowers the precision of the cached tensors, which is the setting most closely related to OSCAR.
KIVI, KVQuant, and WKVQuant study low-bit KV storage and identify channel-wise outliers, especially in keys, as a central obstacle to aggressive quantization~\citep{liu2024kivi,hooper2024kvquant,yue2024wkvquant}.
These methods show that KV quantization can be accurate, but they also leave open how far precision can be reduced without changing the serving layout.

\textbf{2-bit KV quantization.}
Pushing KV cache quantization to 2 bits usually requires additional structure beyond naive rounding.
SKVQ preserves a recent sliding window in higher precision and combines channel reordering with clipped dynamic quantization~\citep{duanmu2024skvq}.
Kitty promotes sensitive key-cache channels to higher precision and combines this with a residual cache~\citep{xia2025kitty}, while PM-KVQ uses progressive mixed precision and block-wise memory allocation for long-CoT models~\citep{liu2025pmkvq}.
Related mixed-precision analyses also argue that keys should generally receive more precision than values under a fixed memory budget~\citep{hariri2025quantize}.
RotateKV uses outlier-aware adaptive rotations for 2-bit KV quantization~\citep{su2025rotatekv}.
These methods improve accuracy, but their use of channel-wise precision, residual buffers, progressive bit allocation, or adaptive layouts can complicate integration with paged KV-cache serving and fused decode kernels.
OSCAR instead targets the same INT2 regime with fixed token-wise transforms and a uniform paged layout, so the quantization path can be fused into production serving.

\textbf{Rotation-based quantization.}
Orthogonal transforms are an attractive way to make low-bit quantization more accurate without changing tensor shapes.
QuaRot, HALO, and HOT show that rotations can reduce outlier concentration, redistribute activation energy, and improve low-bit rounding behavior~\citep{ashkboos2024quarot,ashkboos2025halo,kim2025hot,egiazarian2026bridging,jia2026saw}.
In the KV-cache setting, TurboQuant develops a data-oblivious online vector-quantization method based on random rotations, per-coordinate Lloyd--Max quantization, and QJL-style residual coding~\citep{zandieh2025turboquant}.
KVLinC combines Hadamard rotation with lightweight linear correction to reduce attention errors under aggressive KV quantization~\citep{kvlinc2025}.
OSCAR follows the same broad rotation principle, but asks which rotation target is appropriate for INT2 KV cache quantization. While \cite{liu2024spinquant} effectively mitigates outliers at 4-bit using learned rotations, it collapses at extreme INT2 levels because its rotations fail to align with the actual covariance structures consumed by downstream attention. Furthermore, it focuses primarily on theoretical algorithmic gains, lacking the custom kernels and system-level integration necessary for real-world deployment in modern serving frameworks.

\textbf{Covariance-aware calibration.}
A related family uses calibration data or covariance information to choose quantization or compression directions.
For weight quantization, GPTQ and QuIP use Hessian information to guide low-bit rounding~\citep{frantar2022gptq,chee2023quip}; for low-rank compression, Drone, ASVD, SVD-LLM, SVD-LLMv2, and CorDA use calibration activations or data-aware statistics to guide factorization~\citep{chen2021drone,yuan2023asvd,wang2024svdllm,wang2025svdllmv2,yang2024corda}.
On the KV-cache side, HaPPI, CARE, RecalKV, and CommonKV apply covariance- or Hessian-aware factorization to keys and values for KV-side projection or compression~\citep{kim2025happi,zhou2026care,yan2025recalkv,wang2025commonkv}.
Post-training quantization methods such as SmoothQuant, AWQ, GPTQ, and OmniQuant also use calibration data to estimate scales, equivalent transformations, clipping thresholds, or second-order surrogates before deployment~\citep{xiao2023smoothquant,lin2023awq,frantar2022gptq,shao2023omniquant}.
OSCAR specializes this calibration paradigm to attention: it estimates targets covariance induced by downstream attention, rather than raw-cache statistics, and uses them to derive fixed key/value rotations and clip thresholds for serving-time INT2 KV quantization.

\section{Discussion}
\label{sec:discussion}

\paragraph{Limitations.}
Our theoretical analysis explains why the proposed covariance-target rotations are optimal under a frozen-error surrogate with explicit assumptions (Appendix~\ref{app:theory}), but it is not yet a full proof of optimality for the complete autoregressive decoding process.
Tightening this gap---for example, by proving when the calibration estimator is unbiased or optimal for the true attention objective---is an important direction for future work.
We also focus on INT2 KV-cache quantization and do not explore other numerical formats or mixed-format designs, such as NVFP4, different weight precisions, or weight/KV precision combinations, which may create different accuracy--systems trade-offs.
Similarly, OSCAR uses a fixed bit-width for quantized history tokens and does not study per-token bit allocation or thought-adaptive token precision, which recent work such as ThinKV suggests may be useful for long reasoning traces~\citep{ramachandran2025thinkv}.
On the systems side, our implementation targets SGLang-style paged serving on H100 GPUs; additional tuning is needed to understand the best kernel design on other hardware such as B200.
Finally, OSCAR is orthogonal to channel-wise or mixed-precision methods: applying attention-aware/Hadamard rotations inside channel-wise schemes may further improve accuracy, but combining these ideas while preserving an efficient serving layout remains open.

\paragraph{Broader impact.}
OSCAR reduces KV-cache memory and serving cost, which can make long-context LLM inference more accessible and energy-efficient.
At the same time, cheaper long-context inference may also lower the cost of deploying powerful generative models, including applications with misuse risks; OSCAR does not introduce new model capabilities, but should be deployed with the same safety and usage controls as the underlying LLMs.

\section{Additional Experimental Results}
\label{app:additional_results}

\subsection{Additional Motivation Figure}
\label{app:motivation_headline}
The additional motivation figures are shown in Fig~\ref{fig:app_activation_uniformity} and Fig~\ref{fig:app_activation_uniformity_l11}.
\begin{figure}[ht]
    \centering
    \includegraphics[width=0.72\textwidth]{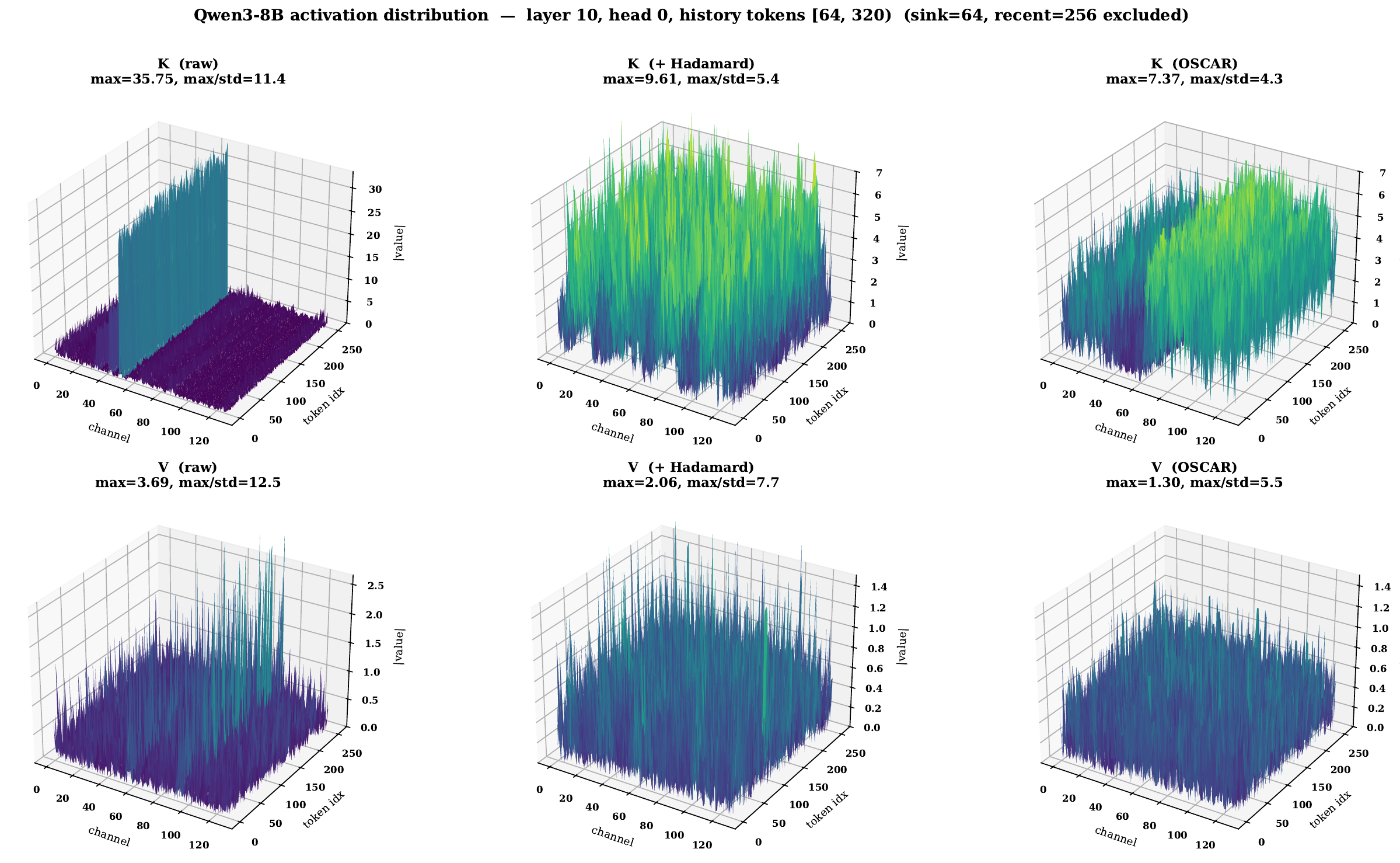}
    \caption{\textbf{Attention-aware rotations make history-token activations easier to quantize.}
    Hadamard mixing flattens raw activation peaks by spreading energy across channels. OSCAR goes further: the target covariance separates directions that matter more or less to attention, and the Hadamard transform then mixes each part into a more uniform range.}
    \label{fig:app_activation_uniformity}
\end{figure}

\begin{figure}[ht]
    \centering
    \includegraphics[width=0.72\textwidth]{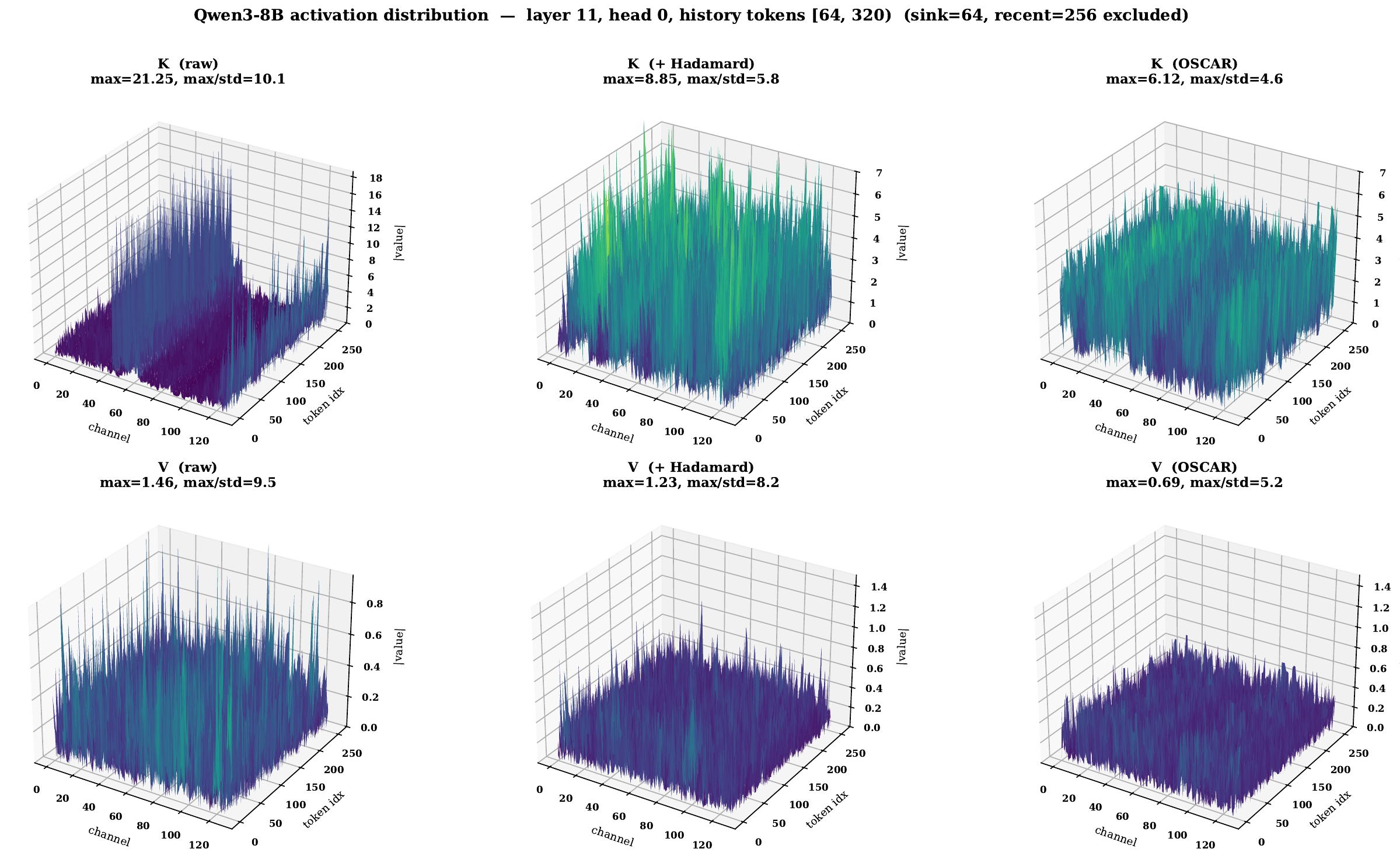}
    \caption{\textbf{Attention-aware rotations make history-token activations easier to quantize.}
    Hadamard mixing flattens raw activation peaks by spreading energy across channels. OSCAR goes further: the target covariance separates directions that matter more or less to attention, and the Hadamard transform then mixes each part into a more uniform range.}
    \label{fig:app_activation_uniformity_l11}
\end{figure}

\subsection{Full Table For Main Accuracy Run}
\label{app:qwen3_8b_full_runs}

Table~\ref{tab:app_qwen3_4b_main_full} reports the five independent runs for the Qwen3-4B-Thinking main OSCAR result. Table~\ref{tab:app_qwen3_8b_group0}--\ref{tab:app_qwen3_8b_blockg32} report the five independent runs used to summarize Qwen3-8B OSCAR accuracy under three INT2 grouping configurations. Table~\ref{tab:app_qwen3_32b_g128}--\ref{tab:app_qwen3_32b_g32} report the Qwen3-32B OSCAR runs. Table~\ref{tab:app_glm47_naive_int2}--\ref{tab:app_glm47_OSCAR_full} report the GLM-4.7-FP8 detailed runs for Naive INT2, QuaRot/Hadamard rotation, and OSCAR.

\begin{table}[ht]
\centering
\footnotesize
\caption{Full Qwen3-4B-Thinking results for the main OSCAR configuration (INT2 group size=128, sink=64, recent=256, $c_K=0.96$, $c_V=0.92$).}
\label{tab:app_qwen3_4b_main_full}
\setlength{\tabcolsep}{4pt}
\begin{tabular}{l c c c c c c c}
\toprule
\textbf{Task} & \textbf{Run 1} & \textbf{Run 2} & \textbf{Run 3} & \textbf{Run 4} & \textbf{Run 5} & \textbf{Mean} & \textbf{Std} \\
\midrule
GPQA             & 66.16 & 63.64 & 64.14 & 64.65 & 66.16 & 64.95 & 1.16 \\
HumanEval        & 93.17 & 91.59 & 92.44 & 90.85 & 93.17 & 92.24 & 1.02 \\
LiveCodeBench v6 & 45.03 & 43.27 & 45.61 & 48.54 & 44.44 & 45.38 & 1.97 \\
AIME25           & 63.33 & 63.33 & 63.33 & 60.00 & 70.00 & 64.00 & 3.65 \\
MATH500          & 92.59 & 92.79 & 92.99 & 93.19 & 92.18 & 92.75 & 0.39 \\
\bottomrule
\end{tabular}
\end{table}

\begin{table}[ht]
\centering
\footnotesize
\caption{Full Qwen3-8B results for INT2 group=128.}
\label{tab:app_qwen3_8b_group0}
\setlength{\tabcolsep}{4pt}
\begin{tabular}{l c c c c c c c}
\toprule
\textbf{Task} & \textbf{Run 1} & \textbf{Run 2} & \textbf{Run 3} & \textbf{Run 4} & \textbf{Run 5} & \textbf{Mean} & \textbf{Std} \\
\midrule
GPQA             & 53.03 & 54.04 & 54.55 & 57.58 & 55.05 & 54.85 & 1.70 \\
HumanEval        & 86.59 & 88.54 & 88.90 & 87.56 & 87.68 & 87.85 & 0.91 \\
LiveCodeBench v6 & 45.03 & 45.03 & 42.11 & 47.95 & 46.20 & 45.26 & 2.13 \\
AIME25           & 66.67 & 63.33 & 56.67 & 63.33 & 63.33 & 62.67 & 3.65 \\
MATH500          & 92.59 & 91.38 & 92.59 & 92.18 & 91.98 & 92.14 & 0.50 \\
\bottomrule
\end{tabular}
\end{table}

\begin{table}[ht]
\centering
\footnotesize
\caption{Full Qwen3-8B results for INT2 group=64.}
\label{tab:app_qwen3_8b_blockg64}
\setlength{\tabcolsep}{4pt}
\begin{tabular}{l c c c c c c c}
\toprule
\textbf{Task} & \textbf{Run 1} & \textbf{Run 2} & \textbf{Run 3} & \textbf{Run 4} & \textbf{Run 5} & \textbf{Mean} & \textbf{Std} \\
\midrule
GPQA             & 53.54 & 56.06 & 54.04 & 57.07 & 54.55 & 55.05 & 1.47 \\
HumanEval        & 87.56 & 87.56 & 88.29 & 87.56 & 88.41 & 87.88 & 0.44 \\
LiveCodeBench v6 & 47.37 & 44.44 & 49.12 & 45.03 & 45.61 & 46.32 & 1.91 \\
AIME25           & 63.33 & 70.00 & 66.67 & 63.33 & 70.00 & 66.67 & 3.33 \\
MATH500          & 92.38 & 92.79 & 91.58 & 91.18 & 93.19 & 92.22 & 0.83 \\
\bottomrule
\end{tabular}
\end{table}

\begin{table}[ht]
\centering
\footnotesize
\caption{Full Qwen3-8B results for INT2 group=32.}
\label{tab:app_qwen3_8b_blockg32}
\setlength{\tabcolsep}{4pt}
\begin{tabular}{l c c c c c c c}
\toprule
\textbf{Task} & \textbf{Run 1} & \textbf{Run 2} & \textbf{Run 3} & \textbf{Run 4} & \textbf{Run 5} & \textbf{Mean} & \textbf{Std} \\
\midrule
GPQA             & 52.02 & 55.56 & 59.09 & 52.53 & 56.57 & 55.15 & 2.93 \\
HumanEval        & 87.93 & 87.32 & 88.05 & 88.78 & 88.66 & 88.15 & 0.59 \\
LiveCodeBench v6 & 45.61 & 47.37 & 43.86 & 43.86 & 45.61 & 45.26 & 1.47 \\
AIME25           & 66.67 & 53.33 & 63.33 & 66.67 & 60.00 & 62.00 & 5.58 \\
MATH500          & 91.78 & 92.38 & 92.59 & 92.18 & 92.38 & 92.26 & 0.30 \\
\bottomrule
\end{tabular}
\end{table}

\begin{table}[ht]
\centering
\footnotesize
\caption{Full Qwen3-32B results for OSCAR INT2 group=128.}
\label{tab:app_qwen3_32b_g128}
\setlength{\tabcolsep}{4pt}
\begin{tabular}{l c c c c c c c}
\toprule
\textbf{Task} & \textbf{Run 1} & \textbf{Run 2} & \textbf{Run 3} & \textbf{Run 4} & \textbf{Run 5} & \textbf{Mean} & \textbf{Std} \\
\midrule
GPQA             & 59.60 & 59.09 & 57.58 & 60.10 & --    & 59.09 & 1.09 \\
HumanEval        & 90.37 & 89.76 & 89.27 & 89.76 & 89.27 & 89.68 & 0.45 \\
LiveCodeBench v6 & 54.39 & 49.71 & 47.95 & 52.63 & 53.80 & 51.70 & 2.76 \\
AIME25           & 63.33 & 66.67 & 76.67 & 76.67 & 70.00 & 70.67 & 5.96 \\
MATH500          & 92.79 & 93.19 & 92.99 & 91.58 & 93.19 & 92.75 & 0.67 \\
\bottomrule
\end{tabular}
\end{table}

\begin{table}[ht]
\centering
\footnotesize
\caption{Full Qwen3-32B results for OSCAR INT2 group=64.}
\label{tab:app_qwen3_32b_g64}
\setlength{\tabcolsep}{4pt}
\begin{tabular}{l c c c c c c c}
\toprule
\textbf{Task} & \textbf{Run 1} & \textbf{Run 2} & \textbf{Run 3} & \textbf{Run 4} & \textbf{Run 5} & \textbf{Mean} & \textbf{Std} \\
\midrule
GPQA             & 60.61 & 63.64 & 61.11 & 62.12 & 54.55 & 60.40 & 3.47 \\
HumanEval        & 89.88 & 89.76 & 89.39 & 91.22 & 90.37 & 90.12 & 0.71 \\
LiveCodeBench v6 & 53.80 & 52.63 & 53.80 & 54.39 & 53.22 & 53.57 & 0.67 \\
AIME25           & 76.67 & 70.00 & 80.00 & 66.67 & 76.67 & 74.00 & 5.48 \\
MATH500          & 93.19 & 92.18 & 93.19 & 92.38 & 92.79 & 92.75 & 0.46 \\
\bottomrule
\end{tabular}
\end{table}

\begin{table}[ht]
\centering
\footnotesize
\caption{Full Qwen3-32B results for OSCAR INT2 group=32.}
\label{tab:app_qwen3_32b_g32}
\setlength{\tabcolsep}{4pt}
\begin{tabular}{l c c c c c c c}
\toprule
\textbf{Task} & \textbf{Run 1} & \textbf{Run 2} & \textbf{Run 3} & \textbf{Run 4} & \textbf{Run 5} & \textbf{Mean} & \textbf{Std} \\
\midrule
GPQA             & 63.64 & 57.07 & 56.57 & 60.10 & 59.60 & 59.40 & 2.83 \\
HumanEval        & 92.20 & 90.85 & 89.76 & 90.73 & 90.24 & 90.76 & 0.91 \\
LiveCodeBench v6 & 58.48 & 54.39 & 54.39 & 50.29 & 55.56 & 54.62 & 2.94 \\
AIME25           & 66.67 & 70.00 & 70.00 & 66.67 & 63.33 & 67.33 & 2.79 \\
MATH500          & 93.39 & 92.99 & 92.79 & 93.59 & 93.19 & 93.19 & 0.32 \\
\bottomrule
\end{tabular}
\end{table}

\begin{table}[ht]
\centering
\footnotesize
\caption{Full GLM-4.7-FP8 results for the Naive INT2 baseline.}
\label{tab:app_glm47_naive_int2}
\setlength{\tabcolsep}{5pt}
\begin{tabular}{l c c c c c}
\toprule
\textbf{Task} & \textbf{Run 1} & \textbf{Run 2} & \textbf{Run 3} & \textbf{Mean} & \textbf{Std} \\
\midrule
GPQA             & 58.08 & 51.01 & 54.55 & 54.55 & 3.54 \\
HumanEval        & 85.61 & 88.15 & 85.73 & 86.50 & 1.43 \\
LiveCodeBench v6 & 33.33 & 44.44 & 33.33 & 37.03 & 6.41 \\
AIME25           & 40.00 & 36.67 & 36.67 & 37.78 & 1.92 \\
MATH500          & 87.40 & 85.80 & 86.60 & 86.60 & 0.80 \\
\bottomrule
\end{tabular}
\end{table}

\begin{table}[ht]
\centering
\footnotesize
\caption{Full GLM-4.7-FP8 results for the QuaRot/Hadamard rotation baseline.}
\label{tab:app_glm47_quarot_hadamard}
\setlength{\tabcolsep}{5pt}
\begin{tabular}{l c c c c c}
\toprule
\textbf{Task} & \textbf{Run 1} & \textbf{Run 2} & \textbf{Run 3} & \textbf{Mean} & \textbf{Std} \\
\midrule
GPQA             & 70.20 & 63.64 & 70.20 & 68.01 & 3.79 \\
HumanEval        & 91.71 & 89.76 & 88.15 & 89.87 & 1.78 \\
LiveCodeBench v6 & 47.37 & 53.80 & 44.44 & 48.54 & 4.79 \\
AIME25           & 80.00 & 80.00 & 76.67 & 78.89 & 1.92 \\
MATH500          & 90.20 & 90.20 & 90.80 & 90.40 & 0.35 \\
\bottomrule
\end{tabular}
\end{table}

\begin{table}[ht]
\centering
\footnotesize
\caption{Full GLM-4.7-FP8 results for the OSCAR rotation group size=128 configuration.}
\label{tab:app_glm47_OSCAR_full}
\setlength{\tabcolsep}{5pt}
\begin{tabular}{l c c c c c}
\toprule
\textbf{Task} & \textbf{Run 1} & \textbf{Run 2} & \textbf{Run 3} & \textbf{Mean} & \textbf{Std} \\
\midrule
GPQA             & 73.23 & 74.24 & 73.23 & 73.57 & 0.58 \\
HumanEval        & 90.85 & 91.34 & 90.98 & 91.06 & 0.25 \\
LiveCodeBench v6 & 53.80 & 52.05 & 52.05 & 52.63 & 1.01 \\
AIME25           & 76.67 & 80.00 & 80.00 & 78.89 & 1.92 \\
MATH500          & 93.79 & 94.99 & 95.19 & 94.66 & 0.76 \\
\bottomrule
\end{tabular}
\end{table}

\subsection{Ablation Detail Runs}
\label{app:ablation_detail_runs}

Table~\ref{tab:app_rotation_decomposition_full_runs} reports the individual runs behind the rotation-decomposition study in Table~\ref{tab:ablation_composed}.

\begin{table}[ht]
\centering
\tiny
\caption{Full individual runs for the rotation-decomposition study on Qwen3-8B.}
\label{tab:app_rotation_decomposition_full_runs}
\setlength{\tabcolsep}{2pt}
\begin{tabular}{l l c c c c c}
\toprule
\textbf{Configuration} & \textbf{Task} & \textbf{Run 1} & \textbf{Run 2} & \textbf{Run 3} & \textbf{Mean} & \textbf{Std} \\
\midrule
\multirow{5}{*}{Full OSCAR} 
& GPQA             & 56.06 & 54.55 & 57.07 & 55.89 & 1.27 \\
& HumanEval        & 87.93 & 87.80 & 88.17 & 87.97 & 0.19 \\
& LiveCodeBench v6 & 43.86 & 47.37 & 46.78 & 46.00 & 1.88 \\
& AIME25           & 66.67 & 70.00 & 66.67 & 67.78 & 1.92 \\
& MATH500          & 92.18 & 92.38 & 92.59 & 92.38 & 0.20 \\
\midrule
\multirow{5}{*}{Tensor-recon. target}
& GPQA             & 37.37 & 39.90 & 38.89 & 38.72 & 1.27 \\
& HumanEval        & 35.49 & 36.22 & 34.88 & 35.53 & 0.67 \\
& LiveCodeBench v6 & 4.09  & 4.09  & 2.92  & 3.70  & 0.68 \\
& AIME25           & 13.33 & 16.67 & 10.00 & 13.33 & 3.33 \\
& MATH500          & 64.73 & 63.73 & 64.53 & 64.33 & 0.53 \\
\midrule
\multirow{5}{*}{w/o $P_{\mathrm{br}}$}
& GPQA             & 53.54 & 54.04 & 51.01 & 52.86 & 1.62 \\
& HumanEval        & 87.20 & 85.73 & 87.20 & 86.71 & 0.84 \\
& LiveCodeBench v6 & 45.03 & 45.03 & 45.61 & 45.22 & 0.34 \\
& AIME25           & 73.33 & 60.00 & 56.67 & 63.33 & 8.82 \\
& MATH500          & 92.79 & 90.18 & 92.59 & 91.85 & 1.45 \\
\midrule
\multirow{5}{*}{w/o $H_{\mathrm{Had}}$}
& GPQA             & 50.00 & 53.54 & 50.51 & 51.35 & 1.91 \\
& HumanEval        & 84.15 & 83.41 & 83.78 & 83.78 & 0.37 \\
& LiveCodeBench v6 & 19.30 & 17.54 & 21.64 & 19.49 & 2.05 \\
& AIME25           & 23.33 & 20.00 & 20.00 & 21.11 & 1.92 \\
& MATH500          & 83.77 & 82.97 & 82.16 & 82.97 & 0.80 \\
\midrule
\multirow{5}{*}{w/o $U$}
& GPQA             & 41.92 & 43.43 & 36.87 & 40.74 & 3.44 \\
& HumanEval        & 37.68 & 39.88 & 38.05 & 38.54 & 1.18 \\
& LiveCodeBench v6 & 4.09  & 4.09  & 5.26  & 4.48  & 0.68 \\
& AIME25           & 13.33 & 16.67 & 16.67 & 15.56 & 1.92 \\
& MATH500          & 65.33 & 63.33 & 65.73 & 64.80 & 1.29 \\
\midrule
\multirow{5}{*}{No rotation}
& GPQA             & 18.69 & 21.72 & 15.15 & 18.52 & 3.29 \\
& HumanEval        & 0.00  & 0.00  & 0.00  & 0.00  & 0.00 \\
& LiveCodeBench v6 & 0.00  & 1.17  & 0.00  & 0.39  & 0.68 \\
& AIME25           & 3.33  & 0.00  & 3.33  & 2.22  & 1.92 \\
& MATH500          & 0.00  & 0.00  & 0.00  & 0.00  & 0.00 \\
\bottomrule
\end{tabular}
\end{table}

Table~\ref{tab:app_sink_recent_full_runs} reports the individual runs behind the sink/recent-window study in Table~\ref{tab:ablation_sink_recent}.

\begin{table}[ht]
\centering
\tiny
\caption{Full individual runs for the sink/recent-window study on Qwen3-4B-Thinking-2507.}
\label{tab:app_sink_recent_full_runs}
\setlength{\tabcolsep}{2pt}
\begin{tabular}{l l c c c c c c c}
\toprule
\textbf{$(S,R)$} & \textbf{Task} & \textbf{Run 1} & \textbf{Run 2} & \textbf{Run 3} & \textbf{Run 4} & \textbf{Run 5} & \textbf{Mean} & \textbf{Std} \\
\midrule
\multirow{5}{*}{$(0,0)$}
& GPQA             & 0.00 & 0.00 & 0.00 & -- & -- & 0.00 & 0.00 \\
& HumanEval        & 0.00 & 0.00 & 0.00 & -- & -- & 0.00 & 0.00 \\
& LiveCodeBench v6 & 0.00 & 0.00 & 0.00 & -- & -- & 0.00 & 0.00 \\
& AIME25           & 0.00 & 0.00 & 0.00 & -- & -- & 0.00 & 0.00 \\
& MATH500          & 0.00 & 0.00 & 0.00 & -- & -- & 0.00 & 0.00 \\
\midrule
\multirow{5}{*}{$(32,128)$}
& GPQA             & 60.10 & 56.57 & 56.06 & -- & -- & 57.58 & 2.20 \\
& HumanEval        & 91.95 & 92.07 & 91.71 & -- & -- & 91.91 & 0.19 \\
& LiveCodeBench v6 & 40.35 & 43.27 & 38.60 & -- & -- & 40.74 & 2.36 \\
& AIME25           & 56.67 & 60.00 & 50.00 & -- & -- & 55.56 & 5.09 \\
& MATH500          & 93.39 & 92.99 & 91.58 & -- & -- & 92.65 & 0.95 \\
\midrule
\multirow{5}{*}{$(64,256)$}
& GPQA             & 66.16 & 63.64 & 64.14 & 64.65 & 66.16 & 64.95 & 1.16 \\
& HumanEval        & 93.17 & 91.59 & 92.44 & 90.85 & 93.17 & 92.24 & 1.02 \\
& LiveCodeBench v6 & 45.03 & 43.27 & 45.61 & 48.54 & 44.44 & 45.38 & 1.97 \\
& AIME25           & 63.33 & 63.33 & 63.33 & 60.00 & 70.00 & 64.00 & 3.65 \\
& MATH500          & 92.59 & 92.79 & 92.99 & 93.19 & 92.18 & 92.75 & 0.39 \\
\midrule
\multirow{5}{*}{$(128,512)$}
& GPQA             & 65.15 & 66.16 & 64.65 & -- & -- & 65.32 & 0.77 \\
& HumanEval        & 93.29 & 92.80 & 93.41 & -- & -- & 93.17 & 0.32 \\
& LiveCodeBench v6 & 45.03 & 45.61 & 45.03 & -- & -- & 45.22 & 0.33 \\
& AIME25           & 70.00 & 66.67 & 66.67 & -- & -- & 67.78 & 1.92 \\
& MATH500          & 93.39 & 93.19 & 93.39 & -- & -- & 93.32 & 0.12 \\
\midrule
\multirow{5}{*}{$(256,1024)$}
& GPQA             & 66.16 & 65.15 & 63.64 & -- & -- & 64.98 & 1.27 \\
& HumanEval        & 92.20 & 93.66 & 93.54 & -- & -- & 93.13 & 0.81 \\
& LiveCodeBench v6 & 45.03 & 48.54 & 45.03 & -- & -- & 46.20 & 2.03 \\
& AIME25           & 66.67 & 70.00 & 66.67 & -- & -- & 67.78 & 1.92 \\
& MATH500          & 93.19 & 93.39 & 93.39 & -- & -- & 93.32 & 0.12 \\
\bottomrule
\end{tabular}
\end{table}

Table~\ref{tab:app_clip_threshold_full_sweep} reports the per-task results behind the clip-threshold grid in Table~\ref{tab:ablation_clip}.

\begin{table}[ht]
\centering
\tiny
\caption{Full per-task results for the clip-threshold sweep on Qwen3-4B-Thinking-2507. Each row is one $(c_K,c_V)$ setting with one run per task.}
\label{tab:app_clip_threshold_full_sweep}
\setlength{\tabcolsep}{2pt}
\begin{tabular}{c c c c c c c c}
\toprule
$c_K$ & $c_V$ & \textbf{GPQA} & \textbf{HumanE} & \textbf{LCB v6} & \textbf{AIME25} & \textbf{MATH500} & \textbf{Mean} \\
\midrule
0.88 & 0.88 & 59.09 & 91.83 & 39.18 & 53.33 & 92.59 & 67.20 \\
0.88 & 0.92 & 62.12 & 91.22 & 39.77 & 56.67 & 91.58 & 68.27 \\
0.88 & 0.96 & 57.07 & 91.10 & 42.69 & 56.67 & 92.99 & 68.10 \\
0.88 & 0.98 & 62.12 & 91.95 & 37.43 & 53.33 & 92.99 & 67.56 \\
0.88 & 1.00 & 57.58 & 91.59 & 46.78 & 53.33 & 91.98 & 68.25 \\
\midrule
0.92 & 0.88 & 61.62 & 93.17 & 41.52 & 56.67 & 92.79 & 69.15 \\
0.92 & 0.92 & 62.12 & 91.22 & 43.86 & 60.00 & 92.79 & 70.00 \\
0.92 & 0.96 & 64.14 & 92.32 & 46.20 & 56.67 & 92.99 & 70.46 \\
0.92 & 0.98 & 65.15 & 93.90 & 48.54 & 53.33 & 92.79 & 70.74 \\
0.92 & 1.00 & 64.65 & 93.90 & 46.78 & 53.33 & 92.99 & 70.33 \\
\midrule
0.96 & 0.88 & 61.62 & 92.80 & 39.77 & 56.67 & 92.99 & 68.77 \\
\textbf{0.96} & \textbf{0.92} & \textbf{64.14} & \textbf{93.54} & \textbf{46.20} & \textbf{56.67} & \textbf{92.38} & \textbf{70.59} \\
0.96 & 0.96 & 63.13 & 91.59 & 43.27 & 63.33 & 92.18 & 70.70 \\
0.96 & 0.98 & 62.12 & 92.68 & 40.35 & 53.33 & 92.38 & 68.17 \\
0.96 & 1.00 & 62.12 & 91.59 & 42.11 & 56.67 & 93.19 & 69.14 \\
\midrule
0.98 & 0.88 & 55.56 & 91.71 & 37.43 & 50.00 & 91.38 & 65.22 \\
0.98 & 0.92 & 55.05 & 91.59 & 36.84 & 53.33 & 91.98 & 65.76 \\
0.98 & 0.96 & 48.48 & 90.37 & 36.84 & 53.33 & 91.58 & 64.12 \\
0.98 & 0.98 & 55.56 & 91.71 & 37.43 & 46.67 & 92.59 & 64.79 \\
0.98 & 1.00 & 56.57 & 91.34 & 39.77 & 56.67 & 92.18 & 67.31 \\
\midrule
1.00 & 0.88 & 48.48 & 89.88 & 30.99 & 40.00 & 92.38 & 60.35 \\
1.00 & 0.92 & 44.95 & 90.98 & 32.16 & 36.67 & 91.18 & 59.19 \\
1.00 & 0.96 & 45.45 & 90.37 & 30.41 & 33.33 & 91.18 & 58.15 \\
1.00 & 0.98 & 43.43 & 90.73 & 32.16 & 36.67 & 90.18 & 58.63 \\
1.00 & 1.00 & 47.47 & 90.73 & 30.41 & 23.33 & 88.18 & 56.02 \\
\bottomrule
\end{tabular}
\end{table}

Table~\ref{tab:app_calibration_regime_full_runs} reports the individual runs behind the calibration-data regime study in Table~\ref{tab:ablation_calibration}.

\begin{table}[ht]
\centering
\tiny
\caption{Full individual runs for the calibration-data regime study on Qwen3-4B-Thinking-2507.}
\label{tab:app_calibration_regime_full_runs}
\setlength{\tabcolsep}{2pt}
\begin{tabular}{l l c c c c c}
\toprule
\textbf{Calibration setting} & \textbf{Task} & \textbf{Run 1} & \textbf{Run 2} & \textbf{Run 3} & \textbf{Mean} & \textbf{Std} \\
\midrule
\multirow{5}{*}{MMLU 2k}
& GPQA             & 61.62 & 58.08 & 62.12 & 60.61 & 2.20 \\
& HumanEval        & 92.68 & 92.44 & 92.44 & 92.52 & 0.14 \\
& LiveCodeBench v6 & 37.43 & 39.18 & 42.11 & 39.57 & 2.36 \\
& AIME25           & 50.00 & 56.67 & 60.00 & 55.56 & 5.09 \\
& MATH500          & 93.39 & 93.39 & 92.59 & 93.12 & 0.46 \\
\midrule
\multirow{5}{*}{MMLU 8k}
& GPQA             & 62.63 & 58.59 & 60.10 & 60.44 & 2.04 \\
& HumanEval        & 91.22 & 92.68 & 92.32 & 92.07 & 0.76 \\
& LiveCodeBench v6 & 40.94 & 41.52 & 39.77 & 40.74 & 0.89 \\
& AIME25           & 66.67 & 56.67 & 56.67 & 60.00 & 5.77 \\
& MATH500          & 92.18 & 93.19 & 92.18 & 92.52 & 0.58 \\
\midrule
\multirow{5}{*}{MMLU 16k}
& GPQA             & 63.64 & 64.65 & 64.14 & 64.14 & 0.51 \\
& HumanEval        & 93.29 & 92.32 & 93.17 & 92.93 & 0.53 \\
& LiveCodeBench v6 & 46.78 & 42.69 & 42.11 & 43.86 & 2.55 \\
& AIME25           & 73.33 & 53.33 & 56.67 & 61.11 & 10.72 \\
& MATH500          & 92.18 & 92.99 & 91.98 & 92.38 & 0.53 \\
\midrule
\multirow{5}{*}{MMLU 32k}
& GPQA             & 62.63 & 61.62 & 59.60 & 61.28 & 1.54 \\
& HumanEval        & 92.07 & 92.56 & 91.95 & 92.20 & 0.32 \\
& LiveCodeBench v6 & 48.54 & 45.61 & 43.86 & 46.00 & 2.36 \\
& AIME25           & 53.33 & 63.33 & 56.67 & 57.78 & 5.09 \\
& MATH500          & 92.59 & 91.98 & 92.18 & 92.25 & 0.31 \\
\midrule
\multirow{5}{*}{WikiText 8k}
& GPQA             & 61.11 & 60.61 & 61.62 & 61.11 & 0.51 \\
& HumanEval        & 93.41 & 92.20 & 92.56 & 92.72 & 0.63 \\
& LiveCodeBench v6 & 39.77 & 43.86 & 42.69 & 42.11 & 2.11 \\
& AIME25           & 56.67 & 56.67 & 63.33 & 58.89 & 3.85 \\
& MATH500          & 93.19 & 92.59 & 92.18 & 92.65 & 0.50 \\
\midrule
\multirow{5}{*}{GPQA-Diamond 8k}
& GPQA             & 63.13 & 64.14 & 61.62 & 62.96 & 1.27 \\
& HumanEval        & 92.56 & 92.56 & 92.56 & 92.56 & 0.00 \\
& LiveCodeBench v6 & 46.20 & 44.44 & 43.27 & 44.64 & 1.47 \\
& AIME25           & 63.33 & 63.33 & 60.00 & 62.22 & 1.92 \\
& MATH500          & 92.38 & 92.59 & 92.99 & 92.65 & 0.31 \\
\bottomrule
\end{tabular}
\end{table}

\end{document}